\newtheorem{Theorem}{Theorem}
\newtheorem{Proposition}{Proposition}
\newtheorem{Lemma}{Lemma}
\newtheorem{Definition}{Definition}
\newtheorem{Assumption}{Assumption}
\newtheorem{Example-set}{Example}
\newtheorem{Remark}{Remark}
\newcommand{\CI}{\mathrel{\perp\mspace{-10mu}\perp}}
\newcommand{\nCI}{\centernot{\CI}}
\newenvironment{nospaceflalign*}
 {\setlength{\abovedisplayskip}{1pt}\setlength{\belowdisplayskip}{1pt}%
  \csname flalign*\endcsname}
 {\csname endflalign*\endcsname\ignorespacesafterend}
 \newenvironment{nospaceflalign}
 {\setlength{\abovedisplayskip}{1pt}\setlength{\belowdisplayskip}{1pt}%
  \csname flalign\endcsname}
 {\csname endflalign\endcsname\ignorespacesafterend}
\newdimen\arrowsize
\icmltitlerunning{Automating the Selection of Proxy Variables of Unmeasured Confounders}
\begin{document}

\twocolumn[
\icmltitle{Automating the Selection of Proxy Variables of Unmeasured Confounders}




\begin{icmlauthorlist}
\icmlauthor{Feng Xie}{aaa}
\icmlauthor{Zhengming Chen}{ccc,ddd}
\icmlauthor{Shanshan Luo}{aaa}
\icmlauthor{Wang Miao}{bbb}
\icmlauthor{Ruichu Cai}{ccc}
\icmlauthor{Zhi Geng}{aaa}
\end{icmlauthorlist}

\icmlaffiliation{aaa}{Department of Applied Statistics, Beijing Technology and Business University, Beijing, China}
\icmlaffiliation{bbb}{Department of Probability and Statistics, Peking University, Beijing, China}
\icmlaffiliation{ccc}{School of Computer Science, Guangdong University of Technology, Guangzhou 510006, China}
\icmlaffiliation{ddd}{Machine Learning Department, Mohamed bin Zayed University of Artificial Intelligence, Abu Dhabi, UAE}


\icmlcorrespondingauthor{Shanshan Luo}{shanshanluo@btbu.edu.cn}

\icmlkeywords{Machine Learning, ICML}

\vskip 0.3in
]



\printAffiliationsAndNotice{}  
%
\begin{abstract}
Recently, interest has grown in the use of proxy variables of unobserved confounding for inferring the causal effect in the presence of unmeasured confounders from observational data. 
One difficulty inhibiting the practical use is finding valid proxy variables of unobserved confounding to a target causal effect of interest. These proxy variables are typically justified by background knowledge.
In this paper, we investigate the estimation of causal effects among multiple treatments and a single outcome, all of which are affected by unmeasured confounders, within a linear causal model, without prior knowledge of the validity of proxy variables.
To be more specific, we first extend the existing proxy variable estimator, originally addressing a single unmeasured confounder, to accommodate scenarios where multiple unmeasured confounders exist between the treatments and the outcome.
Subsequently, we present two different sets of precise identifiability conditions for selecting valid proxy variables of unmeasured confounders, based on the second-order statistics and higher-order statistics of the data, respectively.
Moreover, we propose two data-driven methods for the selection of proxy variables and for the unbiased estimation of causal effects. Theoretical analysis demonstrates the correctness of our proposed algorithms. 
Experimental results on both synthetic and real-world data show the effectiveness of the proposed approach.
\end{abstract}

\section{Introduction}
Estimating the causal effect from observational data is a fundamental problem in various fields of scientific research, including social sciences~\citep{pearl2009causality,spirtes2000causation}, economics~\citep{imbens2015causal}, public health~\citep{hernan2006estimating}, and
machine learning~\citep{spirtes2010introduction,peters2017elements,fernandez2022causal}. Within the framework of causal graphical models, covariate adjustment, such as the use of the back-door criterion, emerges as a powerful and primary tool for estimating causal effects from observational data \citep{pearl2009causality,van2019separators}.
However, although this method has been used in a range of fields, it should be noted that biased causal effects can arise when unmeasured confounders are present and the covariate adjustment set does not exist in the system~\citep{pearl2009causality,rotnitzky2020efficient,cheng2022toward}.

The method of instrumental variables is a general approach used to estimate the causal effect of interest in the presence of unobserved confounders \citep{pearl2009causality,wright1928tariff,goldberger1972structural,bowden1990instrumental}. This method has been extensively studied in practical sciences, including economics~\citep{imbens2015causal,Imbens2014IV}, sociology \citep{pearl2009causality,spirtes2000causation} and epidemiology~\citep{hernan2006instruments,baiocchi2014instrumental}.
In practice, it can be quite challenging to identify a valid instrumental variable \citep{pearl1995testability,kuroki2005instrumental,kang2016instrumental,silva2017learning,gunsilius2021nontestability,xie2022testability,cheng2023discovering}.
Sometimes, in the system of interest, an instrumental variable may not even exist.

Recently, the proximal causal learning method, also referred to as negative control, has emerged as an alternative strategy to address unmeasured confounders and estimate the unbiased causal effects of interest 
\citep{kuroki2014measurement,miao2016identifiability,de2017proxy,miao2018proxy,miao2018confounding,wang2019jasa,shi2020multiply,tchetgen2020introduction,singh2020kernel,wang2021proxy,mastouri2021proximal,xu2021deep,shpitser2023proximal}.
This method allows us to infer the causal effect of interest by observing suitable proxy variables for unmeasured confounding, with these proxy variables often being termed Negative Controls (NCs). NCs are readily applicable in various domains \citep{lipsitch2010negative,sofer2016negative}.
For instance, one study of the causal effect of the flu shot ($X_k$) on influenza-related hospitalization ($Y$), where there exists unmeasured health-seeking behavior ($U$) \citep{shi2020selective}.
The proximal causal learning method operates on the following principles: (i) find a variable e.g., a person's annual wellness visit history ($Z$), that is influenced by confounder $U$ and has no direct effect on the outcome $Y$, referred to as the Negative Control Exposure (NCE); (ii) find another variable, e.g., a person's injury/trauma hospitalization ($W$), that is influenced by confounder $U$ and is not causally affected by the treatment $X_k$, referred to as the Negative Control Outcome (NCO); and (iii) use these two proxy variables to estimate the causal effect of flu shot on influenza-related hospitalization. Figure \ref{Fig-proximal-example} illustrates the corresponding causal graph that satisfies the above conditions respectively, with further details in Section \ref{Sub-sec-confounderproxy}.
However, although these methods have been used in a range of fields, the valid proxy variables are typically justified by background knowledge in those works.
Thus, it is vital to develop statistical methods for selecting proxy variables of unmeasured confounding from observational data.

\begin{figure}[htp]
	\begin{center}
	\begin{tikzpicture}[scale=1.0, line width=0.5pt, inner sep=0.2mm, shorten >=.1pt, shorten <=.1pt]
		\draw (3, 1.0) node(U) [circle, fill=gray!60, minimum size=0.5cm,draw] {{\footnotesize\,$U$\,}};
		\draw (1.5, 1.0) node(Z) [] {{\footnotesize\,$Z$\,}};
		\draw (4.5, 1.0) node(W) [] {{\footnotesize\,$W$\,}};
		\draw (2.25, 0) node(X) [] {{\footnotesize\,{$X_k$}\,}};
		\draw (3.75, 0) node(Y) [] {{\footnotesize\,{$Y$}\,}};
            %
		\draw (2.25, -0.3) node(Xkk) [] {{\footnotesize\,flu shot\,}};
            \draw (3.75, -0.3) node(YY) [] {{\footnotesize\,influenza\,}};
            \draw (3.75, -0.6) node(YYY) [] {{\footnotesize\, hospitalization\,}};
            \draw (0.8, 0.7) node(ZZZ) [] {{\footnotesize\,annual wellness\,}};
            \draw (0.8, 0.4) node(ZZZZ) [] {{\footnotesize\,visit history\,}};
            \draw (5.2, 0.7) node(WWW) [] {{\footnotesize\,injury/trauma\,}};
            \draw (5.2, 0.4) node(WWWW) [] {{\footnotesize\,hospitalization\,}};
		%
		\draw (0.8, 1.0) node(ZZ) [] {{\footnotesize\,(NCE)\,}};
		\draw (5.2, 1.0) node(ZZ) [] {{\footnotesize\,(NCO)\,}};
		\draw[-arcsq] (U) -- (Z) node[pos=0.5,sloped,above] {}; 
		\draw[-arcsq] (U) -- (W)node[pos=0.5,sloped,above] {}; 
		\draw[-arcsq] (U) -- (X) node[pos=0.5,sloped,above] {};
		\draw[-arcsq] (U) -- (Y) node[pos=0.5,sloped,above] {};
		\draw[-arcsq] (X) -- (Y) node[pos=0.5,sloped,above] {};
	\end{tikzpicture}\\
        \vspace{-4mm}
	\caption{A typical confounder proxy causal diagram.  $Z$ and $W$ are NCE and NCO of unmeasured confounder $U$ for the causal relationship $X_k \to Y$.}
	\vspace{-5mm}
	\label{Fig-proximal-example}
	\end{center}
\end{figure}
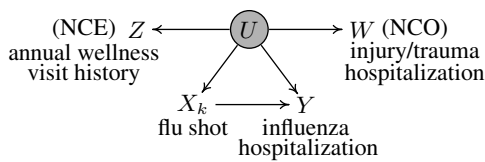

Recently, \citet{kummerfeld2022data} established sufficient conditions for selecting valid NCE and NCO of one unmeasured confounder in a linear causal model,  making valuable contributions to the field. 
However, their work is limited to single-treatment settings, where potential proxy variables cannot directly affect both the treatment and the outcome of interest. In reality, multiple-treatment scenarios exist, where unmeasured confounders influence both treatments and the outcome.
For instance, in gene expression studies, there exist multiple gene expressions may affect the trait of a human of interest (e.g., body weight) \citep{miao2022identifying}.  Besides, their work only considers a particular class of proxy variables of an unmeasured confounder, where those proxy variables are independent of both the treatment and outcome conditional on the unmeasured confounder while our work does not restrict this condition.
In this paper, we tackle the challenge of proxy variable identification in a more complex scenario, where the proxy variables can have effects on the outcome, and multiple unmeasured confounders may exist in the system. 
Specifically, we make the following contributions:
\begin{itemize}[leftmargin=15pt,itemsep=0pt,topsep=0pt,parsep=0pt]
\item [1.] We extend the existing proxy variable estimator that deals with a single unmeasured confounder, as previously discussed by \citet{kuroki2014measurement}, to accommodate scenarios where multiple unmeasured confounders exist between treatments and the outcome.
\item [2.] We present two different sets of precise identifiability conditions for selecting proxy variables of unmeasured confounders, based on the second-order statistics \footnote{Second-order statistics means the second-order moments (like covariances or correlations).} and higher-order statistics \footnote{Higher-order statistics means beyond the second-order moments in statistics, e.g., skewness, kurtosis, etc. of the data.}, respectively.
\item [3.] We propose two efficient algorithms for selecting proxy variables of unmeasured confounders. The first algorithm leverages the rank-deficiency properties of covariance matrices, while the second algorithm takes advantage of the non-Gaussianity of the data. Both algorithms consistently estimate the desired causal effect and come with theoretical proofs that establish their correctness.
\item [4.] We demonstrate the efficacy of the proposed algorithms on both synthetic and real-world data. 
\end{itemize}

\vspace{-2mm}
\section{Preliminaries}
\vspace{-1mm}
\subsection{Notations}
Our work is in the framework of causal graphical models \citep{pearl2009causality,spirtes2000causation}. In a directed acyclic graph (DAG) $\mathcal{G}$, a \textbf{path} is a sequence of nodes $\{X_1, \dots, X_r\}$ such that $X_i$ and $X_{i+1}$ are adjacent in $\mathcal{G}$, where $1 \le i<r$. A \textbf{collider} on a path $\{X_1,...X_p\}$ is a node $X_i$ , $1< i < p$, such that $X_{i-1}$ and $X_{i+1}$ are parents of $X_{i}$. A \textbf{trek} between $X_i$ and $X_j$ is a path that does not contain any colliders in $\mathcal{G}$. A \textbf{source} in a trek is a unique node such that no arrows point to it. 
We use the ordered pair of directed paths $(P_1,P_2)$ denotes a trek in $\mathcal{G}$ from $X_i$ to $X_j$, where $P_1$ has sink $X_i$, $P_2$ has sink $X_j$, and both $P_1$ and $P_2$ have the same source.\footnote{A sink of a graph $\mathcal{G}$ is any node that is not a parent of any other node.} Other commonly used concepts in graphical models, such as d-separation, can be found in standard sources \citep{pearl1988probabilistic,pearl2009causality,spirtes2000causation}.

We denote vectors and matrices by boldface letters. The $(i,j)$ entry of matrix $\mathbf{M}$ is denoted by $\mathbf{M}_{i,j}$. The notation $|\mathbf{A}|$ denotes the cardinality of set $\mathbf{A}$. The 
notation $\boldsymbol{\Sigma}_{\mathbf{A},\mathbf{B}}$ denotes the cross-covariance matrix of set $\mathbf{A}$ (rows) and $\mathbf{B}$ (columns). The notation $\mathrm{rk}(\mathbf{C})$ denotes the rank of matrix $\mathbf{C}$, e.g., $\mathrm{rk}(\boldsymbol{\Sigma}_{\mathbf{A},\mathbf{B}})$ denotes the rank of cross-covariance matrix of set $\mathbf{A}$ and $\mathbf{B}$. 
The determinant of a matrix $\mathbf{A}$ is denoted $\mathrm{det}(\mathbf{A})$. 
We use the notation $\mathbf{A} \CI \mathbf{B} | \mathbf{C}$ for 
``$\mathbf{A}$ is independent of $\mathbf{B}$ given $\mathbf{C}$”, and $\mathbf{A} \nCI \mathbf{B} | \mathbf{C}$ for the negation of the same sentence \citep{dawid1979conditional}.

\vspace{-2mm}
\subsection{Proximal Causal Learning}\label{Sub-sec-confounderproxy}
The proximal causal learning approach offers a new strategy for inferring the causal effect of interest in the presence of unmeasured confounders \citep{kuroki2014measurement,de2017proxy,miao2018proxy,wang2019jasa,shi2020multiply,tchetgen2020introduction}. Specifically, suppose that $X_k$ is the treatment, $Y$ is the outcome, and $\mathbf{U}$ represents the set of unmeasured confounders between $X_k$ and $Y$. 
The theory around the proximal causal learning approach says that the target causal effect of $X_k$ on $Y$ can be identified when two sets of proxy variables, $\mathbf{Z}$ and $\mathbf{W}$, are available for the unmeasured confounder $\mathbf{U}$. In such cases, the proxy set $\mathbf{Z}$, referred to as the Negative Control Exposure (NCE), does not causally affect the primary outcome $Y$, and another proper proxy set $\mathbf{W}$, called the Negative Control Outcome (NCO), is not causally affected by the treatment $X_k$.
The graphical condition for NCE and NCO relative to a target causal effect of $X_k$ on $Y$ is described in Definition \ref{defi-proximal-criteria}, and an illustrative example is provided accordingly.
\begin{Definition}[\textbf{NCE and NCO} \citep{miao2018proxy,shi2020selective}]\label{defi-proximal-criteria}
Given a target causal effect of $X_k$  on $ Y$ in the case where $\mathbf{U}$ are the set of unmeasured confounding between $X_k$ and $Y$, sets $\mathbf{Z}$ and $\mathbf{W}$ are the valid NCE and NCO respectively if the following conditions hold:
\begin{itemize}[leftmargin=15pt,itemsep=0pt,topsep=0pt,parsep=0pt]
    \item [1.] $\mathbf{Z}$ is independent of $Y$ conditional on $(\mathbf{U},X_k)$, i.e., $\mathbf{Z} \CI {Y} | (\mathbf{U},X_k)$, and
    \item [2.] $\mathbf{W}$ is independent of $(X_k,\mathbf{Z})$ conditional on $\mathbf{U}$, i.e., $\mathbf{W} \CI (X_k, \mathbf{Z}) | \mathbf{U}$.
\end{itemize}
\end{Definition}
\vspace{-3mm}
\begin{figure}[htp]
	\begin{center}
		\begin{tikzpicture}[scale=1.0, line width=0.5pt, inner sep=0.2mm, shorten >=.1pt, shorten <=.1pt]
			%
			\draw (3.0,1.6) node(cross) [] {{\footnotesize\,\color{red}{\XSolidBrush}\,}};
			\draw[-,dashed, line width =1.0pt] (Z) edge[bend right=-25] (W);
			\draw (2.0, -0.3) node(cross) [] {{\footnotesize\,\color{red}{\XSolidBrush}\,}};
			\draw[-, dashed, line width =1.0pt] (Z) .. controls (1.8, -0.5) .. (Y);
			\draw (4.0, -0.3) node(cross) [] {{\footnotesize\,\color{red}{\XSolidBrush}\,}};
			\draw[-, dashed, line width =1.0pt] (W) .. controls (4.2, -0.5) .. (X);
			%
			\draw (3, 1.0) node(U) [circle, fill=gray!60, minimum size=0.5cm,draw] {{\footnotesize\,$\mathbf{U}$\,}};
			\draw (1.5, 1.0) node(Z) [] {{\footnotesize\,$\mathbf{Z}$\,}};
			\draw (4.5, 1.0) node(W) [] {{\footnotesize\,$\mathbf{W}$\,}};
			\draw (2.25, 0) node(X) [] {{\footnotesize\,{$X_k$}\,}};
			\draw (3.75, 0) node(Y) [] {{\footnotesize\,{$Y$}\,}};
			%
			\draw (0.8, 1.0) node(ZZ) [] {{\footnotesize\,(NCE)\,}};
			\draw (5.2, 1.0) node(ZZ) [] {{\footnotesize\,(NCO)\,}};
			\draw[-arcsq] (U) -- (Z) node[pos=0.5,sloped,above] {}; 
			\draw[-arcsq] (U) -- (W)node[pos=0.5,sloped,above] {}; 
			\draw[-arcsq] (U) -- (X) node[pos=0.5,sloped,above] {};
			\draw[-arcsq] (U) -- (Y) node[pos=0.5,sloped,above] {};
			\draw[-arcsq] (X) -- (Y) node[pos=0.5,sloped,above] {};
			\draw[-] (Z) -- (X) node[pos=0.5,sloped,above] {};
			\draw[-arcsq] (W) -- (Y) node[pos=0.5,sloped,above] {};
		\end{tikzpicture}\\
            \vspace{-3mm}
		\caption{Diagram of one possible violation of NCE and NCO assumptions. Dashed lines represent active paths.  The symbol "{\color{red}{\XSolidBrush}}" indicates that the current active paths should not exist here.}
		\vspace{-5mm}
		\label{Fig-proximal-example-violation}
	\end{center}
\end{figure}
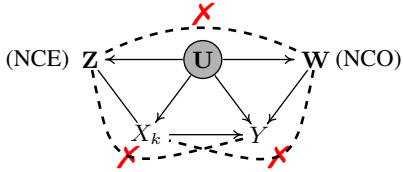

For the rest of the paper, we will call the above two conditions the graphical criteria for proxy variables validity, or simply \textbf{\emph{proximal criteria}}.
Figure \ref{Fig-proximal-example-violation} is an illustration of the NCE and NCO conditions and one potential violation of the NCE and NCO conditions. Notice that the validity of both proxy variables NCE and NCO are mutually dependent on each other. Therefore, when we say that proxy variables are valid for a causal relationship, we mean that both NCE and NCO are valid simultaneously.
\begin{Definition}[\textbf{Connected (Disconnected) NCE and NCO}]
    Assume $\mathbf{Z}$ and $\mathbf{W}$ are NCE and NCO of unmeasured confounders $\mathbf{U}$ for the causal relationship $X_k \to Y$. We refer to the set $\mathbf{Z}$ as { Connected (Disconnected) NCE if $\mathbf{Z} \nCI X_k | \mathbf{U}$ ($\mathbf{Z} \CI X_k | \mathbf{U}$).} Similarly, we refer to the set $\mathbf{W}$ as Connected (Disconnected) NCO if $\mathbf{W} \nCI Y | \mathbf{U}$ ($\mathbf{W} \CI Y | \mathbf{U}$).
\end{Definition}
\begin{Definition}[\textbf{Quadruple-disconnected NC}]
	Assume $\mathbf{Z}$ and $\mathbf{W}$ are NCE and NCO of unmeasured confounder $\mathbf{U}$ for the causal relationship $X_k \to Y$. We say a variable $Q$ is a Quadruple-disconnected NC if  $Q \CI X_k | \mathbf{U}$, $Q \CI Y | \mathbf{U}$, $Q \CI \mathbf{Z} | \mathbf{U}$, and $Q \CI \mathbf{W} | \mathbf{U}$. 
\end{Definition}

\begin{Example-set}
    { Consider the causal relationship $X_2\to Y$ in Figure \ref{Fig-model-example}. $X_1$ and $X_6$ are valid disconnected NCE and disconnected NCO relative to $X_2\to Y$, respectively. Because $X_3 \CI X_2 | \mathbf{U}$, $X_3 \CI Y | \mathbf{U}$, $X_3 \CI X_1 | \mathbf{U}$, and $X_3 \CI X_6 | \mathbf{U}$, $X_3$ can serve as a Quadruple-disconnected NC.}
\end{Example-set}

\begin{Proposition}[\textbf{Proxy Variables Estimator}~\citep{kuroki2014measurement}]\label{Pro-Proxy-Estimator}
Assume the system is a linear causal model, i.e., all variables are continuous and the causal relationships among variables are linear. Further, assume that there exist one unmeasured confounder $U$ that affects both treatment $X_k$ and outcome $Y$, and that $Z$ and $W$ are NCE and NCO {of confounder ${U}$}, e.g.,  the causal graph in Figure \ref{Fig-proximal-example}, the unbiased estimator for the causal effect $\beta_{X_k \to Y}$ of $X_k$ on $Y$ is as follows,
\begin{nospaceflalign}\label{Eq-single-proximal-inference}
\beta_{X_k \to Y} & = \frac{\sigma_{X_kY}\sigma_{WZ}-\sigma_{X_kW}\sigma_{YZ}}{\sigma_{X_kX_k}\sigma_{WZ}-\sigma_{X_kW}\sigma_{X_kZ}}
\end{nospaceflalign} 
where $\sigma_{X_kY}$ is the covariance between $X_k$ and $Y$, etc. 
\end{Proposition}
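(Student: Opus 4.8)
The plan is to verify the identity by a direct moment computation in the structural causal model attached to the graph of Figure~\ref{Fig-proximal-example}. First I would write the centered linear SEM that is compatible with Definition~\ref{defi-proximal-criteria} for a single latent source $U$: take $Z=\alpha U+\varepsilon_Z$, $W=\gamma U+\varepsilon_W$, $X_k=\lambda U+\varepsilon_{X_k}$, and $Y=\beta X_k+\delta U+\varepsilon_Y$, where $\beta=\beta_{X_k\to Y}$ is the target quantity and $\{U,\varepsilon_Z,\varepsilon_W,\varepsilon_{X_k},\varepsilon_Y\}$ are mutually independent with variances $\sigma_U^2,\sigma_{\varepsilon_{X_k}}^2,\dots$. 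The proximal criteria are exactly what license this form: $\mathbf{Z}\CI Y\mid(\mathbf{U},X_k)$ forbids a direct $Z\to Y$ path, while $\mathbf{W}\CI(X_k,\mathbf{Z})\mid\mathbf{U}$ forces every association of $W$ with $X_k$ and $Z$ to be routed through $U$.

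Second, I would read off the six second-order moments entering \eqref{Eq-single-proximal-inference}. The proxy cross-covariances factor cleanly through the confounder,
\begin{equation*}
\sigma_{WZ}=\alpha\gamma\sigma_U^2,\qquad \sigma_{X_kW}=\lambda\gamma\sigma_U^2,\qquad \sigma_{X_kZ}=\lambda\alpha\sigma_U^2,
\end{equation*}
whereas the outcome covariances additionally pick up the direct effect,
\begin{equation*}
\sigma_{X_kY}=\beta(\lambda^2\sigma_U^2+\sigma_{\varepsilon_{X_k}}^2)+\delta\lambda\sigma_U^2,\qquad \sigma_{YZ}=\alpha\sigma_U^2(\beta\lambda+\delta),
\end{equation*}
together with $\sigma_{X_kX_k}=\lambda^2\sigma_U^2+\sigma_{\varepsilon_{X_k}}^2$. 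Substituting these into \eqref{Eq-single-proximal-inference} and expanding, every term proportional to $\sigma_U^4$ cancels between the two products in both the numerator and the denominator, leaving only the contributions carrying the idiosyncratic variance $\sigma_{\varepsilon_{X_k}}^2$,
\begin{equation*}
\sigma_{X_kY}\sigma_{WZ}-\sigma_{X_kW}\sigma_{YZ}=\beta\,\alpha\gamma\sigma_U^2\sigma_{\varepsilon_{X_k}}^2,\qquad \sigma_{X_kX_k}\sigma_{WZ}-\sigma_{X_kW}\sigma_{X_kZ}=\alpha\gamma\sigma_U^2\sigma_{\varepsilon_{X_k}}^2,
\end{equation*}
so the ratio collapses to $\beta$. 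It is worth recording that numerator and denominator are precisely the $2\times2$ cross-covariance determinants $\mathrm{det}(\boldsymbol{\Sigma}_{\{X_k,Z\},\{Y,W\}})$ and $\mathrm{det}(\boldsymbol{\Sigma}_{\{X_k,Z\},\{X_k,W\}})$; this determinant form both organizes the cancellation and foreshadows the rank/trek reasoning used later in the paper.

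The step that needs care is not the algebra but the nondegeneracy of the common factor $\alpha\gamma\sigma_U^2\sigma_{\varepsilon_{X_k}}^2$ that is divided out. I would state explicitly that \eqref{Eq-single-proximal-inference} is well defined exactly when $\alpha,\gamma\neq0$, $\sigma_U^2>0$, and $\sigma_{\varepsilon_{X_k}}^2>0$, i.e.\ when $U$ genuinely influences both proxies and $X_k$ retains its own variation; otherwise the estimator degenerates to $0/0$. Finally, I would remark that the identical computation goes through if $Z$ is a \emph{connected} NCE ($Z\to X_k$) or $W$ a \emph{connected} NCO ($W\to Y$): the extra structural coefficients enter the paired products symmetrically and cancel in the same way, so the conclusion holds for every valid NCE/NCO pair of a single confounder $U$, not merely for the pure-child configuration drawn in Figure~\ref{Fig-proximal-example}.
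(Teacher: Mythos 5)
Your computation is correct, and it is worth noting that the paper gives no self-contained argument for Proposition~\ref{Pro-Proxy-Estimator} at all: its ``proof'' is a pointer to \citet{kuroki2014measurement} and \citet{miao2018proxy}. Your direct moment calculation therefore supplies an argument rather than paralleling one. The closest in-paper comparison is the proof of the generalization, Proposition~\ref{Pro-Multi-Proxy-Estimator}, which deliberately avoids fixing a parametrization: it derives $\beta_{X_k\to Y}$ from the back-door identity $\beta_{X_k \to Y} = \boldsymbol{\Sigma}_{(X_{k},Y)\cdot\mathbf{U}}/\boldsymbol{\Sigma}_{(X_{k},X_{k})\cdot\mathbf{U}}$ combined with the conditional-independence (and, when $Z \nCI X_k \mid U$, conditional-IV) equations implied by the proximal criteria, or alternatively from trek-rule determinant expansions; specializing to $q=1$ recovers Eq.~\eqref{Eq-single-proximal-inference} without ever writing structural coefficients. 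Your route buys transparency: it exhibits exactly which common factor ($\alpha\gamma\sigma_U^2\sigma_{\varepsilon_{X_k}}^2$) is divided out, hence an explicit nondegeneracy condition that the paper leaves implicit under rank-faithfulness; the paper's route buys configuration-independence, covering all graphs compatible with Definition~\ref{defi-proximal-criteria} in one pass. Two refinements to your closing remark. First, your extension claim is right but slightly incomplete: Definition~\ref{defi-proximal-criteria} also admits the orientation $X_k \to Z$ (the trek-based proof of Proposition~\ref{Pro-Multi-Proxy-Estimator} treats both $Z_i \to X_k$ and $X_k \to Z_i$), and the same cancellation indeed goes through there, with numerator $\beta\, d\sigma_U^2\bigl[(a+gb)\sigma_{X_kX_k}-b\,\sigma_{X_kZ}\bigr]$ and denominator carrying the identical bracket. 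Second, in the connected cases the surviving common factor is no longer the one you state --- e.g.\ with $Z \to X_k$ (coefficient $e$) it becomes $d\sigma_U^2\bigl(a\,\sigma_{\varepsilon_{X_k}}^2 - be\,\sigma_{\varepsilon_Z}^2\bigr)$, which can vanish on a measure-zero parameter set even when all edge coefficients and variances are nonzero --- so the well-definedness condition must be stated as genericity (rank-faithfulness) rather than as the simple list $\alpha,\gamma\neq 0$, $\sigma_U^2>0$, $\sigma_{\varepsilon_{X_k}}^2>0$ you give for the pure-child graph. Neither point undermines the proof; both sharpen the scope claim in your final sentence.
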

It is worth noting that this standard estimator is only applicable in the case of a single $U$. For scenarios involving multiple confounders $\mathbf{U}$, please refer to the extended estimator introduced in Section \ref{Section-extended-estimator}.

\vspace{-2mm}
\subsection{Model Definition}\label{Sub-Section-Model-Definition}

In this paper, let $\mathbf{X}=\{X_1,\ldots, X_p\}^\intercal$   denote a vector of $p$-dimensional treatments, $Y$  denote an outcome, and ${\bf{U}}= \{U_1,\ldots,U_q\}^\intercal$  denote a vector of $q$-dimensional unmeasured confounders. Analogous to \citet{wang2019jasa,ogburn2019comment,damour2019comment}, we consider the case that $\bf{U}$ affects both treatments $\mathbf{X}$ and outcome $Y$.   Without loss of generality, we assume that all variables have a zero mean. We here restrict our attention to a linear acyclic causal model,
\vspace{-2mm}
\begin{equation}\label{Eq-model-defi}
\begin{aligned}
\mathbf{X} &= \mathbf{B} \mathbf{X} + \mathbf{C} \mathbf{U} + \varepsilon_{\mathbf{X}}, \quad  c_{ij} \neq 0, \\
Y &= \bm{\beta}^{\intercal} \mathbf{X}  + \bm{\delta}^\intercal \mathbf{U} + \varepsilon_{{Y}}, \quad  {\bm{\delta}}_i \neq 0, 
\end{aligned}
\end{equation}
\vspace{-1mm}
where $\bm{\beta} $ is the column vector that signifies the causal effects of interest. The noise terms in $\varepsilon_{\mathbf{X}}$ and $\mathbf{U}$ are independent of each other, $\varepsilon_{{Y}}$ is independent of $\mathbf{X}$ and $\mathbf{U}$. We assume that the generating process is recursive. That is to say, the causal relationships among variables can be represented by a DAG \citep{pearl2009causality,spirtes2000causation}.

In contrast to the single-treatment model studied in \citet{kummerfeld2022data}, where potential variables cannot directly influence the treatment and outcome of interest, our work explores a more general scenario. In our study, the system of interest accommodates multiple unmeasured confounders and multiple treatments, where latent  variables can have a direct effect on both the treatment and the outcome of interest.
Figure \ref{Fig-model-example} provides a simple graph that satisfies our model while violating the model in \citet{kummerfeld2022data}, where the variables $X_i, i=1, \dots,6$ are potential treatments, $Y$ is the outcome, and $U$ is an unmeasured confounder.
\begin{figure}[htp]
	\begin{center}
		\begin{tikzpicture}[scale=1.0, line width=0.5pt, inner sep=0.2mm, shorten >=.1pt, shorten <=.1pt]
			\draw (3, 1.5) node(U) [circle, fill=gray!60, minimum size=0.5cm,draw] {{\footnotesize\,${U}$\,}};
			\draw (1, 0) node(X1) [] {{\footnotesize\,$X_1$\,}};
			\draw (2, 0) node(X2) [] {{\footnotesize\,$X_2$\,}};
			\draw (3, 0) node(X3) [] {{\footnotesize\,{$X_3$}\,}};
			\draw (4, 0) node(X4) [] {{\footnotesize\,{$X_4$}\,}};
			\draw (5, 0) node(X5) [] {{\footnotesize\,{$X_5$}\,}};
			\draw (6, 0) node(X6) [] {{\footnotesize\,{$X_6$}\,}};
			\draw (7, 0) node(Y) [circle, draw]  {{\footnotesize\,{$Y$}\,}};
			\draw[-arcsq] (U) -- (X1) node[pos=0.5,sloped,above] {}; 
			\draw[-arcsq] (U) -- (X2)node[pos=0.5,sloped,above] {}; 
			\draw[-arcsq] (U) -- (X3) node[pos=0.5,sloped,above] {};
			\draw[-arcsq] (U) -- (X4) node[pos=0.5,sloped,above] {};
			\draw[-arcsq] (U) -- (X5) node[pos=0.5,sloped,above] {};
			\draw[-arcsq] (U) -- (X6) node[pos=0.5,sloped,above] {};
			\draw[-arcsq] (X1) -- (X2) node[pos=0.5,sloped,above] {};
			\draw[-arcsq] (X4) -- (X5) node[pos=0.5,sloped,above] {};
			%
			\draw[-arcsq] (U) -- (Y) node[pos=0.5,sloped,above] {}; 
			\draw [-arcsq] (X2) edge[bend right=20] (Y);
			\draw[-arcsq] (X5) -- (X6) node[pos=0.5,sloped,above] {};
			\draw[-arcsq] (X6) -- (Y) node[pos=0.5,sloped,above] {};
		\end{tikzpicture}
            \vspace{-3mm}
		\caption{A simple causal graph involving 6 potential treatments and one outcome.}
		\vspace{-5mm}
		\label{Fig-model-example}
	\end{center}
\end{figure}
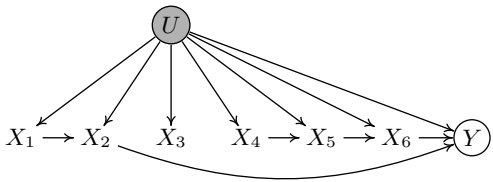

For convenience, we further assume that the number of unmeasured confounders is known. When the number of confounders is unknown, consistent estimation of the number of confounders has been well-established by \citet{bai2002determining} under factor models. In practical applications, one may directly use open-source software, to perform a significance test to determine whether the number of confounders in a factor model is sufficient to capture the full dimensionality of the dataset, as stated in \citet{miao2022identifying}. Notice that the model of Equation \ref{Eq-model-defi} assumes that $\bf{U}$ affects both treatments $\mathbf{X}$ and outcome $Y$, i.e., all entries of $\mathbf{C}$ are non-zero, as the studied in \citet{kummerfeld2022data}.
In Appendix \ref{Appendix-Discussion}, we explored a scenario in which, after applying certain necessary preprocessing steps, our theory remains applicable, even when certain entries of $\mathbf{C}$ are zero.

\textbf{Goal}: The goal of this paper is to identify sets NCE and NCO of unmeasured confounders that satisfy proximal criteria for a given casual relationship $X_k \to Y, X_i \in \mathbf{X}$ and estimate the total causal effect of treatment $X_i$ on $Y$ simultaneously.

\begin{Remark}
In the problem described above, conventional constraint-based causal discovery methods that account for unmeasured confounders, such as the FCI (Fast Causal Inference) algorithm \citep{spirtes1995causal,zhang2008completeness} or its variants, like the RFCI algorithm (Really Fast Causal Inference) \citep{colombo2012learning}, result in a fully connected causal graph. This occurs because unmeasured confounders $\mathbf{U}$ affect both the treatments $X$ and $Y$.
As a result, it becomes challenging to identify valid Negative Control Exposure (NCE) and Negative Control Outcome (NCO)   for unmeasured confounders from the resulting (fully connected) graph.
\end{Remark}

\vspace{-3mm}
\section{Extended Proxy Variables Estimator with Multiple Unmeasured Confounders}\label{Section-extended-estimator}

In this section, we will extend the existing proxy variable estimator with a single unmeasured confounder (Proposition \ref{Pro-Proxy-Estimator}) to handle the case when there exist multiple unmeasured confounders between treatment and outcome. Specifically, we build upon the work of \citet{kuroki2014measurement} on the proxy variable estimator with an unmeasured confounder and extend it to include multiple unmeasured confounders in the case of a linear causal model. To improve readability, we defer all proofs to Appendix \ref{Appendix-Section-Proofs}.

\begin{Proposition}[\textbf{Extended Proxy Variables Estimator}]\label{Pro-Multi-Proxy-Estimator}
  Assume the system is a linear causal model, i.e., all variables are continuous and the causal relationships among variables are linear, and assume there exist $q$ unmeasured confounders, denoted by $\mathbf{U}$, that affect both treatment $X_k$ and outcome $Y$. Let $\mathbf{Z}$ with $|\mathbf{Z}|=q$ and $\mathbf{W}$ with $|\mathbf{W}|=q$ be two valid NCE and NCO of $\mathbf{U}$ respectively. Thus, the unbiased estimator for the total causal effect $\beta_{X_k \to Y}$ of $X_k$ on $Y$ is as follows,
\begin{nospaceflalign}\label{Eq-multiple-proximal-inference}
\beta_{X_k \to Y} & = \frac{\mathrm{det}(\boldsymbol{\Sigma}_{\{X_k \cup \mathbf{Z}\}, \{Y \cup \mathbf{W}\}})}{\mathrm{det}(\boldsymbol{\Sigma}_{\{X_k \cup \mathbf{Z}\}, \{X_k \cup \mathbf{W}\}})}.
\end{nospaceflalign}  
\end{Proposition}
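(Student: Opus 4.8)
The plan is to recognize the claimed ratio of determinants as \emph{Cramer's rule} applied to a linear system in the unknowns $(\beta,\bm{\gamma})$, and then to show that the true total effect $\beta_{X_k\to Y}$ solves that system. Write $\mathbf{V}=\{X_k\cup\mathbf{Z}\}$, a set of $q+1$ variables. The $(q+1)\times(q+1)$ coefficient matrix $\boldsymbol{\Sigma}_{\mathbf{V},\{X_k\cup\mathbf{W}\}}$ has first column $\boldsymbol{\Sigma}_{\mathbf{V},X_k}$ and remaining columns $\boldsymbol{\Sigma}_{\mathbf{V},\mathbf{W}}$; replacing its first column by $\boldsymbol{\Sigma}_{\mathbf{V},Y}$ produces exactly $\boldsymbol{\Sigma}_{\mathbf{V},\{Y\cup\mathbf{W}\}}$. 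Hence, provided the denominator determinant is nonzero, the claimed formula is precisely the Cramer's-rule expression for the first coordinate of the solution $(\beta,\bm{\gamma})$ of the $(q+1)\times(q+1)$ linear system
\begin{equation*}
\boldsymbol{\Sigma}_{\mathbf{V},X_k}\,\beta + \boldsymbol{\Sigma}_{\mathbf{V},\mathbf{W}}\,\bm{\gamma} = \boldsymbol{\Sigma}_{\mathbf{V},Y}.
\end{equation*}
So it suffices to exhibit some $\bm{\gamma}$ for which this system holds with $\beta=\beta_{X_k\to Y}$, and to argue the coefficient matrix is invertible.

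The core step is to produce a decomposition $Y=\beta_{X_k\to Y}\,X_k+\bm{\gamma}^{\intercal}\mathbf{W}+R$ in which the residual $R$ is uncorrelated with every variable in $\mathbf{V}$; taking covariances of both sides against each element of $\mathbf{V}$ then reproduces the displayed system immediately, since $\mathrm{Cov}(\mathbf{V},R)=\mathbf{0}$. I would build this decomposition from the two proximal criteria. First, the population regression of $Y$ on $(X_k,\mathbf{U})$ gives $Y=\beta_{X_k\to Y}\,X_k+\mathbf{r}^{\intercal}\mathbf{U}+\rho$ with $\rho$ uncorrelated with $X_k$ and $\mathbf{U}$ by construction; the $X_k$-coefficient equals the total effect because $\mathbf{U}$ is the full set of confounders and is a non-descendant of $X_k$, so it satisfies the back-door criterion for $X_k\to Y$ without blocking any directed path. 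The NCE condition $\mathbf{Z}\CI Y\mid(\mathbf{U},X_k)$ upgrades this to $\mathrm{Cov}(\rho,\mathbf{Z})=\mathbf{0}$. Second, the NCO condition $\mathbf{W}\CI(X_k,\mathbf{Z})\mid\mathbf{U}$ lets me write $\mathbf{W}=\mathbf{M}\mathbf{U}+\bm{\eta}$ with $\bm{\eta}$ uncorrelated with $\mathbf{U}$, $X_k$, and $\mathbf{Z}$. Choosing $\bm{\gamma}$ to solve $\bm{\gamma}^{\intercal}\mathbf{M}=\mathbf{r}^{\intercal}$ replaces $\mathbf{r}^{\intercal}\mathbf{U}$ by $\bm{\gamma}^{\intercal}\mathbf{W}-\bm{\gamma}^{\intercal}\bm{\eta}$, so that $R:=\rho-\bm{\gamma}^{\intercal}\bm{\eta}$ inherits zero covariance with both $X_k$ and $\mathbf{Z}$, as required.

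The main obstacle is the non-degeneracy needed to make the two linear-algebra steps go through: solvability of $\bm{\gamma}^{\intercal}\mathbf{M}=\mathbf{r}^{\intercal}$ and, in parallel, invertibility of the denominator matrix $\boldsymbol{\Sigma}_{\mathbf{V},\{X_k\cup\mathbf{W}\}}$, so that Cramer's rule is valid and the solution is unique. Both reduce to the $q\times q$ confounder-loading matrix $\mathbf{M}$ having full rank $q$, i.e., the $q$ NCO variables jointly carrying independent information about all $q$ confounders. I would establish this from the model's non-degeneracy assumptions (all entries of $\mathbf{C}$ nonzero and $\bm{\delta}_i\neq 0$) together with the validity of the proxies, which force the relevant covariance blocks to factor through an invertible $\mathbf{U}$-loading. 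Given full rank, $\bm{\gamma}$ exists, the denominator determinant is nonzero, and Cramer's rule delivers exactly the stated ratio with value $\beta_{X_k\to Y}$.
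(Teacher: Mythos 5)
Your argument is sound and reaches the formula by a genuinely different route than either of the paper's two proofs. The paper's first proof anchors, as you do, on the back-door identity $\beta_{X_k \to Y}=\boldsymbol{\Sigma}_{(X_{k},Y)\cdot\mathbf{U}}/\boldsymbol{\Sigma}_{(X_{k},X_{k})\cdot\mathbf{U}}$, but then splits into two scenarios according to whether $\mathbf{Z} \CI X_k \mid \mathbf{U}$ (disconnected NCE, handled by extra partial-covariance identities) or $\mathbf{Z} \nCI X_k \mid \mathbf{U}$ (handled by treating each $Z_i$ as a conditional instrumental variable), and in each case massages the resulting matrix identities, via $\boldsymbol{\Sigma}_{X_k,\mathbf{W}}\boldsymbol{\Sigma}_{\mathbf{Z},\mathbf{W}}^{-1}=\boldsymbol{\Sigma}_{X_k,\mathbf{U}}\boldsymbol{\Sigma}_{\mathbf{Z},\mathbf{U}}^{-1}$ and a Schur-complement step, into the ratio of determinants; its second, independent proof expands both determinants over trek systems with no sided intersection and factors out $\beta_{X_k\to Y}$ combinatorially. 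You replace all of this with a single outcome-bridge decomposition $Y=\beta_{X_k\to Y}X_k+\bm{\gamma}^{\intercal}\mathbf{W}+R$ with $\mathrm{Cov}(R,\{X_k\}\cup\mathbf{Z})=\mathbf{0}$ --- obtained by projecting $Y$ on $(X_k,\mathbf{U})$, using the NCE condition to kill $\mathrm{Cov}(\rho,\mathbf{Z})$, and using the NCO condition to substitute $\mathbf{M}\mathbf{U}=\mathbf{W}-\bm{\eta}$ --- after which Cramer's rule yields the determinant ratio in one stroke. This buys uniformity (no connected/disconnected case split, since your residual argument never invokes whether $\mathbf{Z} \CI X_k \mid \mathbf{U}$) and makes transparent the connection to the confounding-bridge view of proximal inference, whereas the paper's trek proof buys a combinatorial picture consonant with the rank-constraint machinery used in its identifiability results. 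One caution: your claim that invertibility of $\boldsymbol{\Sigma}_{\{X_k \cup \mathbf{Z}\}, \{X_k \cup \mathbf{W}\}}$ ``reduces to'' $\mathbf{M}$ having rank $q$ is slightly too strong --- you also need the $\mathbf{Z}$-side loadings (full rank of $\boldsymbol{\Sigma}_{\mathbf{Z},\mathbf{U}}$) and a genericity (rank-faithfulness) condition to exclude accidental cancellation in the $(q+1)\times(q+1)$ determinant; but the paper's own proof implicitly assumes exactly the same non-degeneracy (it freely inverts $\boldsymbol{\Sigma}_{\mathbf{Z},\mathbf{W}}$ and divides by the denominator determinant, cf.\ Remark \ref{remark-necessary-conditions}), so this is comparable slack rather than a gap.
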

Proposition \ref{Pro-Multi-Proxy-Estimator} asserts that, given two valid $q$-dimensional NCE and $q$-dimensional NCO for the causal relationship $X_k \to Y$ when there exist $q$-dimensional unmeasured confounders, then the total causal effect of $X_k$ on $Y$ can be consistently estimated using Eq. \ref{Eq-multiple-proximal-inference}. Note that if the dimension of NCE and NCO is less than $q$, the estimated ${\beta}_{X_k \to Y}$ will be biased (see an example described in Appendix. \ref{Appendix-Extended-Proxy}).

\begin{Remark}
    If $q=1$, i.e., there exists only one unmeasured confounder between $X_k$ and $Y$, the estimator in Proposition 2 is equal to the estimator in Proposition 1.
\end{Remark}

\begin{Remark}\label{remark-necessary-conditions}
    According to Proposition 2, for a given causal relationship $X_k \to Y$, the necessary conditions for the NCE $\mathbf{Z}$ and NCO $\mathbf{W}$ to be valid are that $\boldsymbol{\Sigma}_{\{X_k, \mathbf{Z}\},  \{Y,\mathbf{W}\}}$ and $\boldsymbol{\Sigma}_{\{X_k, \mathbf{Z}\},  \{X_k,\mathbf{W}\}}$ both are full rank, i.e., $\mathrm{rk}(\boldsymbol{\Sigma}_{\{X_k, \mathbf{Z}\},  \{Y,\mathbf{W}\}}) = q+1$,
$\mathrm{rk}(\boldsymbol{\Sigma}_{\{X_k, \mathbf{Z}\},  \{X_k,\mathbf{W}\}}) = q+1$.
\end{Remark}

\vspace{-3mm}
\section{Identifiability with Second-Order Statistics}
In this section, we first investigate the  identifiability of the proxy variables in the model described in Eq. \ref{Eq-model-defi} with second-order statistics. Then, we provide a data-driven method for selecting valid proxy variables (i.e., NCE and NCO) of each treatment $X_k$ on outcome $Y$ and obtaining its corresponding unbiased causal effect of $X_k$ on $Y$ simultaneously.
All proofs are included in Appendix \ref{Appendix-Section-Proofs}.

\vspace{-2mm}
\subsection{Identification of Proxy Variables with Second-Order Statistics}\label{Subsection-Second-Order-Statistics}

In this section, we investigate the identifiability of proxy variables using second-order statistics. Before giving our main results, we first introduce the concept of rank constraints (which is an extension of the famous Tetrad constraints presented in \citet{spearman1928pearson}), which is an essential constraint that leverages the second-order statistics derived from the data \citep{Sullivant-T-separation,spirtes2013calculation-t-separation}.

\begin{figure*}[t!]
	\centering
	\begin{tikzpicture}[scale=1.0, line width=0.5pt, inner sep=0.2mm, shorten >=.1pt, shorten <=.1pt]
		\draw (3, 1.0) node(U) [circle, fill=gray!60, minimum size=0.5cm,draw] {{\footnotesize\,$U$\,}};
		\draw (1.5, 1.0) node(Z) [] {{\footnotesize\,$Z$\,}};
		\draw (4.5, 1.0) node(W) [] {{\footnotesize\,$W$\,}};
		\draw (2.25, 0) node(X) [] {{\footnotesize\,{$X_k$}\,}};
		\draw (3.75, 0) node(Y) [] {{\footnotesize\,{$Y$}\,}};
		%
		%
		\draw[-arcsq] (U) -- (Z) node[pos=0.5,sloped,above] {{\scriptsize\,$a$\,}}; 
		\draw[-arcsq] (U) -- (W)node[pos=0.5,sloped,above] {{\scriptsize\,$d$\,}}; 
		\draw[-arcsq] (U) -- (X) node[pos=0.5,sloped,above] {{\scriptsize\,$b$\,}};
		\draw[-arcsq] (U) -- (Y) node[pos=0.5,sloped,above] {{\scriptsize\,$c$\,}};
		\draw[-arcsq] (X) -- (Y) node[pos=0.5,sloped,above] {{\scriptsize\,$\beta$\,}};
    	\draw[-arcsq] (Z) -- (X) node[pos=0.5,sloped,above] {{\scriptsize\,$e$\,}};
		\draw[-arcsq] (W) -- (Y) node[pos=0.5,sloped,above] {{\scriptsize\,$f$\,}};
		\draw (3,-0.5) node(label-ii) [] {{\footnotesize\,(a)\,}};
	\end{tikzpicture}~~~~~~
	\begin{tikzpicture}[scale=1.0, line width=0.5pt, inner sep=0.2mm, shorten >=.1pt, shorten <=.1pt]
	\draw (3, 1.0) node(U) [circle, fill=gray!60, minimum size=0.5cm,draw] {{\footnotesize\,$U$\,}};
	\draw (1.5, 1.0) node(Z) [] {{\footnotesize\,$Z$\,}};
	\draw (4.5, 1.0) node(W) [] {{\footnotesize\,$W$\,}};
	\draw (2.25, 0) node(X) [] {{\footnotesize\,{$X_k$}\,}};
	\draw (3.75, 0) node(Y) [] {{\footnotesize\,{$Y$}\,}};
	%
	%
	\draw[-arcsq] (U) -- (Z) node[pos=0.5,sloped,above] {{\scriptsize\,$a$\,}}; 
	\draw[-arcsq] (U) -- (W)node[pos=0.5,sloped,above] {{\scriptsize\,$d$\,}}; 
	\draw[-arcsq] (U) -- (X) node[pos=0.3,sloped,above] {{\scriptsize\,$b$\,}};
	\draw[-arcsq] (U) -- (Y) node[pos=0.5,sloped,above] {{\scriptsize\,$c$\,}};
	\draw[-arcsq] (X) -- (Y) node[pos=0.5,sloped,above] {{\scriptsize\,$\beta$\,}}; 
        \draw[-arcsq] (Z) -- (X) node[pos=0.5,sloped,above] {{\scriptsize\,$e$\,}}; 
	\draw[-arcsq, color=red] (Z) -- (Y) node[pos=0.3,sloped,above] {{\scriptsize\,$g$\,}}; 
	\draw[-arcsq] (W) -- (Y) node[pos=0.5,sloped,above] {{\scriptsize\,$f$\,}}; 
	\draw (3,-0.5) node(label-ii) [] {{\footnotesize\,(b)\,}};
	\end{tikzpicture}~~~~~~
    \begin{tikzpicture}[scale=1.0, line width=0.5pt, inner sep=0.2mm, shorten >=.1pt, shorten <=.1pt]
	\draw (3, 1.0) node(U) [circle, fill=gray!60, minimum size=0.5cm,draw] {{\footnotesize\,$U$\,}};
	\draw (1.5, 1.0) node(Z) [] {{\footnotesize\,$Z$\,}};
	\draw (4.5, 1.0) node(W) [] {{\footnotesize\,$W$\,}};
	\draw (2.25, 0) node(X) [] {{\footnotesize\,{$X_k$}\,}};
	\draw (3.75, 0) node(Y) [] {{\footnotesize\,{$Y$}\,}};
	%
	%
	\draw[-arcsq] (U) -- (Z) node[pos=0.5,sloped,above] {{\scriptsize\,$a$\,}}; 
	\draw[-arcsq] (U) -- (W)node[pos=0.5,sloped,above] {{\scriptsize\,$d$\,}}; 
	\draw[-arcsq] (U) -- (X) node[pos=0.5,sloped,above] {{\scriptsize\,$b$\,}};
	\draw[-arcsq] (U) -- (Y) node[pos=0.3,sloped,above] {{\scriptsize\,$c$\,}};
	\draw[-arcsq] (X) -- (Y) node[pos=0.5,sloped,above] {{\scriptsize\,$\beta$\,}}; 
	\draw[-arcsq] (Z) -- (X) node[pos=0.5,sloped,above] {{\scriptsize\,$e$\,}}; 
        \draw[-arcsq] (W) -- (Y) node[pos=0.5,sloped,above] {{\scriptsize\,$f$\,}}; 
	\draw[-arcsq, color=red] (W) -- (X) node[pos=0.3,sloped,above] {{\scriptsize\,$g$\,}}; 
	\draw (3,-0.5) node(label-ii) [] {{\footnotesize\,(c)\,}};
\end{tikzpicture}
    \vspace{-4mm}
	\caption{A linear causal  model with any of the graphical structures above entails all possible rank constraints in the marginal covariance matrix of $\{X_k, Y, Z, W\}$. }
	\label{Fig-equivalent-proximal-example}
 \vspace{-5mm}
\end{figure*}
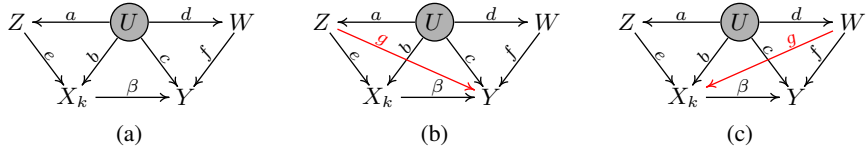

\begin{Definition}[\textbf{Rank Constraint}]
Suppose all variables follow the linear acyclic causal model. Let $\mathbf{A}$ and $\mathbf{B}$ be two sets of random variables.
A rank constraint in the submatrix of the covariance matrix $\mathrm{rk}(\boldsymbol{\Sigma}_{\mathbf{A}, \mathbf{B}})$ is any constraint of the type $\mathrm{rk}(\boldsymbol{\Sigma}_{\mathbf{A}, \mathbf{B}}) \leq r$, where $r$ is some constant.
\end{Definition}
It is noteworthy that if one uses rank constraints to structural constraints with unobserved variables, the rank-faithful assumption is necessary \citep{spirtes2013calculation-t-separation}. 

\begin{Definition}[rank-faithfulness]\label{Ass_rankfaithful}
Let a probability distribution $P$ be rank-faithful to a DAG $\mathcal{G}$ if every rank constraint on a sub-covariance matrix that holds in $P$ is entailed by every linear structural equation model with respect to $\mathcal{G}$.
\vspace{-2mm}
\end{Definition}
The rank-faithfulness assumption allows us to use the rank-deficiency constraints to impose structural constraints with unobserved variables.
Intuitively speaking, the set of values of free parameters for which $\mathrm{rk}(\boldsymbol{\Sigma}_{\mathbf{A}, \mathbf{B}}) \leq r$ has a Lebesgue measure of 0. Note that this assumption does not restrict the data distribution, making it a distribution-free assumption.
Furthermore, the practicality of rank-faithfulness has been demonstrated through simulation results and applications in \citet{kummerfeld2016causal,xie2022identification,huang2022latent}, as well as in our paper.
For a further discussion of rank-faithfulness assumptions, please refer to Section 4 in \citet{spirtes2013calculation-t-separation} for more details.

\textbf{\emph{A Motivating Example:}} Before showing the theoretical results, we give a simple example to illustrate {the basic idea}. Consider the causal diagram in Figure \ref{Fig-model-example}. We observe a causal relationship $X_2 \to Y$. In this case, $X_1$ and $X_6$ serve as valid NCE and NCO, respectively, for this causal relationship.
The cross-covariance matrix $\boldsymbol{\Sigma}_{\{X_2, X_3, X_1\}, \{X_2, Y, X_6\}} $ is singular, that is,
\begin{nospaceflalign}
    \mathrm{det}(\boldsymbol{\Sigma}_{\{X_2, X_3, X_1\},  \{X_2,Y, X_6\}})=0. 
\label{Eq_Rank1-intuition}
\end{nospaceflalign}

By Eq.\ref{Eq_Rank1-intuition}, we quickly know that $\mathrm{rk}(\boldsymbol{\Sigma}_{\{X_2, X_3, X_1\},  \{X_2,Y, X_6\}}) \leq 2$.
We introduce an edge $X_1 \to Y$ in the graph of Figure \ref{Fig-model-example}, causing $X_1$ and $X_6$ to become invalid NCE and NCO, respectively, concerning the causal relationship $X_2 \to Y$.
 The cross-covariance matrix $\boldsymbol{\Sigma}_{\{X_2, X_3, X_1\}, \{X_2, Y, X_6\}}$ will no longer have a vanishing determinant, and instead, 
\begin{nospaceflalign}
\mathrm{det}(\boldsymbol{\Sigma}_{\{X_2, X_3, X_1\},  \{X_2,Y, X_6\}}) \neq 0.
\label{Eq_Rank2-intuition}
\end{nospaceflalign}

That is to say, $\boldsymbol{\Sigma}_{\{X_2, X_3, X_1\},  \{X_2,Y, X_6\}}$ is full rank.
Assuming the distribution is rank-faithful to the graph, the above facts show that lack of edge $X_1 \to Y$, i.e., the variable of NCE does not causally affect the primary outcome, has a testable implication.

We now investigate the conditions under which the valid NCE and NCO of unmeasured confounder $\mathbf{U}$ relative to a causal relationship $X_k \to Y$ can be identified in terms of rank constraints. To estimate the causal effect of $X_k \in \mathbf{X}$ on $Y$ in the system, according to Proposition \ref{Pro-Multi-Proxy-Estimator}, the minimal condition is as follows,
\begin{Assumption}\label{Ass-minimal}
    For a given causal relationship $X_k \to Y$ in the system, there exist at least $q$ variables in $\mathbf{X}$ that qualify as NCE and $q$ variables in $\mathbf{X}$ that qualify as NCO.
\end{Assumption}
Assumption \ref{Ass-minimal} is a very natural condition that one expects to hold. This is because if Assumption \ref{Ass-minimal} fails, i.e., there are no valid sets of NCE and NCO for the causal relationship in the system, then we can not estimate the unbiased causal effect of interest using the {extended proxy variables estimator}.
Unfortunately, Assumption \ref{Ass-minimal} is an \emph{insufficient} condition for identifying the sets of NCE and NCO in terms of rank constraints. An illustrative example is given below. 

\begin{Example-set}[Counterexample]
Consider the causal diagrams shown in Figure \ref{Fig-equivalent-proximal-example}.  Assume that the data are generated from a linear causal model and rank-faithfulness holds. We find that all possible rank constraints are full-rank (no rank-deficiency) in the marginal matrix of $\{X_k, Y, Z, W\}$, e.g., $\mathrm{rk}(\boldsymbol{\Sigma}_{\{X_k, Y\},  \{Z, W\}}) = 2$ in three subgraphs. However, according to proximal criteria, we know that only in subgraph (a), $Z$ and $W$ are NCE and NCO of unmeasured confounder $U$ for the causal relationship $X_k \to Y$, while they are not in other subgraphs (b) and (c).
The above facts imply that one can not identify valid NCE and NCO using rank constraints under Assumption \ref{Ass-minimal}.
\end{Example-set}

We next give two sufficient conditions that render the sets of NCE and NCO of the unmeasured confounders $\mathbf{U}$ relative to a causal relationship $X_k \to Y$ identifiable, respectively.

\begin{Assumption}\label{Ass_nc_rank1}
    For a given causal relationship $X_k \to Y$ in the system, the following conditions hold: i) there exist at least $q$ variables in $\mathbf{X}$ that qualify as NCE and $q$ variables in $\mathbf{X}$ that qualify as NCO, and ii) there exist at least one Quadruple-disconnected NC relative to  $X_k \to Y$.
\end{Assumption}
Assumption \ref{Ass_nc_rank1} says that apart from satisfying the minimum number of NCE and NCO, the system also requires at least one additional Quadruple-disconnected NC. 
{Assumption \ref{Ass_nc_rank1} is much milder than the assumptions considered in \citet{kummerfeld2022data}, since for a causal relationship $X_k \to Y$ in the presence of one unmeasured confounder, we only need one Quadruple-disconnected NC, one unrestricted NCE, and one unrestricted NCO, while they need three Quadruple-disconnected NCs.}
Roughly speaking, the existence of a quadruple disconnected negative control, termed as $Q$, simplifies the condition testing in the proximal criteria. That is to say, when verifying the two criteria of proximal criteria, that is, when finding two Rank-Deficiency Constraints, it is only necessary to add this variable to the rows and columns of the cross-covariance matrix. For example, the variable $X_3$ in Example 3 is a quadruple disconnected negative control.

\begin{Lemma}\label{Lemma-rule1}
Assume that the input data $\mathbf{X}$ and $Y$ strictly follow the Equation \ref{Eq-model-defi} and the rank-faithfulness holds. If Assumption \ref{Ass_nc_rank1} holds, then the underlying NCE and NCO relative to the causal relationship $X_k \to Y$ can be identified by using the following rule.
\begin{itemize}
\vspace{-4mm}
\item[$\mathcal{R}1$.] Let $\mathbf{A}$ and $\mathbf{B}$ be two disjoint subsets of $\mathbf{X}$, where $|\mathbf{A}|=q$ and $|\mathbf{B}|=q$. Furthermore, let $Q$ be a variable in $\{\mathbf{X} \setminus \{\mathbf{A} \cup \mathbf{B} \cup X_k\}\}$. If 1) $\mathrm{rk}(\boldsymbol{\Sigma}_{\{X_k, Q, \mathbf{A}\}, \{X_k,Y,\mathbf{B}\}}) \leq q+1$, and 2) $\mathrm{rk}(\boldsymbol{\Sigma}_{\{X_k, \mathbf{A}\},  \{Q, \mathbf{B}\}}) \leq q$, then $\mathbf{A}$ and $\mathbf{B}$ are valid NCE and NCO relative to $X_k \to Y$ respectively.
\end{itemize}
\vspace{-3mm}
\end{Lemma}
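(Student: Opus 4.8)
The plan is to show that the two rank conditions in $\mathcal{R}1$ are equivalent to the graphical NCE/NCO requirements of Definition~\ref{defi-proximal-criteria}, working entirely through the trek-separation theorem of \citet{Sullivant-T-separation,spirtes2013calculation-t-separation}. That theorem, together with a Menger-type duality, equates the generic rank of a cross-covariance block $\boldsymbol{\Sigma}_{\mathbf{S},\mathbf{T}}$ both with the minimum size $|\mathbf{C}_{\mathbf{S}}|+|\mathbf{C}_{\mathbf{T}}|$ of a trek separator and with the maximum size of a system of treks from $\mathbf{S}$ to $\mathbf{T}$ having no sided intersection; rank-faithfulness (Definition~\ref{Ass_rankfaithful}) guarantees the observed covariance attains this generic value. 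I would first record the standing structural fact that, since every entry of $\mathbf{C}$ and every $\bm{\delta}_i$ is nonzero, each $U_j$ is a parent of every coordinate of $\mathbf{X}$ and of $Y$; hence for any disjoint $q$-subsets of the row and column nodes the $q$ treks $\mathrm{row}_j \leftarrow U_j \to \mathrm{col}_j$ form a no-sided-intersection system, so every block appearing in $\mathcal{R}1$ has rank at least $q$.

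For the ``valid $\Rightarrow$ conditions'' direction I would exhibit explicit separators, which needs only the upper-bound half of trek separation and not faithfulness. If $\mathbf{A}=\mathbf{Z}$ is a valid NCE, $\mathbf{B}=\mathbf{W}$ a valid NCO, and $Q$ the quadruple-disconnected NC supplied by Assumption~\ref{Ass_nc_rank1}, then the NCO condition $\mathbf{W}\CI(X_k,\mathbf{Z})\mid\mathbf{U}$ and the disconnectedness of $Q$ force every trek between $\{X_k,\mathbf{A}\}$ and $\{Q,\mathbf{B}\}$ through $\mathbf{U}$; thus $(\emptyset,\mathbf{U})$ is a separator and $\mathrm{rk}(\boldsymbol{\Sigma}_{\{X_k,\mathbf{A}\},\{Q,\mathbf{B}\}})\le q$, which is condition~2. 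For condition~1 the NCE condition $\mathbf{Z}\CI Y\mid(\mathbf{U},X_k)$ lets me use $(\emptyset,\{X_k\}\cup\mathbf{U})$: any $\mathbf{U}$-avoiding trek from a row node to $Y$ must pass through $X_k$, and every remaining trek is confounding, so $\mathrm{rk}(\boldsymbol{\Sigma}_{\{X_k,Q,\mathbf{A}\},\{X_k,Y,\mathbf{B}\}})\le q+1$. This half also supplies the existence behind ``can be identified'': the genuine valid triple passes the test.

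The substantive direction is soundness, which I would prove by contraposition, using faithfulness to convert a small observed rank into a bounded trek system. Suppose $\mathbf{B}$ violates the NCO condition, so some $\mathbf{U}$-avoiding trek $\tau$ joins a node of $\mathbf{B}$ to a node of $\{X_k\}\cup\mathbf{A}$. Assigning one row node and $b\in\mathbf{B}$ to $\tau$ and routing $q$ confounding treks through $U_1,\dots,U_q$ to the remaining $q$ row nodes of $\{X_k,\mathbf{A}\}$ and the remaining $q$ column nodes of $\{Q,\mathbf{B}\}$ (here $Q$ is needed purely as padding to furnish the $q$-th column endpoint) yields a no-sided-intersection system of size $q+1$, forcing $\mathrm{rk}\ge q+1$ and breaking condition~2; hence condition~2 gives $\mathbf{B}\CI(X_k,\mathbf{A})\mid\mathbf{U}$. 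An analogous count, now combining the trivial self-trek $X_k\!-\!X_k$ with a forbidden $\mathbf{A}\!-\!Y$ trek avoiding both $\mathbf{U}$ and $X_k$, produces a system of size $q+2$ whenever $\mathbf{A}$ violates the NCE condition, so condition~1 gives $\mathbf{A}\CI Y\mid(\mathbf{U},X_k)$. Because the $U_j$ are source nodes, conditioning on $\mathbf{U}$ opens no collider, so ``no $\mathbf{U}$-avoiding trek'' translates exactly into the independences of Definition~\ref{defi-proximal-criteria}; note this argument is indifferent to which $Q$ was used, so there are no false positives.

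I expect the main obstacle to be the bookkeeping in the soundness counts: I must guarantee that $\tau$ (and the self-trek) can be made sided-disjoint from a system of $q$ confounding treks. This needs the confounding endpoints chosen to avoid the endpoints of $\tau$ -- which is precisely why the row/column sets are padded to sizes $q+1$ and $q+2$ and why $Q$ is introduced -- and $\tau$ shortened so its internal vertices meet no further row- or column-set node. Establishing that the $q$ confounding treks themselves constitute a no-sided-intersection system, and hence the rank-$q$ lower bound, is exactly where the full-confounding hypotheses $c_{ij}\neq 0$, $\bm{\delta}_i\neq 0$ and rank-faithfulness are indispensable, as is the fact that the $\mathbf{U}$ are root confounders so that conditioning on them induces no spurious dependence.
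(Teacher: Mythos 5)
Your completeness direction (valid proxies plus the quadruple\mbox{-}disconnected $Q$ satisfy both rank conditions, via the explicit separators $(\emptyset,\{X_k\}\cup\mathbf{U})$ and $(\emptyset,\mathbf{U})$) is fine, and your NCO half is sound: since the $U_j$ are sources, a violation of $\mathbf{B} \CI (X_k,\mathbf{A}) \mid \mathbf{U}$ really is witnessed by a $\mathbf{U}$-avoiding trek, and your $(q{+}1)$-system then breaks condition~2 — this is in fact slightly cleaner than the paper, which only extracts $\mathbf{B}\CI X_k\mid\mathbf{U}$ from condition~2, gets $\mathbf{A}\CI\mathbf{B}\mid(\mathbf{U},X_k)$ from condition~1, and combines them by contraction. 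But the NCE half has a genuine gap: you translate a violation of $\mathbf{A} \CI Y \mid (\mathbf{U},X_k)$ into ``a trek from $\mathbf{A}$ to $Y$ avoiding both $\mathbf{U}$ and $X_k$.'' That translation is false, because the conditioning set contains $X_k$, which is \emph{not} a source, so conditioning can open a collider that is an ancestor of $X_k$. Concretely, take $q=1$ and treatments $A,V,W$ with $A \to V \leftarrow W$, $V \to X_k$, $W \to Y$, $X_k \to Y$, and $U$ pointing into every treatment and $Y$. The path $A \to V \leftarrow W \to Y$ d-connects $A$ and $Y$ given $(U,X_k)$ (the collider $V$ has descendant $X_k$), so the NCE condition fails; yet \emph{every} trek between $A$ and $Y$ passes through $U$ or $X_k$, so the witness your contraposition needs does not exist, and your recipe (self-trek at $X_k$, forbidden trek, $q$ confounding treks) cannot be assembled. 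Condition~1 does fail in this model, but only via a differently shaped system — e.g.\ $Q \leftarrow U \to B$; the trek $A \to V \to X_k$ pairing row $A$ with column $X_k$; and the trek with top $W$, left side $W \to V \to X_k$, right side $W \to Y$, pairing row $X_k$ with column $Y$ — which your argument never constructs. Your aside that ``conditioning on $\mathbf{U}$ opens no collider'' is true but covers only the NCO condition, whose conditioning set is $\mathbf{U}$ alone.

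The same example shows that your reading of $Q$ as ``purely padding'' is what leads you astray: in condition~1, $Q$ sits in the \emph{rows} and acts as an observed surrogate for $\mathbf{U}$, making $\boldsymbol{\Sigma}_{\{X_k,Q,\mathbf{A}\},\{X_k,Y,\mathbf{B}\}}$ a generically rank-equivalent stand-in for the augmented matrix $\boldsymbol{\Sigma}_{\{X_k\}\cup\mathbf{U}\cup\mathbf{A},\,\{X_k,Y\}\cup\mathbf{U}}$ whose rank characterizes the partial (in)dependence $\mathbf{A}\CI Y\mid(\mathbf{U},X_k)$; it is exactly what catches the collider-type dependence above (row $Q$ absorbs the $U$-route, freeing rows $X_k$ and $\mathbf{A}$ for $\mathbf{U}$-avoiding treks). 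The paper sidesteps your problem by arguing in the opposite direction: rank-faithfulness plus the trek-separation theorem converts condition~1 into a \emph{sided} separator $(\mathbf{C}_\mathbf{A},\mathbf{C}_\mathbf{B})$ with $\mathbf{C}_\mathbf{A}\cup\mathbf{C}_\mathbf{B}=\{X_k\}\cup\mathbf{U}$ and reads the independences off that, never passing through a claimed trek witness of d-connection. To repair your NCE step, drop the false equivalence and use the augmented-matrix rank criterion for vanishing partial correlation (or sided t-separation directly), keeping track of which side of the trek the cut vertices lie on.
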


\begin{Example-set}[$\mathcal{R}1$]
Let's consider the causal diagram shown in Figure \ref{Fig-model-example}. We consider the causal relationship $X_2 \to Y$ (Assumption \ref{Ass_nc_rank1} holds for $X_2 \to Y$). Let $\mathbf{A}=\{X_1\}$, $\mathbf{B}=\{X_6\}$, and $Q = X_3$. We check $\mathcal{R}1$ and obtain that 
	1) $\mathrm{rk}(\boldsymbol{\Sigma}_{\{X_2, X_3, X_1\},  \{X_2,Y, X_6\}}) \leq 2$, and  2) $\mathrm{rk}(\boldsymbol{\Sigma}_{\{X_2, X_1\},  \{X_3, X_6\}}) \leq 1$. These facts imply that $X_1$ and $X_6$ are valid NCE and NCO relative to $X_2 \to Y$, respectively.
\end{Example-set}

We next introduce another sufficient condition when there is no proper Quadruple-disconnected NC mentioned in Assumption \ref{Ass_nc_rank1} in the system.

\begin{Assumption}\label{Ass_nc_rank2}
    For a given causal relationship $X_k \to Y$ in the system, there exist at least $q+1$ variables in $\mathbf{X}$ that qualify as NCE and $q+1$ variables in $\mathbf{X}$ that qualify as NCO.
\end{Assumption}

{Assumption \ref{Ass_nc_rank2} states that apart from satisfying the minimum number of NCE and NCO, i.e., $q$ NCE and $q$ NCO, the system also requires at least one additional NCE and one additional NCO.}

\begin{Lemma}\label{Lemma-rule2}
Assume that the input data $\mathbf{X}$ and $Y$ strictly follow the Equation \ref{Eq-model-defi} and the rank-faithfulness holds. If Assumption \ref{Ass_nc_rank2} holds, then the underlying NCE and NCO relative to the causal relationship $X_k \to Y$ can be identified by using the following rule.
\begin{itemize}
\vspace{-4mm}
\item[$\mathcal{R}2$.] Let $\mathbf{A}$ and $\mathbf{B}$ be two disjoint subsets of $\mathbf{X}$, where $|\mathbf{A}|=q+1$ and $|\mathbf{B}|=q+1$. If 1) $\mathrm{rk}(\boldsymbol{\Sigma}_{\{X_k, \mathbf{A}\}, \{X_k,Y,\mathbf{B}\}}) \leq q+1$, and 2) $\mathrm{rk}(\boldsymbol{\Sigma}_{\{X_k, \mathbf{A}\},  \mathbf{B}}) \leq q$, then $\mathbf{A}$ and $\mathbf{B}$ are valid NCE and NCO relative to $X_k \to Y$ respectively.
\end{itemize}
\vspace{-2mm}
\end{Lemma}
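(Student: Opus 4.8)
The plan is to prove the contrapositive using the trek-separation theorem of \citet{Sullivant-T-separation,spirtes2013calculation-t-separation}, which, under the rank-faithfulness assumption (Definition \ref{Ass_rankfaithful}), identifies the rank of any sub-covariance block $\boldsymbol{\Sigma}_{\mathbf{S},\mathbf{T}}$ with the maximum number of treks from $\mathbf{S}$ to $\mathbf{T}$ having no sided intersection, equivalently with the size of a minimum trek-separator $(\mathbf{C}_{\mathbf{S}},\mathbf{C}_{\mathbf{T}})$. Throughout I would use two structural facts about the model of Eq.~\ref{Eq-model-defi}: the $q$ confounders $\mathbf{U}$ are mutually independent roots, each adjacent to every $X_i$ and to $Y$; and, since conditioning on the root set $\mathbf{U}$ cannot activate any collider, for $\mathbf{S},\mathbf{T}\subseteq\mathbf{X}\cup\{Y\}$ the statement $\mathbf{S}\CI\mathbf{T}\mid\mathbf{U}$ is equivalent to the nonexistence of any trek from $\mathbf{S}$ to $\mathbf{T}$ that avoids $\mathbf{U}$. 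The whole argument then reduces to counting sided-disjoint treks, and the surplus granted by Assumption \ref{Ass_nc_rank2} (a $(q{+}1)$-st NCE and NCO) is what supplies the slack needed to turn an invalid proxy into an extra disjoint trek.

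First I would treat the NCO claim from condition~2. Choosing one distinct confounder per pair, the length-one edges $U_i\to S$ and $U_i\to B_j$ give $q$ sided-disjoint treks from $\{X_k,\mathbf{A}\}$ to $\mathbf{B}$, so $\mathrm{rk}(\boldsymbol{\Sigma}_{\{X_k,\mathbf{A}\},\mathbf{B}})\ge q$ always; with condition~2 the rank is exactly $q$. I would then show that if $\mathbf{B}$ were not a valid NCO, i.e.\ $\mathbf{B}\nCI(X_k,\mathbf{A})\mid\mathbf{U}$, there is a $\mathbf{U}$-avoiding trek $\tau$ from some element of $\{X_k,\mathbf{A}\}$ to some $B_j$; because $|\mathbf{A}|=q+1$ and $|\mathbf{B}|=q+1$ each leave a spare endpoint, the $q$ confounder-treks can be re-routed through the remaining endpoints so as to be sided-disjoint from $\tau$, producing $q+1$ sided-disjoint treks and hence $\mathrm{rk}\ge q+1$, contradicting condition~2. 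Thus condition~2 forces $\mathbf{B}\CI(X_k,\mathbf{A})\mid\mathbf{U}$, the NCO criterion of Definition \ref{defi-proximal-criteria}.

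Next, for the NCE claim, I would apply the same count to condition~1 with the enlarged target set $\{X_k,Y,\mathbf{B}\}$. The $q$ confounder-treks together with the trivial self-trek at $X_k$ (source $X_k$ on both sides) give $q+1$ sided-disjoint treks, so $\mathrm{rk}(\boldsymbol{\Sigma}_{\{X_k,\mathbf{A}\},\{X_k,Y,\mathbf{B}\}})\ge q+1$, made an equality by condition~1. If $\mathbf{A}$ were not a valid NCE, some $A_i$ would causally affect $Y$ along a directed path avoiding $X_k$; being directed, this path is a trek from $A_i$ to $Y$ avoiding both $X_k$ and the roots $\mathbf{U}$. Retaining the $X_k$-self-trek, routing the $q$ confounder-treks from the $q$ unused elements of $\mathbf{A}$ to $q$ elements of $\mathbf{B}$, and appending this extra trek yields $q+2$ sided-disjoint treks, so $\mathrm{rk}\ge q+2$, contradicting condition~1. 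Hence condition~1 forces $\mathbf{A}\CI Y\mid(\mathbf{U},X_k)$, the NCE criterion.

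The main obstacle is making the disjointness rigorous: at each step I must guarantee that the extra trek can be chosen so that its \emph{internal} vertices, not merely its endpoints, avoid the re-routed confounder-treks, and that no larger disjoint trek system is overlooked. The cleanest rigorous route is the min-cut (trek-separator) side of the theorem: I would argue that under Assumption \ref{Ass_nc_rank2} the only trek-separator attaining size $q$ (resp.\ $q+1$) is $\mathbf{U}$ (resp.\ $\mathbf{U}\cup\{X_k\}$), so that meeting the rank bound is possible only when all relevant treks pass through $\mathbf{U}$ and $X_k$; here the $(q{+}1)$-st proxy on each side is exactly what excludes competing smaller separators, and rank-faithfulness ensures the generic rank attains the combinatorial maximum rather than degenerating. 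A secondary subtlety I would address is the conditioning on $X_k$ in the NCE criterion: because trek-separation is blind to collider activation, I must verify that within the model of Eq.~\ref{Eq-model-defi} the conditional-independence form $\mathbf{A}\CI Y\mid(\mathbf{U},X_k)$ coincides with the trek-theoretic statement ``no trek from $\mathbf{A}$ to $Y$ avoids $X_k$ and $\mathbf{U}$'', so that the rank computation certifies precisely the criterion of Definition \ref{defi-proximal-criteria} and, via Proposition \ref{Pro-Multi-Proxy-Estimator}, the consistency of the resulting effect estimate.
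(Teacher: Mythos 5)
Your proposal is correct and takes essentially the same approach as the paper: both rest on the trek-separation characterization of rank constraints of \citet{Sullivant-T-separation} under rank-faithfulness, and your contrapositive count of sided-disjoint treks is just the max-flow dual of the paper's forward argument, which extracts from each rank bound a trek-separator of size at most $q$ (resp.\ $q+1$) that the root structure of $\mathbf{U}$ and the surplus $(q{+}1)$-st proxies of Assumption \ref{Ass_nc_rank2} force to equal $\mathbf{U}$ (resp.\ $\{X_k\}\cup\mathbf{U}$) --- precisely the min-cut route you yourself single out as the rigorous version. The only minor differences are cosmetic and, if anything, in your favor: you read $\mathbf{B}\CI(X_k,\mathbf{A})\mid\mathbf{U}$ directly off condition~2, whereas the paper combines condition~2's $\mathbf{B}\CI X_k\mid\mathbf{U}$ with the byproduct $\mathbf{A}\CI\mathbf{B}\mid(\mathbf{U},X_k)$ of condition~1, and you explicitly flag the collider-activation subtlety in passing from t-separation to $\mathbf{A}\CI Y\mid(\mathbf{U},X_k)$, a step the paper's proof asserts without comment.
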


\begin{Example-set}[$\mathcal{R}2$]\label{Example-rank-rule2}
Continue to consider the causal diagram shown in Figure \ref{Fig-model-example}. We now consider the causal relationship $X_6 \to Y$. Let $\mathbf{A}=\{X_4,X_5\}$, and $\mathbf{B}=\{X_1,X_2\}$. We check $\mathcal{R}2$ and obtain that 
	1) $\mathrm{rk}(\boldsymbol{\Sigma}_{\{X_6, X_4, X_5\},  \{X_6,Y, X_1,X_2\}}) \leq 2$, and 2) $\mathrm{rk}(\boldsymbol{\Sigma}_{\{X_6, X_4, X_5\},  \{X_1, X_2\}}) \leq 1$. These facts imply that $\{X_4,X_5\}$ and $\{X_1,X_2\}$ are valid NCE and NCO relative to $X_6\to Y$ respectively.
\end{Example-set}

Building upon Lemmas \ref{Lemma-rule1} and \ref{Lemma-rule2},
we provide graphical conditions that are sufficient for the identifiability of NCE and NCO in terms of rank constraints.

\begin{Theorem}[Identifiability of NCE and NCO with Rank Constraints]\label{Theo-NCE-Rank}
Assume that the input data $\mathbf{X}$ and $Y$ strictly follow the Equation \ref{Eq-model-defi} and the rank-faithfulness holds.  
Then the underlying NCE and NCO relative to the causal relationship $X_k \to Y$ can be identified if Assumption \ref{Ass_nc_rank1} or \ref{Ass_nc_rank2} is satisfied.
\end{Theorem}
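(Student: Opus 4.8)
The plan is to obtain Theorem \ref{Theo-NCE-Rank} as a direct corollary of Lemmas \ref{Lemma-rule1} and \ref{Lemma-rule2} through a two-case analysis, so the whole argument hinges on what those lemmas already supply. First I would fix the meaning of ``identify'': one enumerates disjoint candidate subsets $\mathbf{A}, \mathbf{B} \subseteq \mathbf{X}$ of the prescribed cardinality (and, for $\mathcal{R}1$, an auxiliary $Q$), evaluates the two rank-deficiency constraints of the relevant rule on the population covariance matrix, and accepts $(\mathbf{A}, \mathbf{B})$ exactly when both constraints hold. Under rank-faithfulness, a rank drop of $\boldsymbol{\Sigma}_{\mathbf{A}, \mathbf{B}}$ holds in $P$ if and only if it is forced by the graph, which by the trek-separation characterization of the rank of a cross-covariance matrix \citep{Sullivant-T-separation} is equivalent to a purely graphical statement; this equivalence is the common engine of both lemmas and is what makes the rank tests informative about the proximal criteria.

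Next I would split on the hypothesis. If Assumption \ref{Ass_nc_rank1} holds, Lemma \ref{Lemma-rule1} guarantees that rule $\mathcal{R}1$ is sound, i.e.\ every accepted pair is a valid NCE/NCO, and, because Assumption \ref{Ass_nc_rank1} provides $q$ NCE, $q$ NCO, and a Quadruple-disconnected NC $Q^{*}$, that the true triple satisfies both rank constraints of $\mathcal{R}1$; hence the search returns a nonempty, correct set. If instead Assumption \ref{Ass_nc_rank2} holds, I invoke Lemma \ref{Lemma-rule2} and rule $\mathcal{R}2$ in exactly the same soundness-plus-existence manner, now using the extra NCE and extra NCO in place of $Q^{*}$. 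Since the two assumptions are the only hypotheses of the theorem, combining the cases closes the proof: whenever Assumption \ref{Ass_nc_rank1} or \ref{Ass_nc_rank2} is satisfied, the corresponding rule identifies the valid NCE and NCO.

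The main obstacle is therefore not the combination step, which is a routine case split, but the content pushed into the lemmas: one must show that the prescribed rank-deficiency pattern is \emph{equivalent} to the graphical proximal criteria rather than merely implied by them, so that the tests neither reject a genuine NCE/NCO pair nor accept a spurious one. The counterexample of Figure \ref{Fig-equivalent-proximal-example} shows precisely why the minimal Assumption \ref{Ass-minimal} fails: graphs (a), (b), and (c) agree on all rank constraints of the marginal matrix over $\{X_k, Y, Z, W\}$, so an extra variable is indispensable. I expect the delicate step to be verifying, via trek-separation, that appending $Q$ (under Assumption \ref{Ass_nc_rank1}) or an additional NCE/NCO row and column (under Assumption \ref{Ass_nc_rank2}) introduces exactly the trek that survives when the NCE spuriously affects $Y$ (as in graph (b)) or the NCO spuriously affects $X_k$ (as in graph (c)), thereby breaking the degeneracy so that soundness upgrades to genuine identifiability.
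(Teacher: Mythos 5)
Your proposal is correct and matches the paper's own proof, which is exactly the two-case split you describe: under Assumption \ref{Ass_nc_rank1} invoke Lemma \ref{Lemma-rule1} (rule $\mathcal{R}1$), and under Assumption \ref{Ass_nc_rank2} invoke Lemma \ref{Lemma-rule2} (rule $\mathcal{R}2$), with all the trek-separation and soundness-plus-existence content residing in the lemmas just as you anticipated.
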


\vspace{-4mm}
\subsection{Algorithm}\label{Section-Algorithm-proxy-rank}
In this section, we will leverage the above theoretical results and propose a data-driven method called Proxy-Rank to estimate the total causal effects of treatment $X_k \in \mathbf{X}$ on outcome $Y$:

\vspace{-3mm}
\rule{8.2cm}{1.2pt}
Proxy-Rank algorithm
\begin{description}[noitemsep,topsep=0pt,leftmargin=25pt]
    \item[1.] Given a $p$-dimensional treatments $\mathbf{X}$, outcome $Y$, the number of unmeasured confounders $q$. Initialize the sets of NCE, and NCO, causal effect, as $\mathcal{NCE}$, $\mathcal{NCO}$, and $\mathcal{C}$, respectively, with an empty set, i.e., $\mathcal{NCE} \coloneqq \emptyset$, $\mathcal{NCO} \coloneqq \emptyset$, and $\mathcal{C} \coloneqq \emptyset$.
    \item[2.]  Find valid NCE and NCO of unmeasured confounder $\mathbf{U}$ relative to per causal relationship $X_k \to Y$ according to Lemmas \ref{Lemma-rule1} and \ref{Lemma-rule2}.
    \item[3.] Estimate the corresponding unbiased causal effect by Proposition \ref{Pro-Multi-Proxy-Estimator} given NCE and NCO for per causal relationship $X_k \to Y$. Otherwise, output a value (NA) indicating the lack of knowledge to obtain the unbiased causal effect.
    \vspace{-2.0mm}
\end{description}
\vspace{-2mm}
\rule{8.2cm}{1.2pt}

The specific details of algorithm execution are provided in the Appendix \ref{Appendix-Algorithm-rank}.

We now show the correctness of the proposed algorithm. That is to say, for the causal relationships of interest, our algorithm outputs the true proxy variables and the unbiased estimation of causal effects.

\begin{Theorem}[Correctness]\label{Theorem-Corr-Rank}
Assume that the data $Y$ and $\mathbf{X}$ strictly follow the Equation \ref{Eq-model-defi} and the rank-faithfulness holds. Given infinite samples, the Proxy-Rank algorithm outputs the true causal effect $\mathcal{C}$ correctly.
\end{Theorem}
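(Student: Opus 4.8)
The plan is to obtain the correctness of Proxy-Rank as a composition of three facts already established: the exactness of the rank queries in the infinite-sample regime, the soundness of the identification rules $\mathcal{R}1$ and $\mathcal{R}2$ (Lemmas \ref{Lemma-rule1} and \ref{Lemma-rule2}), and the unbiasedness of the extended estimator (Proposition \ref{Pro-Multi-Proxy-Estimator}). The argument walks through the three steps of the algorithm in order, with rank-faithfulness (Definition \ref{Ass_rankfaithful}) serving as the bridge from the second-order statistics of the data to the underlying graphical structure.

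First I would establish that, given infinite samples, every rank query the algorithm issues in Step 2 returns the population rank. The empirical cross-covariance $\hat{\boldsymbol{\Sigma}}_{\mathbf{A},\mathbf{B}}$ converges to $\boldsymbol{\Sigma}_{\mathbf{A},\mathbf{B}}$, and since matrix rank is characterized by the vanishing of minors, in the limit $\mathrm{rk}(\hat{\boldsymbol{\Sigma}}_{\mathbf{A},\mathbf{B}}) = \mathrm{rk}(\boldsymbol{\Sigma}_{\mathbf{A},\mathbf{B}})$ for every pair of index sets tested. Hence each rank-deficiency check in $\mathcal{R}1$ and $\mathcal{R}2$ reflects the true population rank rather than a sampling artifact. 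Next I would connect these population ranks to the graph through rank-faithfulness: under Equation \ref{Eq-model-defi} together with Definition \ref{Ass_rankfaithful}, a rank-deficiency constraint holds in the distribution exactly when it is entailed by the structure, ruling out accidental (measure-zero) rank drops.

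This places us squarely in the hypotheses of Lemmas \ref{Lemma-rule1} and \ref{Lemma-rule2}. Consequently, for any candidate pair $(\mathbf{A},\mathbf{B})$ (augmented by $Q$ in the case of $\mathcal{R}1$) that passes the rank tests in Step 2, the corresponding lemma certifies that $\mathbf{A}$ and $\mathbf{B}$ are genuinely valid NCE and NCO of $\mathbf{U}$ relative to $X_k \to Y$; that is, the rules never mislabel an invalid proxy pair. When Assumption \ref{Ass_nc_rank1} or \ref{Ass_nc_rank2} holds for the relationship, Theorem \ref{Theo-NCE-Rank} further guarantees that at least one such qualifying pair exists and is therefore found, so the search in Step 2 does not return empty. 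Finally I would close with Proposition \ref{Pro-Multi-Proxy-Estimator}: whenever Step 2 delivers a valid $q$-dimensional NCE $\mathbf{A}$ and $q$-dimensional NCO $\mathbf{B}$, substituting them into Equation \ref{Eq-multiple-proximal-inference} in Step 3 returns exactly the true total effect $\beta_{X_k \to Y}$.

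Because the estimator is unbiased for \emph{every} valid pair, the value reported is independent of which qualifying pair the algorithm happens to select, so the output is well-defined; and for relationships where no assumption is met the algorithm emits NA, which is not an incorrect effect. Iterating over all $X_k \to Y$ then yields a correct $\mathcal{C}$. The main obstacle is not any single calculation but cleanly separating soundness from completeness: I must argue that passing the rank tests implies validity (no false positives) even before knowing which assumption holds, so that no gap opens between ``the algorithm reports an effect'' and ``Proposition \ref{Pro-Multi-Proxy-Estimator} applies.'' This soundness is precisely the content of the rule statements in Lemmas \ref{Lemma-rule1} and \ref{Lemma-rule2} under rank-faithfulness, so the theorem follows by direct composition of the three ingredients.
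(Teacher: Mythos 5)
Your proposal is correct and follows essentially the same route as the paper's own proof, which likewise establishes correctness by the case analysis mirroring the algorithm: Lemma \ref{Lemma-rule1} (under Assumption \ref{Ass_nc_rank1}) or Lemma \ref{Lemma-rule2} (under Assumption \ref{Ass_nc_rank2}) certifies the selected pair as valid NCE and NCO, Proposition \ref{Pro-Multi-Proxy-Estimator} then yields the true effect, and NA is emitted otherwise. Your additional observations---infinite-sample exactness of the rank queries, soundness of $\mathcal{R}1$/$\mathcal{R}2$ independent of which assumption holds, and well-definedness of the output across qualifying pairs---are refinements the paper leaves implicit rather than a different argument.
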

For more discussion on the consistency result and convergence rate of the above theorem, please refer to Appendix \ref{Appendix-Consistency-The2}.

We finally analyze the complexity of the Proxy-Rank algorithm. Let $q$ be the number of latent confounders, and $p$ be the number of treatments. There are two dominant parts. One dominant part is to check $\mathcal{R}_1$ of Lemma \ref{Lemma-rule1} with worst-case complexity is $\mathcal{O}\left(\frac{p!}{q! \cdot q! \cdot (p-2q-1)!}\right)$. The other dominant part is to check $\mathcal{R}_2$ of Lemma \ref{Lemma-rule2} with worst-case complexity is also $\mathcal{O}\left(\frac{p!}{q! \cdot q! \cdot (p-2q-1)!}\right)$. Hence, the worst-case complexity of the Proxy-Rank algorithm is $\mathcal{O}\left(\frac{p!}{q! \cdot q! \cdot (p-2q-1)!}\right)$.

\vspace{-2mm}
\section{Identifiability with Higher-Order Statistics}

In the above section, we have shown that the proxy variables can be identified with the help of rank constraints of the covariance matrix (second-order statistics) under some mild assumptions. However, if Assumption \ref{Ass_nc_rank1} or Assumption \ref{Ass_nc_rank2} is violated, e.g., the dimension of the sets of NCE and NCO are exactly $q$, the underlying NCE and NCO are not guaranteed to be identified with second-order statistics. To tackle this issue, below we show that we can benefit from higher-order statistics of the noise terms. 
Then, we provide another data-driven method with higher-order statistics for selecting valid proxy variables for a given causal relationship $X_k \to Y$ and obtaining its unbiased causal effect of $X_k$ on $Y$.

\vspace{-2mm}
\subsection{Identification of Proxy Variables with Higher-Order Statistics}\label{Subsection-Higher-Order-Statistics}

We assume the model of interest is a linear causal model with non-Gaussian error terms (also known as Linear, Non-Gaussian, Acyclic Model, shortly LiNGAM) \citep{shimizu2006linear}. Specifically, the assumption is as follows,

\begin{Assumption}\label{Ass-non-Gaussianity}
	The noise terms of variables follow non-Gaussian distributions.
\end{Assumption}
\vspace{-2mm}

Assumption \ref{Ass-non-Gaussianity} states the non-Gaussianity of data, which is expected to be ubiquitous, due to Cram\'{e}r Decomposition Theorem \citep{Cramer62}, as stated in \citet{spirtes2016causal}. Within the framework of this assumption, a significant body of research has already been initiated \citep{hyvarinen2010estimation, wang2020high,salehkaleybar2020learning,zhao2022learning}. For further reference, we recommend consulting the work of \citet{shimizu2022statistical}.

We next introduce an important constraint, Generalized Independent Noise (GIN) (which is an extension of the familiar Independent Noise (IN) constraint presented in \citet{shimizu2011directlingam}), which is an essential
constraint that exploits the non-Gaussianity (higher-order statistics) from data \citep{xie2020GIN,cai2019triad,xie2023generalized}.

\begin{Definition}[GIN Condition]\label{Definition-GIN}
Suppose all variables follow the linear non-Gaussian acyclic causal model. Let $\mathcal{Y}$, $\mathcal{Z}$ be two sets of random variables.  We say that $(\mathcal{Z},\mathcal{Y})$ follows the GIN condition if and only if $\omega^\intercal \mathcal{Y} \CI \mathcal{Z}$, where $\omega$ satisfies $\omega^\intercal \mathbb{E}(\mathcal{Y}\mathcal{Z}^\intercal) = \mathbf{0}$ and $\omega \neq \mathbf{0}$. 
\end{Definition}
By Darmois–Skitovich theorem \citep{darmois1953analyse,skitovitch1953property}\footnote{
Assume that ${V_1}$ and ${V_2}$ are  linear combinations of independent noise terms ${\varepsilon_i}(i = 1,...,n)$. If ${V_1}$ and ${V_2}$ are statistically independent, there are no common non-Gaussian noise terms between ${V_1}$ and ${V_2}$. See Theorem \ref{Theorem-DS} of Appendix. \ref{Appendix-Section-Proofs}.
}, GIN (that is linear transformation $\omega^\intercal \mathcal{Y} \CI \mathcal{Z}$) implies that $\omega^\intercal \mathcal{Y}$ shares no common non-Gaussian exogenous noise components with $\mathcal{Z}$. 

\textbf{\emph{A {Motivating} Example:}} To illustrate the intuitions behind it, we will begin by providing a straightforward example before presenting the theoretical results.
Let's consider the causal relationship $X_k \to Y$ shown in Figure \ref{Fig-equivalent-proximal-example}. Assume that the data are generated from a linear causal model with non-Gaussian error terms.
In the subgraph (a), $Z$ and $W$ are the valid NCE and NCO for the causal relationship $X_k \to Y$. We have that  $(\{X_k, Z\}, \{X_k, Y, W\})$ follows the GIN constraint, as explained below. The causal models of latent variables is $U =\varepsilon_{U}$, and 
$\{X_k,Y,W\}$ and $\{X_k,Z\}$ can then be represented as
\begin{nospaceflalign}\nonumber
        \underbrace{\left[\begin{matrix}
{{X_k}}\\
{{Y}}\\
{{W}}
\end{matrix}\right]}_{\mathcal{Y}} & =  {\left[\begin{matrix}
1 & {0}\\
\beta & {c+fd}\\
0 & {d}
\end{matrix}\right]}\left[\begin{matrix}
{{{{X_k}}}}\\
{{{{U}}}}
\end{matrix}\right] + \underbrace{\left[\begin{matrix}
{{0}}\\
{{f\varepsilon_{W}+\varepsilon_{{Y}}}}\\
{{\varepsilon_{{W}}}}
\end{matrix}\right]}_{\mathcal{E}_{\mathcal{Y}}},\\ \nonumber
\underbrace{\left[\begin{matrix}
{{X_k}}\\
{{Z}}
\end{matrix}\right]}_{\mathcal{Z}} &=  {\left[\begin{matrix}
1 &  0\\
0 &  a
\end{matrix}\right]}\left[\begin{matrix}
{{X_k}}\\
{{U}}
\end{matrix}\right] + \underbrace{\left[\begin{matrix}
{{0}}\\
{\varepsilon_{{Z}}}
\end{matrix}\right]}_{\mathcal{E}_{\mathcal{Z}}}.
\end{nospaceflalign}

According to the above equations, 
$\omega^\intercal \mathbb{E}(\mathcal{Y}\mathcal{Z}^{\intercal}) = \mathbf{0}$
$\Rightarrow$ $\omega = (d\beta, -d, c + df)^{\intercal}$. Then we can see $\omega^\intercal \mathcal{Y}= \omega^\intercal \mathcal{E}_{\mathcal{Y}}= c\varepsilon_{W} - d\varepsilon_{Y}$. By Darmois–Skitovich theorem, $\omega^\intercal \mathcal{Y}$ is independent of $\mathcal{Z}$ because there is no common non-Gaussian noise terms between $c\varepsilon_{W} - d\varepsilon_{Y}$ and $\mathcal{Z}$ (including noise terms $\varepsilon_{U}$ and $\varepsilon_{Z}$).
That is to say, $(\{X_k, Z\}, \{X_k, Y, W\})$ follows the GIN constraint.

Next, we discuss the subgraph (b), where $Z$ and $W$ are the invalid NCE and NCO, respectively, concerning the causal relationship $X_k \to Y$. $\{X_k,Y,W\}$ and $\{X_k,Z\}$ can then be represented as
\begin{nospaceflalign}\nonumber
    \underbrace{\left[\begin{matrix}
{{X_k}}\\
{{Y}}\\
{{W}}
\end{matrix}\right]}_{\mathcal{Y}} & =  {\left[\begin{matrix}
1 & {0}\\
\beta & {c+fd + ag}\\
0 & {d}
\end{matrix}\right]}\left[\begin{matrix}
{{{{X_k}}}}\\
{{{{U}}}}
\end{matrix}\right] + \underbrace{\left[\begin{matrix}
{{0}}\\
{{f\varepsilon_{W}+ a\varepsilon_{Z}+ \varepsilon_{{Y}}}}\\
{{\varepsilon_{{W}}}}
\end{matrix}\right]}_{\mathcal{E}_{\mathcal{Y}}},\\ \nonumber
\underbrace{\left[\begin{matrix}
{{X_k}}\\
{{Z}}
\end{matrix}\right]}_{\mathcal{Z}} &=  {\left[\begin{matrix}
1 &  0\\
0 &  a
\end{matrix}\right]}\left[\begin{matrix}
{{X_k}}\\
{{U}}
\end{matrix}\right] + \underbrace{\left[\begin{matrix}
{{0}}\\
{\varepsilon_{{Z}}}
\end{matrix}\right]}_{\mathcal{E}_{\mathcal{Z}}}.
\end{nospaceflalign}

We have $\omega^\intercal \mathcal{Y}$ is \emph{dependent} of $\mathcal{Z}$ because there exists common non-Gaussian noise terms $\varepsilon_Z$ between $\omega^\intercal \mathcal{Y}$ and $\mathcal{Z}$, no matter $\omega^\intercal \mathbb{E}(\mathcal{Y}\mathcal{Z}^{\intercal}) = \mathbf{0}$ or not.
That is to say, $(\{X_k, Z\}, \{X_k, Y, W\})$ violates the GIN constraint.
Assuming the distribution is rank-faithful to the graph, the above facts show that the lack of edge $Z \to Y$, i.e., the variable of NCE does not causally affect the primary outcome, has a testable implication with the help of non-Gaussianity. For further details regarding the above example, please refer to the Appendix. \ref{Appendix-Higher-Order}.

We now demonstrate that if valid NCE and NCO relative to a causal relationship $X_k \to Y$ exist (Assumption \ref{Ass-minimal} holds), we can identify them using GIN constraints.

\begin{Lemma}\label{Lemma-rule3}
Assume that the input data $\mathbf{X}$ and $Y$ strictly follow the Equation \ref{Eq-model-defi} and the rank-faithfulness holds. If Assumptions \ref{Ass-minimal} and \ref{Ass-non-Gaussianity} hold, then the underlying NCE and NCO relative to the causal relationship $X_k \to Y$ can be identified by using the following rule.
\begin{itemize}
\vspace{-4mm}
	\item[$\mathcal{R}3$.] Let $\mathbf{A}$ and $\mathbf{B}$ be two disjoint subsets of $\mathbf{X}$, where $|\mathbf{A}|=q$ and $|\mathbf{B}|=q$. If 1) $(\{X_k, \mathbf{A}\}, \{X_k, Y, \mathbf{B}\})$ follows the GIN constraint, and 2) $(\mathbf{B}, \{X_k, \mathbf{A}\})$ follows the GIN constraint, then $\mathbf{A}$ and $\mathbf{B}$ are valid NCE and NCO relative to $X_k \to Y$ respectively.
\end{itemize}
\vspace{-3mm}
\end{Lemma}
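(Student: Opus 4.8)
The plan is to translate each GIN constraint into a statement about \emph{shared non-Gaussian exogenous noise} via the Darmois–Skitovich theorem, and then show that this noise-sharing pattern is exactly equivalent to the proximal criteria. First I would pass to the reduced form of Eq.~\ref{Eq-model-defi}: solving for $\mathbf{X}$ gives $\mathbf{X}=(\mathbf{I}-\mathbf{B})^{-1}(\mathbf{C}\mathbf{U}+\varepsilon_{\mathbf{X}})$, so every variable appearing in $\{X_k,\mathbf{A}\}$, $\{X_k,Y,\mathbf{B}\}$, and $\mathbf{B}$ is a linear combination of the mutually independent noises $\{\mathbf{U},\varepsilon_{\mathbf{X}},\varepsilon_{Y}\}$. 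Mirroring the motivating example, I would write each $\mathcal{Y}$-set as a linear map of a ``source'' vector plus a residual noise term $\mathcal{E}_{\mathcal{Y}}$: for the first constraint of $\mathcal{R}3$ the sources to be cancelled are $\{X_k,\mathbf{U}\}$ (dimension $q+1$, since $X_k$ also lies in $\mathcal{Z}$), while for the second constraint only $\mathbf{U}$ must be cancelled (dimension $q$), deliberately leaving the idiosyncratic noises of $X_k$ and $\mathbf{A}$ to survive.

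Next I would establish that the weight $\omega$ of Definition~\ref{Definition-GIN} is forced. In both constraints $|\mathcal{Y}|=|\mathcal{Z}|+1$, so $\mathbb{E}(\mathcal{Y}\mathcal{Z}^{\intercal})$ has a one-dimensional left null space generically and $\omega$ is unique up to scale. When $\mathbf{A},\mathbf{B}$ are valid, the residual $\mathcal{E}_{\mathcal{Y}}$ is uncorrelated with $\mathcal{Z}$, which lets me identify the left null space of $\mathbb{E}(\mathcal{Y}\mathcal{Z}^{\intercal})$ with the left null space of the source-loading matrix; the latter is full column rank because the relevant $\mathbf{U}$-loading blocks of $\mathbf{A}$ and $\mathbf{B}$ are square and invertible (a generic consequence of rank-faithfulness, cf.\ Remark~\ref{remark-necessary-conditions}). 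Hence the forced $\omega$ annihilates the sources, so $\omega^{\intercal}\mathcal{Y}=\omega^{\intercal}\mathcal{E}_{\mathcal{Y}}$ is a combination of only the idiosyncratic noises $\varepsilon_{Y},\varepsilon_{\mathbf{B}}$ (first constraint) or $\varepsilon_{X_k},\varepsilon_{\mathbf{A}}$ (second constraint). This gives the forward direction: when $\mathbf{A}$ is a valid NCE and $\mathbf{B}$ a valid NCO, those residual noises never appear in the corresponding $\mathcal{Z}$, so both GIN constraints hold, and together with Assumption~\ref{Ass-minimal} this guarantees that at least one valid pair passes the test.

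For the direction the lemma literally asserts (GIN $\Rightarrow$ validity), I would argue the contrapositive. If $\mathbf{A}$ is not a valid NCE, some path from $\mathbf{A}$ to $Y$ avoids $\{X_k,\mathbf{U}\}$, so after cancelling the sources the residual of $Y$ (or of $\mathbf{B}$) retains a noise term $\varepsilon_{A}$ with $A\in\mathbf{A}$, which also occurs in $\mathcal{Z}=\{X_k,\mathbf{A}\}$; by Darmois–Skitovich this shared non-Gaussian component makes $\omega^{\intercal}\mathcal{Y}$ dependent on $\mathcal{Z}$, so the first constraint of $\mathcal{R}3$ fails. Symmetrically, if $\mathbf{B}$ is not a valid NCO, an $X_k$- or $\mathbf{A}$-originating noise enters $\mathbf{B}$, so the surviving residual $\omega^{\intercal}\{X_k,\mathbf{A}\}$ shares non-Gaussian noise with $\mathbf{B}$ and the second constraint fails. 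Thus both constraints holding forces validity, completing the identification.

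The main obstacle is this contrapositive step, specifically ruling out \emph{accidental cancellations}: the forced $\omega$ might in principle assign coefficient zero to the offending shared noise even when validity fails. Here non-Gaussianity (Assumption~\ref{Ass-non-Gaussianity}) is indispensable, since Darmois–Skitovich yields dependence only when the shared component is non-Gaussian and enters both sides with nonzero coefficient; rank-faithfulness is then invoked to guarantee those coefficients are generically nonzero, so cancellation can occur only on a Lebesgue-measure-zero parameter set that faithfulness excludes. A secondary technical point, which is exactly where the bookkeeping $|\mathbf{A}|=|\mathbf{B}|=q$ is used, is verifying that the $\mathbf{U}$-loading blocks are square and invertible, so that ``$\omega\perp$ cross-covariance'' and ``$\omega\perp$ sources'' genuinely coincide and the forced $\omega$ really does cancel the sources.
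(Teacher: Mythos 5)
Your proposal is correct in substance, but it takes a genuinely different route from the paper's own proof. The paper never applies the Darmois--Skitovich theorem directly in the proof of Lemma~\ref{Lemma-rule3}: instead it invokes the \emph{Graphical Representation of GIN Constraints} theorem (quoted in Appendix~\ref{Appendix-Section-Proofs} from prior work on GIN), which converts each GIN constraint into a t-separation statement of the form ``$(\emptyset,\mathcal{S})$ t-separates $\mathcal{Z}$ from $\mathcal{Y}$'' with $|\mathcal{S}|\leq q+1$ (respectively $\leq q$); a structural argument---every trek between treatments, or between a treatment and $Y$, that does not pass through $X_k$ must pass through $\mathbf{U}$, because all entries of $\mathbf{C}$ are nonzero---then forces $\mathcal{S}=\{X_k,\mathbf{U}\}$ for the first constraint and $\mathcal{S}=\mathbf{U}$ for the second, which yields exactly the proximal criteria $\mathbf{A}\CI Y\mid(\mathbf{U},X_k)$ and $\mathbf{B}\CI(X_k,\mathbf{A})\mid\mathbf{U}$, with existence of a passing pair supplied by Assumptions~\ref{Ass-minimal} and~\ref{Ass-non-Gaussianity}. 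You instead re-derive from scratch the special case of that characterization which the lemma needs: reduced form over the independent noises, a generically unique $\omega$ that must annihilate the sources $\{X_k,\mathbf{U}\}$ (respectively $\mathbf{U}$), and a contrapositive Darmois--Skitovich argument, with rank-faithfulness ruling out accidental cancellations. Your route buys self-containedness---it upgrades the paper's motivating computation in Appendix~\ref{Appendix-Higher-Order} into a full proof and makes explicit exactly where non-Gaussianity and genericity enter---while the paper's route buys brevity and a cleaner passage from GIN constraints to the precise conditional independencies via t-separation, outsourcing your ``accidental cancellation'' worry to the cited theorem. One imprecision to repair if you write your version out: when $\mathbf{A}\CI Y\mid(\mathbf{U},X_k)$ fails, the surviving shared noise need not be $\varepsilon_{A}$ for some $A\in\mathbf{A}$; the offending trek may have as its top a treatment $X_j\notin\mathbf{A}\cup\{X_k\}$ that is an ancestor of both $\mathbf{A}$ and $Y$, so the shared component is $\varepsilon_{X_j}$, and symmetrically a violation of the NCO condition can arise from an edge out of $\mathbf{B}$ into $\mathbf{A}$, so the shared noise can originate in $\mathbf{B}$ itself. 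Your general argument---some non-Gaussian noise appears on both sides with generically nonzero coefficients, hence dependence by Darmois--Skitovich---applies verbatim in these cases, so this is a matter of wording rather than a gap.
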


Note that the result of Lemma \ref{Lemma-rule3} does not strictly require adherence to Assumption 3-that is, not all noise variables need to follow a non-Gaussian distribution. For further discussion, please see Appendix \ref{Appendix-Section-Assumption-Lemma3}.

\begin{Example-set}[$\mathcal{R}3$]
	Consider the causal diagram shown in Figure \ref{Fig-model-example}. We consider the causal relationship $X_2 \to Y$. Assume that the data are generated from a linear non-Gaussian acyclic causal model. Let $\mathbf{A}=\{X_1\}$, and $\mathbf{B}=\{X_6\}$. We check $\mathcal{R}3$ and obtain that $(\{X_2, X_1\}, \{X_2, Y, X_6\})$ follows the GIN constraint, and 2) that $(X_6, \{X_2, X_1\})$ follows the GIN constraint. These facts imply that $X_1$ and $X_6$ are valid NCE and NCO relative to $X_2 \to Y$ respectively.
\end{Example-set}

Based on Lemma \ref{Lemma-rule3}, we present the identifiability of NCE and NCO in terms of GIN constraints.

\begin{Theorem}[Identifiability of NCE and NCO with GIN Constraints]\label{Theo-NCE-GIN}
Assume that the input data $\mathbf{X}$ and $Y$ strictly follow the Equation \ref{Eq-model-defi} and the rank-faithfulness holds. Furthermore, assume Assumption \ref{Ass-non-Gaussianity} holds. 
Then the underlying NCE and NCO relative to the causal relationship $X_k \to Y$ can be identified if Assumption \ref{Ass-minimal} holds.
\end{Theorem}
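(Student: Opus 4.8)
The plan is to derive Theorem~\ref{Theo-NCE-GIN} by combining Lemma~\ref{Lemma-rule3} with an existence argument. Lemma~\ref{Lemma-rule3} already supplies soundness: every disjoint pair $(\mathbf{A},\mathbf{B})$ with $|\mathbf{A}|=|\mathbf{B}|=q$ that passes both GIN tests of rule $\mathcal{R}3$ is a genuine NCE/NCO pair. To turn this into identifiability it remains to establish the complementary completeness direction, namely that under Assumption~\ref{Ass-minimal} at least one true negative-control pair actually triggers $\mathcal{R}3$. Accordingly I would fix a valid NCE set $\mathbf{A}_0$ and NCO set $\mathbf{B}_0$ of cardinality $q$ each---which exist by Assumption~\ref{Ass-minimal}---disjoint from one another and from $X_k$, and prove that both $(\{X_k,\mathbf{A}_0\},\{X_k,Y,\mathbf{B}_0\})$ and $(\mathbf{B}_0,\{X_k,\mathbf{A}_0\})$ obey the GIN condition.

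First I would write, using the linear model of Equation~\ref{Eq-model-defi}, the stacked vectors $\mathcal{Y}=\{X_k,Y,\mathbf{B}_0\}$ and $\mathcal{Z}=\{X_k,\mathbf{A}_0\}$ in reduced form as $\mathcal{Y}=\mathbf{M}_{\mathcal{Y}}\mathbf{F}+\mathbf{n}_{\mathcal{Y}}$ and $\mathcal{Z}=\mathbf{M}_{\mathcal{Z}}\mathbf{F}+\mathbf{n}_{\mathcal{Z}}$, where $\mathbf{F}=(X_k,\mathbf{U}^\intercal)^\intercal$ collects the $q+1$ sources shared by the two blocks, mirroring the $q=1$ computation preceding the lemma. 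The proximal criteria control the residuals: since $\mathbf{A}_0$ does not causally affect $Y$, the exogenous terms surviving in $\mathbf{n}_{\mathcal{Y}}$ are those idiosyncratic to $Y$ and $\mathbf{B}_0$, and since $\mathbf{B}_0$ is not caused by $X_k$ or $\mathbf{A}_0$, none of these terms propagate into $\mathcal{Z}$, giving $\mathbb{E}[\mathbf{n}_{\mathcal{Y}}\mathcal{Z}^\intercal]=\mathbf{0}$. Hence $\mathbb{E}[\mathcal{Y}\mathcal{Z}^\intercal]=\mathbf{M}_{\mathcal{Y}}\,\mathbb{E}[\mathbf{F}\mathcal{Z}^\intercal]$, whose right factor is a square $(q+1)\times(q+1)$ matrix; as $\mathcal{Y}$ has $q+2$ coordinates, the left null space of $\mathbb{E}[\mathcal{Y}\mathcal{Z}^\intercal]$ is nontrivial and yields a nonzero $\omega$ with $\omega^\intercal\mathbb{E}[\mathcal{Y}\mathcal{Z}^\intercal]=\mathbf{0}$, as the GIN definition requires.

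The crux is to upgrade this covariance identity to an exact cancellation of the shared factors. Assuming $\mathbb{E}[\mathbf{F}\mathcal{Z}^\intercal]$ has full rank $q+1$, the relation $\omega^\intercal\mathbf{M}_{\mathcal{Y}}\,\mathbb{E}[\mathbf{F}\mathcal{Z}^\intercal]=\mathbf{0}$ forces $\omega^\intercal\mathbf{M}_{\mathcal{Y}}=\mathbf{0}$, so $\omega^\intercal\mathcal{Y}=\omega^\intercal\mathbf{n}_{\mathcal{Y}}$ reduces to a linear combination of exogenous terms idiosyncratic to $Y$ and $\mathbf{B}_0$. Because the proximal criteria prevent any of these from reaching $\{X_k,\mathbf{A}_0\}$, the combination shares no common non-Gaussian source with $\mathcal{Z}$, and the Darmois--Skitovich theorem (Theorem~\ref{Theorem-DS}) yields $\omega^\intercal\mathcal{Y}\CI\mathcal{Z}$, i.e.\ the first GIN constraint. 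The second constraint $(\mathbf{B}_0,\{X_k,\mathbf{A}_0\})$ follows symmetrically: now $\mathbf{U}$ is the sole shared source, the relevant cross-covariance has rank at most $q$, its annihilator reduces $\omega^\intercal\{X_k,\mathbf{A}_0\}$ to noise idiosyncratic to $X_k$ and $\mathbf{A}_0$, and Darmois--Skitovich again gives independence from $\mathbf{B}_0$. Thus $(\mathbf{A}_0,\mathbf{B}_0)$ passes $\mathcal{R}3$; combined with the soundness supplied by Lemma~\ref{Lemma-rule3}, scanning all disjoint size-$q$ pairs identifies the NCE/NCO, and Proposition~\ref{Pro-Multi-Proxy-Estimator} then returns the true effect.

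I expect the main obstacle to be the two coupled bookkeeping steps that the $q=1$ example conceals. First, passing from the covariance statement $\omega^\intercal\mathbb{E}[\mathcal{Y}\mathcal{Z}^\intercal]=\mathbf{0}$ to the algebraic cancellation $\omega^\intercal\mathbf{M}_{\mathcal{Y}}=\mathbf{0}$ requires $\mathbb{E}[\mathbf{F}\mathcal{Z}^\intercal]$ to be full rank $q+1$, which I would justify as a generic, rank-faithful consequence of $\{X_k,\mathbf{A}_0\}$ jointly spanning the information in $\mathbf{F}=(X_k,\mathbf{U})$; this is what pins $\omega$ down up to scale and guarantees that no confounder loading, hence no $\varepsilon_{\mathbf{U}}$, survives in $\omega^\intercal\mathcal{Y}$. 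Second, with several treatments and confounders present one must track precisely which exogenous terms enter each block and confirm that the NCE criterion (no $\mathbf{A}_0$-noise reaching $Y$) and the NCO criterion (no $X_k$- or $\mathbf{A}_0$-noise reaching $\mathbf{B}_0$) rule out every shared non-Gaussian component; this is exactly the step whose failure manifests as the surviving $\varepsilon_{\mathbf{A}_0}$ that breaks independence in the invalid subgraph~(b) of the motivating example.
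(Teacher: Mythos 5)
Your proposal is correct, and it takes a genuinely different (and more explicit) route than the paper on the half of the argument that matters. The paper's own proof of Theorem~\ref{Theo-NCE-GIN} is a one-line citation of Lemma~\ref{Lemma-rule3}: the soundness direction (any pair passing $\mathcal{R}3$ is a valid NCE/NCO) is established there via the if-and-only-if graphical characterization of GIN constraints (t-separation by the ordered pairs $(\emptyset,\{X_k,\mathbf{U}\})$ and $(\emptyset,\mathbf{U})$), while the completeness direction---that under Assumption~\ref{Ass-minimal} a genuine pair actually passes $\mathcal{R}3$---is merely asserted (``such sets $\mathbf{A}$ and $\mathbf{B}$ must exist''), implicitly leaning on the converse of that same characterization. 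You instead prove completeness from first principles: reduced forms with shared sources $\mathbf{F}=(X_k,\mathbf{U}^\intercal)^\intercal$, a dimension count producing a nonzero $\omega$, the full-rank step forcing $\omega^\intercal\mathbf{M}_{\mathcal{Y}}=\mathbf{0}$, and disjointness of the surviving noise supports. (Minor point: for this direction Darmois--Skitovich is not actually needed---functions of disjoint sets of independent noises are automatically independent; the theorem is only essential for the soundness direction, where the paper does invoke it.) What each approach buys: the paper's route is shorter and uniform with the rank-constraint proofs of Lemmas~\ref{Lemma-rule1} and~\ref{Lemma-rule2}, while yours makes visible exactly where the model's structural assumptions enter. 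One caveat you should state explicitly rather than treat as generic bookkeeping: your step $\mathbb{E}[\mathbf{n}_{\mathcal{Y}}\mathcal{Z}^\intercal]=\mathbf{0}$ (``no shared non-Gaussian source'') is not implied by Definition~\ref{defi-proximal-criteria} alone in the multi-treatment model. A third treatment $X_j$ with $X_j\to X_k$ and $X_j\to Y$ is compatible with a Definition-1-valid disconnected pair $(\mathbf{A}_0,\mathbf{B}_0)$, yet it places $\varepsilon_{X_j}$ both in the residual of $Y$ and in $X_k\in\mathcal{Z}$, so the first GIN constraint fails for the valid pair (graphically, the trek $(X_k\leftarrow X_j;\,X_j\to Y)$ is not t-separated by $(\emptyset,\{X_k,\mathbf{U}\})$ since $X_j\notin\{X_k,\mathbf{U}\}$). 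The paper makes the same implicit exclusion---its Lemma-\ref{Lemma-rule3} proof asserts that all relevant treks pass through $\{X_k,\mathbf{U}\}$, and Proposition~\ref{Pro-Multi-Proxy-Estimator}'s back-door step likewise requires $\mathbf{U}$ to block every back-door path from $X_k$ to $Y$---so your argument matches the paper's level of rigor here; it would be strengthened by naming this exclusion as a hypothesis instead of folding it into the noise bookkeeping.
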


\vspace{-2mm}
\subsection{Algorithm}\label{Section-Algorithm-GIN}
In this section, we will leverage the above theoretical results and propose another data-driven method called Proxy-GIN to estimate the total causal effects of treatment $X_k \in \mathbf{X}$ on outcome $Y$:
\vspace{-1.5mm}
\rule{8.2cm}{1.2pt}
Proxy-GIN algorithm
\begin{description}[noitemsep,topsep=0pt,leftmargin=25pt]
    \item[1.] Given a $p$-dimensional treatments $\mathbf{X}$, outcome $Y$, the number of unmeasured confounders $q$. Initialize the sets of NCE, and NCO, causal effect, as $\mathcal{NCE}$, $\mathcal{NCO}$, and $\mathcal{C}$, respectively, with an empty set, i.e., $\mathcal{NCE} \coloneqq \emptyset$, $\mathcal{NCO} \coloneqq \emptyset$, and $\mathcal{C} \coloneqq \emptyset$.
    \item[2.]  Find valid NCE and NCO of unmeasured confounder $\mathbf{U}$ relative to per causal relationship $X_k \to Y$ according to Lemmas \ref{Lemma-rule3}.
    \item[3.] Estimate the corresponding unbiased causal effect by Proposition \ref{Pro-Multi-Proxy-Estimator} given NCE and NCO for per causal relationship $X_k \to Y$. Otherwise, output a value (NA) indicating the lack of valid NCE and NCO for this causal relationship $X_k \to Y$ to obtain the unbiased causal effect.
    \vspace{-2mm}
\end{description}
\vspace{-2mm}
\rule{8.2cm}{1.2pt}

The specific details of algorithm execution are provided in the Appendix \ref{Appendix-Algorithm-GIN}.

We now show that, in the large sample limit, for the causal relationships of interest, our algorithm outputs the true proxy variables and the unbiased estimation of causal effect.

\begin{Theorem}[Correctness]\label{Theorem-Corre-GIN}
Assume that the input data $\mathbf{X}$ and $Y$ strictly follow the Equation \ref{Eq-model-defi} and the rank-faithfulness holds. Furthermore, assume that Assumption \ref{Ass-non-Gaussianity} holds. Given infinite samples, the Proxy-GIN algorithm outputs the true causal effect $\mathcal{C}$ correctly.
\end{Theorem}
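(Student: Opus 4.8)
The plan is to decompose the correctness claim into two pieces that mirror the two substantive steps of the algorithm: (i) that Step 2 selects genuinely valid NCE and NCO whenever such proxies exist, and (ii) that Step 3, once fed such valid proxies, returns the exact total effect. The second piece is immediate from Proposition \ref{Pro-Multi-Proxy-Estimator}: given $q$-dimensional valid NCE $\mathbf{Z}$ and NCO $\mathbf{W}$ of the $q$ unmeasured confounders, the determinant ratio in Eq. \ref{Eq-multiple-proximal-inference} equals $\beta_{X_k \to Y}$, and its denominator is nonzero because the full-rank necessary conditions recorded in Remark \ref{remark-necessary-conditions} hold for valid proxies. Thus the whole burden reduces to the identification step.

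For the identification step I would first argue that, in the infinite-sample limit, every GIN test invoked by rule $\mathcal{R}3$ is decided correctly at the population level. Concretely, the cross-moment matrix $\mathbb{E}(\mathcal{Y}\mathcal{Z}^{\intercal})$ is known exactly, so the vector $\omega$ with $\omega^{\intercal}\mathbb{E}(\mathcal{Y}\mathcal{Z}^{\intercal})=\mathbf{0}$ is pinned down, and the residual independence $\omega^{\intercal}\mathcal{Y} \CI \mathcal{Z}$ either holds or fails exactly. Under Assumption \ref{Ass-non-Gaussianity}, the Darmois--Skitovich theorem makes this independence equivalent to the absence of shared non-Gaussian exogenous components, so the test faithfully reflects the underlying structure. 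Combining this population-exactness with the soundness half of $\mathcal{R}3$ (supplied by Lemma \ref{Lemma-rule3}) yields the no-false-positives property: any pair $(\mathbf{A},\mathbf{B})$ passing both GIN checks is a genuinely valid NCE/NCO pair.

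Next I would invoke Assumption \ref{Ass-minimal} together with Lemma \ref{Lemma-rule3} to obtain completeness: whenever valid $q$-dimensional proxies exist, some such pair satisfies both GIN constraints, so the exhaustive search in Step 2 is guaranteed to find one. Pairing completeness with the no-false-positives property shows that Step 2 returns a valid $(\mathbf{Z},\mathbf{W})$ exactly when Assumption \ref{Ass-minimal} holds for $X_k \to Y$; the reduction above then makes Step 3 output the true $\beta_{X_k \to Y}$, which is recorded in $\mathcal{C}$. When Assumption \ref{Ass-minimal} fails, soundness ensures that no candidate pair passes the GIN checks, so the algorithm correctly emits NA rather than a biased value. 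Iterating over every treatment $X_k$ then assembles the correct set $\mathcal{C}$.

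I expect the main obstacle to be the soundness direction, namely ruling out spurious pairs $(\mathbf{A},\mathbf{B})$ that are not valid proxies yet accidentally satisfy both GIN constraints. This is precisely where rank-faithfulness and non-Gaussianity do the essential work: faithfulness forbids the parameter cancellations that would let an invalid configuration mimic the moment structure of a valid one, while the Darmois--Skitovich characterization guarantees that the second GIN check genuinely certifies that $\mathbf{B}$ carries no exogenous component of $X_k$ (so $\mathbf{B}$ is not downstream of $X_k$) and the first certifies that $\mathbf{A}$ exerts no direct effect on $Y$, thereby excluding the violating structures of Figure \ref{Fig-equivalent-proximal-example}(b),(c). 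Establishing this exclusion in full generality, for arbitrary $q$ and arbitrary intra-$\mathbf{X}$ edges, is the delicate part; at the level of the theorem, however, it is exactly the content of Lemma \ref{Lemma-rule3} and may be invoked directly, leaving only the infinite-sample exactness argument and the NA bookkeeping to verify here.
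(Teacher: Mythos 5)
Your proposal is correct and follows essentially the same route as the paper: the paper's proof likewise reduces correctness to Lemma \ref{Lemma-rule3} (rule $\mathcal{R}3$ soundly and completely identifies valid NCE/NCO whenever Assumption \ref{Ass-minimal} holds) combined with Proposition \ref{Pro-Multi-Proxy-Estimator} (the determinant-ratio estimator is exact given valid proxies), plus the observation that NA is emitted when no pair passes the checks. Your additional remarks on population-level exactness of the GIN tests in the infinite-sample limit and the explicit soundness/completeness split are elaborations of that same argument rather than a different approach.
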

\vspace{-2mm}

We finally analyze the complexity of the Proxy-GIN algorithm. Let $q$ be the number of latent confounders, and $p$ be the number of treatments. The dominant part is to check $\mathcal{R}_3$ of Lemma \ref{Lemma-rule3} with worst-case complexity is $\mathcal{O}\left(\frac{p!}{(q+1)! \cdot (q+1)! \cdot (p-2q-3)!}\right)$. Hence, the worst case complexity of the Proxy-GIN algorithm is $\mathcal{O}\left(\frac{p!}{(q+1)! \cdot (q+1)! \cdot (p-2q-3)!}\right)$.

\begin{figure*}[t]
	\setlength{\abovecaptionskip}{0pt}
	\setlength{\belowcaptionskip}{-6pt}
	\begin{center}
		\includegraphics[height=0.20\textwidth,width=0.9\textwidth]{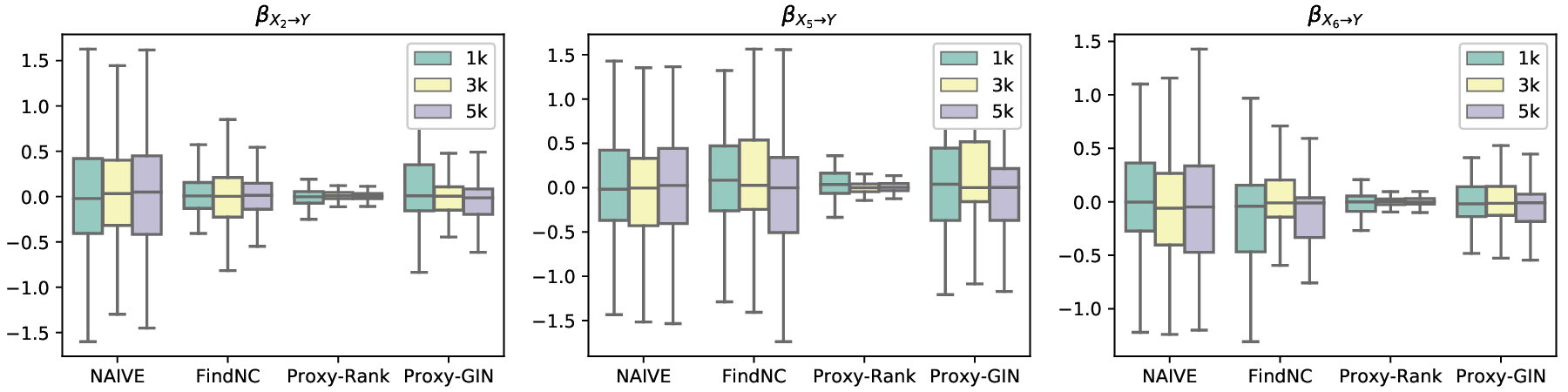}
		\vspace{-2mm}
		\caption{Performance of NAIVE, FindNC, Proxy-Rank, and Proxy-GIN on the Gaussian case.}
		\label{Fig-simulation-Gaussian-results} 
	\end{center}
	\vspace{-3mm}
\end{figure*}
\begin{figure*}[htp]
	\setlength{\abovecaptionskip}{0pt}
	\setlength{\belowcaptionskip}{-6pt}
	\begin{center}
		\includegraphics[height=0.20\textwidth,width=0.9\textwidth]{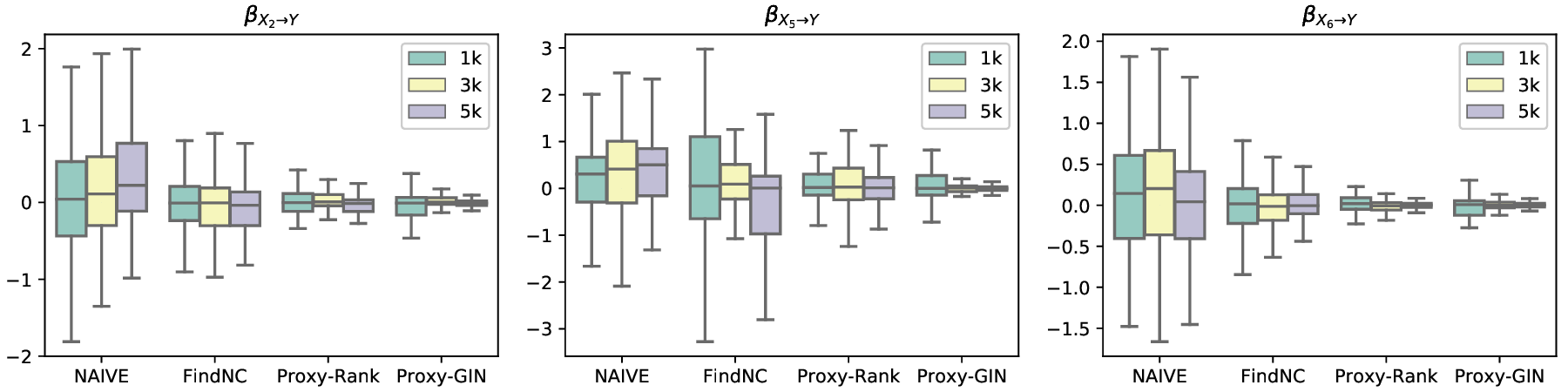}
		\vspace{-2mm}
		\caption{Performance of NAIVE, FindNC, Proxy-Rank, and Proxy-GIN on the Non-Gaussian case.}
		\label{Fig-simulation-non-Gaussian-results} 
	\end{center}
	\vspace{-4mm}
\end{figure*}

\vspace{-2mm}
\section{Experimental Results on Synthetic Data}
In this section, we evaluate the performance of the proposed methods in estimating causal effects from synthetic data. We here consider the following two typical settings:
    \textbf{Gaussian case:} The data are generated according to the causal graph in Figure \ref{Fig-model-example}, with the noise terms being generated from standard normal distributions; 
    \textbf{Non-Gaussian case}: The data are generated according to the graph obtained by removing variable $X_3$ from Figure \ref{Fig-model-example}, with the noise terms being generated from standard exponential distributions.
In three cases, the connected coefficient $\beta_{k}$ is sampled from a uniform distribution between $[-1,1]$.
Note that in the \emph{Gaussian case}, either Assumption \ref{Ass_nc_rank1} or \ref{Ass_nc_rank2} holds for the causal relationships $X_2 \to Y$, $X_5 \to Y$, and $X_6 \to Y$. However, in the \emph{non-Gaussian case}, both Assumption \ref{Ass_nc_rank1} and \ref{Ass_nc_rank2} are violated for the causal relationships $X_2 \to Y$ and $X_5 \to Y$, but Assumption \ref{Ass-minimal} holds. As a result, we will focus on these three causal relationships in this context.

The methods we compare against are: 1) NAIVE, the least-squares regression coefficient of $Y$ on $X_k \in \mathbf{X}$; 2) FindNC, the algorithm 1 of in~\cite{kummerfeld2022data} + standard confounder proxy estimator; 3) Proxy-Rank, our method, using T. W. Anderson's canonical correlation-based rank test  \citep{Anderson1984} to evaluate the Rank constraint, and 4) Proxy-GIN, our method, using HSIC-based independence test~\citep{zhang2018large} to evaluate the GIN constraint, due to the non-Gaussianity of the data. For the sake of comparison, if the algorithm fails to find the valid NCE and NCO for a causal relationship, we will randomly select variables as NCE and NCO to estimate the causal effect of interest.
Each experiment was repeated 100 times with randomly generated data and the results were averaged.  The sample size is selected from $\{1,000 (1k), 3,000 (3k), 5,000 (5k)\}$. The source code is in the Supplementary file.

Figures \ref{Fig-simulation-Gaussian-results} $\sim$ \ref{Fig-simulation-non-Gaussian-results} summarize the bias of the estimators of each parameter. As expected, our proposed Proxy-Rank algorithm almost outperforms other methods
(with little bias for all causal effects) in all two cases (except for $\beta_{X_2 \to Y}$ and $\beta_{X_5 \to Y}$ in the Non-Gaussian case), with all sample sizes.
The reason that $\beta_{X_2 \to Y}$ and $\beta_{X_5 \to Y}$ cannot be consistently estimated by the Proxy-Rank algorithm is that, in the Non-Gaussian case, there are no valid NCE and NCO for the causal relationship $X_2 \to Y$ and $X_5 \to Y$ in the ground-truth graph.
The Proxy-GIN algorithm outperforms the NAIVE algorithm and the FindNC algorithm in non-Gaussian cases, with all sample sizes, which verifies the correctness of the Proxy-GIN algorithm.
The the NAIVE algorithm and FindNC algorithm are expected to perform poorly, since there exists unmeasured confounder $U$ and the FindNC algorithm needs three Quadruple-disconnected NCs for per causal relationships. More experimental results are provided in Appendix \ref{Appendix-Section-Simulation}.

\vspace{-5mm}
\section{Experimental Results on Real-world Data}
In this section, we apply the proposed methods to analyze the causal effects of gene expressions on the body weight of F2 mice using the mouse obesity dataset as described by \citet{wang2006genetic}. The dataset we used comprises 17 gene expressions that are known to potentially influence mouse weight, as reported by \citet{lin2015regularization}. Additionally, it includes body weight as the outcome variable and data collected from 227 mice.
As discussed in \citet{miao2022identifying}, gene expression studies like this one may encounter unmeasured confounding issues stemming from batch effects or unobserved phenotypes. 

Following the analysis conducted by Miao et al., we assume that there is only one latent variable underlying the common influence, and the data generation mechanism adheres to a linear causal model. We observed that the majority of our findings align with those presented by \citet{miao2022identifying}. For instance, the gene expressions \emph{Gstm2}, \emph{Sirpa}, and \emph{2010002N04Rik} exhibit positive and significant effects on body weight, whereas the gene expression \emph{Dscam} demonstrates a negative impact on body weight. Detailed results and analysis are included in Appendix \ref{Appendix-Application}.

\vspace{-2mm}
\section{Conclusion}
This paper focuses on the identifiability conditions for selecting proxy variables for unmeasured confounders in observational data.
Initially, we introduce an extended proxy variable estimator to handle multiple unmeasured confounders between treatments and outcomes. Subsequently, we provide two specific identifiability conditions based on second-order and higher-order statistics. 
Additionally, the paper proposes two efficient algorithms for selecting proxy variables, utilizing Rank-deficiency and GIN properties, with their effectiveness substantiated by experimental results.

\section*{Acknowledgements}
FX would like to acknowledge the support by the Natural Science Foundation of China (62306019) and the Disciplinary funding of Beijing Technology and Business University (STKY202302). FX would like to thank Kun Zhang for his valuable insights and discussions during the initial phase of the research. ZM's research was supported by the China Scholarship Council (CSC). We appreciate the comments from anonymous reviewers, which greatly helped to improve the paper.

\section*{Impact Statement}
Learning causal effects is essential throughout the data-driven sciences and has attracted much attention.
Our research focuses on estimating causal effects where proxy variables exist to help adjust for unmeasured confounders in fields like social sciences, economics, public health, and neuroscience.
We assess the impact of our work in the context of these fields.
However, the applicability of existing methods is often limited in practice, as the validity of proxy variables typically relies on background knowledge.
Notable merits of our work include providing a practical method for selecting proxy variables and estimating unbiased causal effects from purely observational data.
\nocite{langley00}

\bibliography{example_paper}
\bibliographystyle{icml2024}

\newpage
\appendix
\onecolumn

\section{More Details on Extended Proxy Variables Estimator}\label{Appendix-Extended-Proxy}
We provide simulation results to demonstrate that when the dimension of NCE and NCO is lower than the number of unmeasured confounders, the estimated causal effect becomes biased when using a standard proxy variable estimator (Proposition \ref{Pro-Proxy-Estimator}). In particular, we examine the causal relationship $X_k \to Y$ depicted in Figure \ref{Fig-more-simulation-graph}. We consider different quantities of unmeasured confounders, denoted as $q=1,2,3,4$. We employ the following estimators:
\begin{itemize}
    \item The \emph{Traditional Proxy Variables Estimator} described in Proposition \ref{Pro-Proxy-Estimator}, which is given by:
    \begin{align}\label{Eq-single-proximal-inference}
\beta_{X_k \to Y} & = \frac{\sigma_{X_kY}\sigma_{W_1Z_1}-\sigma_{X_kW_1}\sigma_{YZ_1}}{\sigma_{X_kX_k}\sigma_{W_1Z_1}-\sigma_{X_kW_1}\sigma_{X_kZ_1}}.
\end{align}
    \item  The \emph{Extended Proxy Variables Estimator} described in Proposition \ref{Pro-Multi-Proxy-Estimator}, which is expressed as:
    \begin{align}\label{Appendix-Eq-multiple-proximal-inference}
\beta_{X_k \to Y} & = \frac{\mathrm{det}(\boldsymbol{\Sigma}_{\{X_k \cup \mathbf{Z}\}, \{Y \cup \mathbf{W}\}})}{\mathrm{det}(\boldsymbol{\Sigma}_{\{X_k \cup \mathbf{Z}\}, \{X_k \cup \mathbf{W}\}})},
\end{align}
where $\mathbf{Z}=\{Z_1, \ldots, Z_q\}$ and $\mathbf{W}=\{W_1, \ldots, W_q\}$.
\end{itemize}

\begin{figure}[htp]
	\begin{center}
		\begin{tikzpicture}[scale=1.4, line width=0.5pt, inner sep=0.2mm, shorten >=.1pt, shorten <=.1pt]
        \draw [fill=blue!50,thick, fill opacity=0.5, draw=none] (1.0,0.8) ellipse [x radius=0.8cm, y radius=0.35cm];
        \draw [fill=red!50,thick, fill opacity=0.5, draw=none] (-0.25,0.0) ellipse [x radius=0.6cm, y radius=0.3cm];
        \draw [fill=red!50,thick, fill opacity=0.5, draw=none] (2.25,0.0) ellipse [x radius=0.6cm, y radius=0.3cm];
        \draw (1.0, 1.0) node(R21B) [] {{\footnotesize\,$\mathbf{U}$\,}};
		\draw (0.6, 0.8) node(U1) [circle, fill=gray!60, minimum size=0.5cm,draw] {{\footnotesize\,$U_1$\,}};
        \draw (1.0, 0.8) node(con1) [] {{\footnotesize\, ... \,}};
        \draw (1.4, 0.8) node(U2) [circle, fill=gray!60, minimum size=0.5cm,draw] {{\footnotesize\,$U_q$\,}};
		\draw (-0.0, 0.0) node(Z1) [] {{\footnotesize\,${Z}_{q}$\,}};
		\draw (-0.25, 0.0) node(con1) [] {{\footnotesize\, ... \,}};
		\draw (0.5, 0.0) node(Xk) [] {{\footnotesize\,${X}_k$\,}};
		\draw (2.25, 0.0) node(con1) [] {{\footnotesize\, ... \,}};
		\draw (1.5, 0.0) node(Y) [] {{\footnotesize\,$Y$\,}};
		\draw (2.0, 0.0) node(W1) [] {{\footnotesize\,$W_1$\,}};
        \draw (-0.5, 0.0) node(Z2) [] {{\footnotesize\,$Z_1$\,}};
        \draw (2.5, 0.0) node(W2) [] {{\footnotesize\,$W_q$\,}};
		\draw[-arcsq] (U1) -- (Xk) node[pos=0.5,sloped,above] {};
		\draw[-arcsq] (U1) -- (Y) node[pos=0.5,sloped,above] {};
		\draw[-arcsq] (U1) -- (Z1) node[pos=0.5,sloped,above] {};
		\draw[-arcsq] (U1) -- (W1) node[pos=0.5,sloped,above] {};
            \draw[-arcsq] (U1) -- (Z2) node[pos=0.5,sloped,above] {};
		\draw[-arcsq] (U1) -- (W2) node[pos=0.5,sloped,above] {};
        \draw[-arcsq] (U2) -- (Xk) node[pos=0.5,sloped,above] {};
		\draw[-arcsq] (U2) -- (Y) node[pos=0.5,sloped,above] {};
		\draw[-arcsq] (U2) -- (Z1) node[pos=0.5,sloped,above] {};
		\draw[-arcsq] (U2) -- (W1) node[pos=0.5,sloped,above] {};
         \draw[-arcsq] (U2) -- (Z2) node[pos=0.5,sloped,above] {};
		\draw[-arcsq] (U2) -- (W2) node[pos=0.5,sloped,above] {};
        \draw[-arcsq] (Xk) -- (Y) node[pos=0.5,sloped,above] {{\scriptsize\,$\beta$\,}}; 
		\draw (1.2, -0.3) node(con1) [] {{\footnotesize\,(c) \,}};
		\end{tikzpicture}
		
		\vspace{-0.1cm}
		\caption{Causal Diagram used in our simulation studies, where $q=1,2,3,4$.}
		\label{Fig-more-simulation-graph} 
	\end{center}
	\vspace{-0.3cm}  
\end{figure}
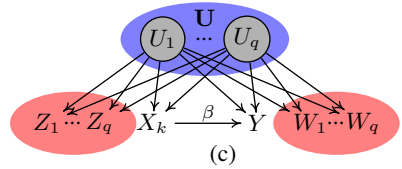

Figures \ref{Fig-simulation-mul-factor} summarizes the bias of the estimators of each parameter. As expected, our proposed \emph{Extended Proxy Variables Estimator} outperforms \emph{Traditional Proxy Variables Estimator} (with little bias for the causal effect of $X_k$ on $Y$) when the number of unmeasured confounders is greater or equal to 2.

\begin{figure}[htp]
  \setlength{\abovecaptionskip}{0pt}
	\setlength{\belowcaptionskip}{-6pt}
	\vspace{-0.1cm}
	\begin{center}
        \includegraphics[height=0.32\textwidth]{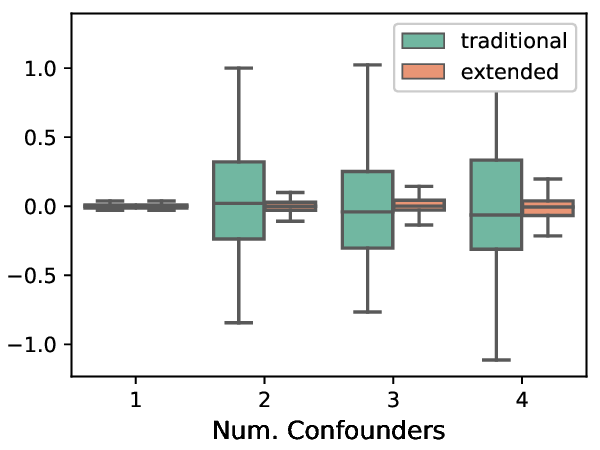}
		\caption{Performance of extended estimator and traditional estimator with varying numbers of unmeasured confounders.}
		\vspace{-0.3cm}
		\label{Fig-simulation-mul-factor} 
	\end{center}
\end{figure}

\section{More Details on the Motivating Example for Second-Order Statistics (in Section \ref{Subsection-Second-Order-Statistics})}

In this section, we will give the details of the motivating example described in Section \ref{Subsection-Second-Order-Statistics}.

\begin{figure}[H]
    \centering
	\begin{tikzpicture}[scale=0.9, line width=0.5pt, inner sep=0.2mm, shorten >=.1pt, shorten <=.1pt]
			\draw (3, 1.5) node(U) [circle, fill=gray!60, minimum size=0.5cm,draw] {{\footnotesize\,${U}$\,}};
			\draw (1, 0) node(X1) [] {{\footnotesize\,$X_1$\,}};
			\draw (2, 0) node(X2) [] {{\footnotesize\,$X_2$\,}};
			\draw (3, 0) node(X3) [] {{\footnotesize\,{$X_3$}\,}};
			\draw (4, 0) node(X4) [] {{\footnotesize\,{$X_4$}\,}};
			\draw (5, 0) node(X5) [] {{\footnotesize\,{$X_5$}\,}};
			\draw (6, 0) node(X6) [] {{\footnotesize\,{$X_6$}\,}};
			\draw (7, 0) node(Y) [circle, draw]  {{\footnotesize\,{$Y$}\,}};
			\draw[-arcsq] (U) -- (X1) node[pos=0.5,sloped,above] {{\scriptsize\,$c_{1}$\,}};  
			\draw[-arcsq] (U) -- (X2)node[pos=0.5,sloped,above] {{\scriptsize\,$c_{2}$\,}}; 
			\draw[-arcsq] (U) -- (X3) node[pos=0.5,sloped,above] {{\scriptsize\,$c_{3}$\,}};
			\draw[-arcsq] (U) -- (X4) node[pos=0.5,sloped,above] {{\scriptsize\,$c_{4}$\,}};
			\draw[-arcsq] (U) -- (X5) node[pos=0.5,sloped,above] {{\scriptsize\,$c_{5}$\,}};
			\draw[-arcsq] (U) -- (X6) node[pos=0.5,sloped,above] {{\scriptsize\,$c_{6}$\,}};
			\draw[-arcsq] (X1) -- (X2) node[pos=0.5,sloped,above] {{\scriptsize\,$b_{21}$\,}}; 
			\draw[-arcsq] (X4) -- (X5) node[pos=0.5,sloped,above] {{\scriptsize\,$b_{54}$\,}};
			%
			\draw[-arcsq] (U) -- (Y) node[pos=0.5,sloped,above] {{\scriptsize\,$\delta$\,}}; 
			\draw [-arcsq] (X2) edge[bend right=20] (Y);
                \draw (4.2,-0.4) node(label-ii) [] {{\scriptsize\,$\beta_2$\,}};
			\draw[-arcsq] (X5) -- (X6) node[pos=0.5,sloped,above] {{\scriptsize\,$b_{65}$\,}};
			\draw[-arcsq] (X6) -- (Y) node[pos=0.5,sloped,above] {{\scriptsize\,$\beta_{6}$\,}};
            \draw (4,-1) node(label-ii) [] {{\footnotesize\,(a)\,}};
		\end{tikzpicture}~~~~
    \begin{tikzpicture}[scale=0.9, line width=0.5pt, inner sep=0.2mm, shorten >=.1pt, shorten <=.1pt]
			\draw (3, 1.5) node(U) [circle, fill=gray!60, minimum size=0.5cm,draw] {{\footnotesize\,${U}$\,}};
			\draw (1, 0) node(X1) [] {{\footnotesize\,$X_1$\,}};
			\draw (2, 0) node(X2) [] {{\footnotesize\,$X_2$\,}};
			\draw (3, 0) node(X3) [] {{\footnotesize\,{$X_3$}\,}};
			\draw (4, 0) node(X4) [] {{\footnotesize\,{$X_4$}\,}};
			\draw (5, 0) node(X5) [] {{\footnotesize\,{$X_5$}\,}};
			\draw (6, 0) node(X6) [] {{\footnotesize\,{$X_6$}\,}};
			\draw (7, 0) node(Y) [circle, draw]  {{\footnotesize\,{$Y$}\,}};
			\draw[-arcsq] (U) -- (X1) node[pos=0.5,sloped,above] {{\scriptsize\,$c_{1}$\,}};  
			\draw[-arcsq] (U) -- (X2)node[pos=0.5,sloped,above] {{\scriptsize\,$c_{2}$\,}}; 
			\draw[-arcsq] (U) -- (X3) node[pos=0.5,sloped,above] {{\scriptsize\,$c_{3}$\,}};
			\draw[-arcsq] (U) -- (X4) node[pos=0.5,sloped,above] {{\scriptsize\,$c_{4}$\,}};
			\draw[-arcsq] (U) -- (X5) node[pos=0.5,sloped,above] {{\scriptsize\,$c_{5}$\,}};
			\draw[-arcsq] (U) -- (X6) node[pos=0.5,sloped,above] {{\scriptsize\,$c_{6}$\,}};
			\draw[-arcsq] (X1) -- (X2) node[pos=0.5,sloped,above] {};
			\draw[-arcsq] (X4) -- (X5) node[pos=0.5,sloped,above] {{\scriptsize\,$b_{54}$\,}};
			%
			\draw[-arcsq] (U) -- (Y) node[pos=0.5,sloped,above] {{\scriptsize\,$\delta$\,}}; 
			\draw [-arcsq] (X2) edge[bend right=20] (Y);
                \draw (4.2,-0.4) node(label-ii) [] {{\scriptsize\,$\beta_2$\,}};
			\draw[-arcsq] (X5) -- (X6) node[pos=0.5,sloped,above] {{\scriptsize\,$b_{65}$\,}};
			\draw[-arcsq] (X6) -- (Y) node[pos=0.5,sloped,above] {{\scriptsize\,$\beta_6$\,}};
			\draw [-arcsq,color=red,line width=0.8pt] (X1) edge[bend right=25] (Y);
                \draw (2.4,-0.4) node(label-ii) [] {{\scriptsize\,$\beta_1$\,}};
                \draw (4,-1) node(label-ii) [] {{\footnotesize\,(b)\,}};
		\end{tikzpicture}
		\caption{A simple causal graph involving 6 potential treatments and one outcome.}
		\vspace{-0.4cm}
		\label{Appendix-Fig-model-example}
\end{figure}
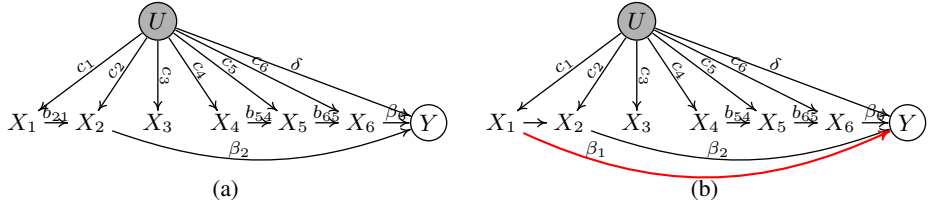

We assume that all random variables have mean zero for simplicity.

\textbf{\emph{Structure (a)}}: Since the variables strictly follow the linear causal model, we obtain
\begin{align}\nonumber
{U} & = \varepsilon _{U},\label{Eq1-latent} \\ \nonumber
{X_1} & = c_1 {U} + \varepsilon_{X_1} = c_1 \varepsilon_{U} + \varepsilon_{X_1},\\ \nonumber
{X_2} & = c_2 {U} + b_{21} X_1 + \varepsilon_{X_2} = (c_1 b_{21}+ c_2) \varepsilon_{U} + b_{21} \varepsilon_{X_1} + \varepsilon_{X_2},\\ \nonumber
{X_3} & = c_3 {U} + \varepsilon_{X_3} = c_3 \varepsilon_{U} + \varepsilon_{X_3},\\ \nonumber
{X_4} & = c_4 {U} + \varepsilon_{X_4} = c_4 \varepsilon_{U} + \varepsilon_{X_4},\\ \nonumber
{X_5} & = c_5 {U} + b_{54} X_4 + \varepsilon_{X_5} = (c_5 + c_4 b_{54}) \varepsilon_{U} + b_{54} \varepsilon_{X_4} + \varepsilon_{X_5},\\ \nonumber
{X_6} & = c_6 {U} + b_{65} X_5 + \varepsilon_{X_6} \nonumber \\ &= (c_6 + b_{65} c_5+ b_{65} c_4 b_{54}) \varepsilon_{U} + b_{65} b_{54} \varepsilon_{X_4} + b_{65} \varepsilon_{X_5} + \varepsilon_{X_6},\\ \nonumber
Y & = \delta {U} + \beta_6 X_6 + \beta_2 X_2 + \varepsilon_{Y} \nonumber\\ 
&= \left[ \delta + \beta_6 (c_6 + b_{65} c_5+ b_{65} c_4 b_{54}) + \beta_2 c_1 b_{21} \right] \varepsilon_{U} + \beta_2 b_{21} \varepsilon_{X_1} + \beta_2 \varepsilon_{X_2} \nonumber \\& + \beta_6 b_{65} b_{54} \varepsilon_{X_4} + \beta_6 b_{65} \varepsilon_{X_5} +  \beta_6 \varepsilon_{X_6} + \varepsilon_{Y}.
\end{align}

We now consider the causal relationship $X_2 \to Y$ in Figure \ref{Appendix-Fig-model-example}(a). $X_1$ and $X_6$ are the valid NCE and NCO for the causal relationship $X_2 \to Y$, respectively.
We have the vanishing determinants
on the cross-covariance matrix $\boldsymbol{\Sigma}_{\{X_2, X_3, X_1\},  \{X_2,Y, X_6\}}$, i.e.,
\begin{align}\label{Appendix-Eq_Rank1-intuition}
\mathrm{det}(\boldsymbol{\Sigma}_{\{X_2, X_3, X_1\},  \{X_2,Y, X_6\}})  &= \mathrm{det}(\begin{bmatrix*}
\sigma_{X_2X_2} & \sigma_{X_2Y} & \sigma_{X_2X_6} \\
\sigma_{X_3X_2} & \sigma_{X_3Y} & \sigma_{X_3X_6}\\
\sigma_{X_1X_2} & \sigma_{X_1Y} & \sigma_{X_1X_6}
\end{bmatrix*}
)=0
\end{align}
By Eq.\ref{Appendix-Eq_Rank1-intuition}, we quickly know that $\mathrm{rk}(\boldsymbol{\Sigma}_{\{X_2, X_3, X_1\},  \{X_2,Y, X_6\}}) \leq 2$.
We next add an edge $X_1 \to Y$ to Figure \ref{Appendix-Fig-model-example}(a) such that $X_1$ and $X_6$ are the invalid NCE and NCO relative to the causal relationship $X_2 \to Y$ (as shown in Figure \ref{Appendix-Fig-model-example}(b)). 
Now, the vanishing determinant on the cross-covariance matrix $\boldsymbol{\Sigma}_{\{X_2, X_3, X_1\},  \{X_2,Y, X_6\}}$ will fail, i.e.,
\begin{align}\label{Eq_Rank2-intuition}
\mathrm{det}(\boldsymbol{\Sigma}_{\{X_2, X_3, X_1\},  \{X_2,Y, X_6\}}) &= \mathrm{det}(\begin{bmatrix*}
\sigma_{X_2X_2} & \sigma_{X_2Y} & \sigma_{X_2X_6} \\
\sigma_{X_3X_2} & \sigma_{X_3Y} & \sigma_{X_3X_6}\\
\sigma_{X_1X_2} & \sigma_{X_1Y} & \sigma_{X_1X_6}
\end{bmatrix*}
)\nonumber\\
&= \sigma _{U}^{2} \sigma _{X_{1}}^{2} \sigma _{X_{2}}^{2} c_{3} \beta _{1}( c_{4} b_{54} b_{65} +c_{5} b_{65} +c_{6}) \neq 0
\end{align}
That is to say, $\boldsymbol{\Sigma}_{\{X_2, X_3, X_1\},  \{X_2,Y, X_6\}}$ is full rank.
Assuming the distribution is rank-faithful to the graph, the above facts show that lack of edge $X_1 \to Y$, i.e., the variable of NCE does not causally affect the primary outcome, has a testable implication.

\section{More Details on the Motivating Example for Higher-Order Statistics (in Section \ref{Subsection-Higher-Order-Statistics})}\label{Appendix-Higher-Order}

In this section, we will give the details of another motivating example described in Section \ref{Subsection-Higher-Order-Statistics}.

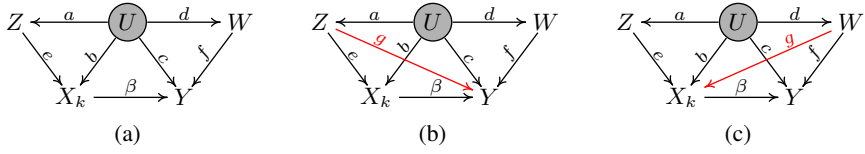
\begin{figure}[htp]
	\centering
	\begin{tikzpicture}[scale=1.0, line width=0.5pt, inner sep=0.2mm, shorten >=.1pt, shorten <=.1pt]
		\draw (3, 1.0) node(U) [circle, fill=gray!60, minimum size=0.5cm,draw] {{\footnotesize\,$U$\,}};
		\draw (1.5, 1.0) node(Z) [] {{\footnotesize\,$Z$\,}};
		\draw (4.5, 1.0) node(W) [] {{\footnotesize\,$W$\,}};
		\draw (2.25, 0) node(X) [] {{\footnotesize\,{$X_k$}\,}};
		\draw (3.75, 0) node(Y) [] {{\footnotesize\,{$Y$}\,}};
		%
		%
		\draw[-arcsq] (U) -- (Z) node[pos=0.5,sloped,above] {{\scriptsize\,$a$\,}}; 
		\draw[-arcsq] (U) -- (W)node[pos=0.5,sloped,above] {{\scriptsize\,$d$\,}}; 
		\draw[-arcsq] (U) -- (X) node[pos=0.5,sloped,above] {{\scriptsize\,$b$\,}};
		\draw[-arcsq] (U) -- (Y) node[pos=0.5,sloped,above] {{\scriptsize\,$c$\,}};
		\draw[-arcsq] (X) -- (Y) node[pos=0.5,sloped,above] {{\scriptsize\,$\beta$\,}};
    	\draw[-arcsq] (Z) -- (X) node[pos=0.5,sloped,above] {{\scriptsize\,$e$\,}};
		\draw[-arcsq] (W) -- (Y) node[pos=0.5,sloped,above] {{\scriptsize\,$f$\,}};
		\draw (3,-0.5) node(label-ii) [] {{\footnotesize\,(a)\,}};
	\end{tikzpicture}~~~~~~
	\begin{tikzpicture}[scale=1.0, line width=0.5pt, inner sep=0.2mm, shorten >=.1pt, shorten <=.1pt]
	\draw (3, 1.0) node(U) [circle, fill=gray!60, minimum size=0.5cm,draw] {{\footnotesize\,$U$\,}};
	\draw (1.5, 1.0) node(Z) [] {{\footnotesize\,$Z$\,}};
	\draw (4.5, 1.0) node(W) [] {{\footnotesize\,$W$\,}};
	\draw (2.25, 0) node(X) [] {{\footnotesize\,{$X_k$}\,}};
	\draw (3.75, 0) node(Y) [] {{\footnotesize\,{$Y$}\,}};
	%
	%
	\draw[-arcsq] (U) -- (Z) node[pos=0.5,sloped,above] {{\scriptsize\,$a$\,}}; 
	\draw[-arcsq] (U) -- (W)node[pos=0.5,sloped,above] {{\scriptsize\,$d$\,}}; 
	\draw[-arcsq] (U) -- (X) node[pos=0.3,sloped,above] {{\scriptsize\,$b$\,}};
	\draw[-arcsq] (U) -- (Y) node[pos=0.5,sloped,above] {{\scriptsize\,$c$\,}};
	\draw[-arcsq] (X) -- (Y) node[pos=0.5,sloped,above] {{\scriptsize\,$\beta$\,}}; 
        \draw[-arcsq] (Z) -- (X) node[pos=0.5,sloped,above] {{\scriptsize\,$e$\,}}; 
	\draw[-arcsq, color=red] (Z) -- (Y) node[pos=0.3,sloped,above] {{\scriptsize\,$g$\,}}; 
	\draw[-arcsq] (W) -- (Y) node[pos=0.5,sloped,above] {{\scriptsize\,$f$\,}}; 
	\draw (3,-0.5) node(label-ii) [] {{\footnotesize\,(b)\,}};
	\end{tikzpicture}~~~~~~
    \begin{tikzpicture}[scale=1.0, line width=0.5pt, inner sep=0.2mm, shorten >=.1pt, shorten <=.1pt]
	\draw (3, 1.0) node(U) [circle, fill=gray!60, minimum size=0.5cm,draw] {{\footnotesize\,$U$\,}};
	\draw (1.5, 1.0) node(Z) [] {{\footnotesize\,$Z$\,}};
	\draw (4.5, 1.0) node(W) [] {{\footnotesize\,$W$\,}};
	\draw (2.25, 0) node(X) [] {{\footnotesize\,{$X_k$}\,}};
	\draw (3.75, 0) node(Y) [] {{\footnotesize\,{$Y$}\,}};
	%
	%
	\draw[-arcsq] (U) -- (Z) node[pos=0.5,sloped,above] {{\scriptsize\,$a$\,}}; 
	\draw[-arcsq] (U) -- (W)node[pos=0.5,sloped,above] {{\scriptsize\,$d$\,}}; 
	\draw[-arcsq] (U) -- (X) node[pos=0.5,sloped,above] {{\scriptsize\,$b$\,}};
	\draw[-arcsq] (U) -- (Y) node[pos=0.3,sloped,above] {{\scriptsize\,$c$\,}};
	\draw[-arcsq] (X) -- (Y) node[pos=0.5,sloped,above] {{\scriptsize\,$\beta$\,}}; 
	\draw[-arcsq] (Z) -- (X) node[pos=0.5,sloped,above] {{\scriptsize\,$e$\,}}; 
        \draw[-arcsq] (W) -- (Y) node[pos=0.5,sloped,above] {{\scriptsize\,$f$\,}}; 
	\draw[-arcsq, color=red] (W) -- (X) node[pos=0.3,sloped,above] {{\scriptsize\,$g$\,}}; 
	\draw (3,-0.5) node(label-ii) [] {{\footnotesize\,(c)\,}};
\end{tikzpicture}
	\caption{A linear causal model with any of the graphical structures above entails the same rank constraints in the marginal covariance matrix of $\{X_k, Y, Z, W\}$, but it entails different GIN constraints. }
	\label{Appendix-Fig-equivalent-proximal-example}
\end{figure}

We assume that all random variables have mean zero for simplicity.
We consider the linear causal model with non-Gaussian noise terms. 


\textbf{\emph{Structure (a)}}: Since the variables strictly follow the linear causal model, we obtain
\begin{align}\nonumber
{U} & = \varepsilon _{U},\label{Eq1-latent} \\ \nonumber
{Z} & = a {U} + \varepsilon_{Z} = a \varepsilon_{U} + \varepsilon_{Z},\\ \nonumber
{W} & = d {U} + \varepsilon_{W} = d \varepsilon_{U} + \varepsilon_{W},\\ \nonumber
{X_k} & = b {U} + e Z + \varepsilon_{X_k} = (ae+b)\varepsilon _{U} + e\varepsilon_{Z} + \varepsilon _{X_{k}},\\ 
Y & = c {U} + \beta X_k + f W + \varepsilon_{Y} \nonumber \\
&= (ae+b+c+df)\varepsilon _{U} + e\beta\varepsilon_{Z} + \beta\varepsilon_{X_k} + f\varepsilon_{W} + \varepsilon _{X_{k}}.
\end{align}

1). By the above equations, $\{X_k,Y,W\}$ and $\{X_k,Z\}$ can then be represented as
\begin{align}
\underbrace{\left[\begin{matrix}
{{X_k}}\\
{{Y}}\\
{{W}}
\end{matrix}\right]}_{\mathcal{Y}} & =  {\left[\begin{matrix}
1 & {0}\\
\beta & {c+fd}\\
0 & {d}
\end{matrix}\right]}\left[\begin{matrix}
{{{{X_k}}}}\\
{{{{U}}}}
\end{matrix}\right] + \underbrace{\left[\begin{matrix}
{{0}}\\
{{f\varepsilon_{W}+\varepsilon_{{Y}}}}\\
{{\varepsilon_{{W}}}}
\end{matrix}\right]}_{\mathcal{E}_{\mathcal{Y}}},\\
\underbrace{\left[\begin{matrix}
{{X_k}}\\
{{Z}}
\end{matrix}\right]}_{\mathcal{Z}} &=  {\left[\begin{matrix}
1 &  0\\
0 &  a
\end{matrix}\right]}\left[\begin{matrix}
{{X_k}}\\
{{U}}
\end{matrix}\right] + \underbrace{\left[\begin{matrix}
{{0}}\\
{\varepsilon_{{Z}}}
\end{matrix}\right]}_{\mathcal{E}_{\mathcal{Z}}}, \label{equ-y1-y2}
\end{align}
According to the above equations,
$\omega^\intercal \mathbb{E}[\mathcal{Y}\mathcal{Z}^{\intercal}] = \mathbf{0}$
$\Rightarrow$ $\omega = [d\beta, -d, c + df]^{\intercal}$. Then we can see $\omega^\intercal \mathcal{Y}= \omega^\intercal \mathcal{E}_{\mathcal{Y}}= c\varepsilon_{W} - d\varepsilon_{Y}$. By Darmois–Skitovich theorem, $\omega^\intercal \mathcal{Y}$ is independent of $\mathcal{Z}$ because there is no common non-Gaussian noise terms between $c\varepsilon_{W} - d\varepsilon_{Y}$ and $\mathcal{Z}$ (including $\varepsilon_{U}$ and $\varepsilon_{Z}$).
That is to say, $(\{X_k, Z\}, \{X_k, Y, W\})$ \emph{follows the GIN constraint}.

2). $\{X_k,Z\}$ and $W$ can then be represented as
\begin{align}
\underbrace{\left[\begin{matrix}
{{X_k}}\\
{Z}
\end{matrix}\right]}_{\mathcal{Y}} & =  {\left[\begin{matrix}
{ae+b}\\
{a}
\end{matrix}\right]}\left[\begin{matrix}
{{{{U}}}}
\end{matrix}\right] + \underbrace{\left[\begin{matrix}
{{e\varepsilon_{Z}+\varepsilon_{X_k}}}\\
{{\varepsilon_{{Z}}}}
\end{matrix}\right]}_{\mathcal{E}_{\mathcal{Y}}},\\
\underbrace{\left[\begin{matrix}
{W}
\end{matrix}\right]}_{\mathcal{Z}} &=  {\left[\begin{matrix}
d
\end{matrix}\right]}\left[\begin{matrix}
{{U}}
\end{matrix}\right] + \underbrace{\left[\begin{matrix}
{\varepsilon_{{W}}}
\end{matrix}\right]}_{\mathcal{E}_{\mathcal{Z}}}, \label{equ-y1-y2}
\end{align}
According to the above equations, 
$\omega^\intercal \mathbb{E}[\mathcal{Y}\mathcal{Z}^{\intercal}] = \mathbf{0}$
$\Rightarrow$ $\omega = [-a, e + b]^{\intercal}$. Then we can see $\omega^\intercal \mathcal{Y}= \omega^\intercal {\mathcal{E}_{\mathcal{Y}}} = b\varepsilon_{Z} - a\varepsilon_{X_k}$. By Darmois–Skitovich theorem, $\omega^\intercal \mathcal{Y}$ is independent of $[W]$ because there is no common non-Gaussian noise terms between $b\varepsilon_{Z} - a\varepsilon_{X_k}$ and $[W]$.
That is to say, {$(\{W\}, \{X_k, Z\})$ \emph{follows the GIN constraint}}.

\textbf{\emph{Structure (b)}}: The data generation process is as follows:
\begin{align}\nonumber
{U} & = \varepsilon _{U},\label{Eq1-latent} \\ \nonumber
{Z} & = a {U} + \varepsilon_{Z} = a \varepsilon_{U} + \varepsilon_{Z},\\ \nonumber
{W} & = d {U} + \varepsilon_{W} = d \varepsilon_{U} + \varepsilon_{W},\\ \nonumber
{X_k} & = b {U} + e Z + \varepsilon_{X_k} = (ae+b)\varepsilon _{U} + e\varepsilon_{Z} + \varepsilon _{X_{k}},\\ \nonumber
Y & = c {U} + \beta X_k + f W + g Z + \varepsilon_{Y} \\&=(ae+b+c+df +ag)\varepsilon _{U} + (e\beta+g)\varepsilon_{Z} + \beta\varepsilon_{X_k} + f\varepsilon_{W} + \varepsilon _{X_{k}}.
\end{align}

1). By the above equations, $\{X_k,Y,W\}$ and $\{X_k,Z\}$ can then be represented as
\begin{align}
\underbrace{\left[\begin{matrix}
{{X_k}}\\
{{Y}}\\
{{W}}
\end{matrix}\right]}_{\mathcal{Y}} & =  {\left[\begin{matrix}
1 & {0}\\
\beta & {c+fd + ag}\\
0 & {d}
\end{matrix}\right]}\left[\begin{matrix}
{{{{X_k}}}}\\
{{{{U}}}}
\end{matrix}\right] + \underbrace{\left[\begin{matrix}
{{0}}\\
{{f\varepsilon_{W}+ a\varepsilon_{Z}+ \varepsilon_{{Y}}}}\\
{{\varepsilon_{{W}}}}
\end{matrix}\right]}_{\mathcal{E}_{\mathcal{Y}}},\\
\underbrace{\left[\begin{matrix}
{{X_k}}\\
{{Z}}
\end{matrix}\right]}_{\mathcal{Z}} &=  {\left[\begin{matrix}
1 &  0\\
0 &  a
\end{matrix}\right]}\left[\begin{matrix}
{{X_k}}\\
{{U}}
\end{matrix}\right] + \underbrace{\left[\begin{matrix}
{{0}}\\
{\varepsilon_{{Z}}}
\end{matrix}\right]}_{\mathcal{E}_{\mathcal{Z}}}, \label{equ-y1-y2}
\end{align}

We have $\omega^\intercal \mathcal{Y}$ is dependent of $[X_k,Z]$ because there exists common non-Gaussian noise terms $\varepsilon_Z$ between $\mathcal{Y}$ and $\mathcal{Z}$, no matter $\omega^\intercal \mathbb{E}[\mathcal{Y}\mathcal{Z}^{\intercal}] = \mathbf{0}$ or not.
That is to say, {$(\{X_k, Z\}, \{X_k, Y, W\})$ \emph{violates the GIN constraint}}.

2). $\{X_k,Z\}$ and $W$ can then be represented as
\begin{align}
\underbrace{\left[\begin{matrix}
{{X_k}}\\
{Z}
\end{matrix}\right]}_{\mathcal{Y}} & =  {\left[\begin{matrix}
{ae+b}\\
{a}
\end{matrix}\right]}\left[\begin{matrix}
{{{{U}}}}
\end{matrix}\right] + \underbrace{\left[\begin{matrix}
{{e\varepsilon_{Z}+\varepsilon_{X_k}}}\\
{{\varepsilon_{{Z}}}}
\end{matrix}\right]}_{\mathcal{E}_{\mathcal{Y}}},\\
\underbrace{\left[\begin{matrix}
{W}
\end{matrix}\right]}_{\mathcal{Z}} &=  {\left[\begin{matrix}
d
\end{matrix}\right]}\left[\begin{matrix}
{{U}}
\end{matrix}\right] + \underbrace{\left[\begin{matrix}
{\varepsilon_{{W}}}
\end{matrix}\right]}_{\mathcal{E}_{\mathcal{Z}}}, \label{equ-y1-y2}
\end{align}

According to the above equations, 
$\omega^\intercal \mathbb{E}[\mathcal{Y}\mathcal{Z}^{\intercal}] = \mathbf{0}$
$\Rightarrow$ $\omega = [-a, e + b]^{\intercal}$. Then we can see $\omega^\intercal \mathcal{Y}= \omega^\intercal {\mathcal{E}_{\mathcal{Y}}} = b\varepsilon_{Z} - a\varepsilon_{X_k}$. By Darmois–Skitovich theorem, $\omega^\intercal \mathcal{Y}$ is independent of $[W]$ because there is no common non-Gaussian noise terms between $b\varepsilon_{Z} - a\varepsilon_{X_k}$ and $[W]$.
That is to say, {$(\{W\}, \{X_k, Z\})$ \emph{follows the GIN constraint}}.

\textbf{\emph{Structure (c)}}: The data generation process is as follows:
\begin{align}\nonumber
{U} & = \varepsilon _{U},\label{Eq1-latent} \\ \nonumber
{Z} & = a {U} + \varepsilon_{Z} = a \varepsilon_{U} + \varepsilon_{Z},\\ \nonumber
{W} & = d {U} + \varepsilon_{W} = d \varepsilon_{U} + \varepsilon_{W},\\ \nonumber
{X_k} & = b {U} + e Z + g W + \varepsilon_{X_k} = (ae+b+gd)\varepsilon _{U} + e\varepsilon_{Z} + g\varepsilon_{W} + \varepsilon _{X_{k}},\\ \nonumber
Y & = c {U} + \beta X_k + f W + \varepsilon_{Y} \\&= (ae+b+gd+c+df)\varepsilon _{U} + e\beta\varepsilon_{Z} + \beta\varepsilon_{X_k} + (\beta g+ f)\varepsilon_{W} + \varepsilon _{X_{k}}.
\end{align}

1). By the above equations, $\{X_k,Y,W\}$ and $\{X_k,Z\}$ can then be represented as
\begin{align}
\underbrace{\left[\begin{matrix}
{{X_k}}\\
{{Y}}\\
{{W}}
\end{matrix}\right]}_{\mathcal{Y}} & =  {\left[\begin{matrix}
1 & {0}\\
\beta & {c+fd}\\
0 & {d}
\end{matrix}\right]}\left[\begin{matrix}
{{{{X_k}}}}\\
{{{{U}}}}
\end{matrix}\right] + \underbrace{\left[\begin{matrix}
{{0}}\\
{{f\varepsilon_{W} + \varepsilon_{{Y}}}}\\
{{\varepsilon_{{W}}}}
\end{matrix}\right]}_{\mathcal{E}_{\mathcal{Y}}},\\
\underbrace{\left[\begin{matrix}
{{X_k}}\\
{{Z}}
\end{matrix}\right]}_{\mathcal{Z}} &=  {\left[\begin{matrix}
1 &  0\\
0 &  a
\end{matrix}\right]}\left[\begin{matrix}
{{X_k}}\\
{{U}}
\end{matrix}\right] + \underbrace{\left[\begin{matrix}
{{0}}\\
{\varepsilon_{{Z}}}
\end{matrix}\right]}_{\mathcal{E}_{\mathcal{Z}}}, \label{equ-y1-y2}
\end{align}

According to the above equations, 
$\omega^\intercal \mathbb{E}[\mathcal{Y}\mathcal{Z}^{\intercal}] = \mathbf{0}$
$\Rightarrow$ $\omega = [d\beta, -d, c + df]^{\intercal}$. Then we can see $\omega^\intercal \mathcal{Y}= \omega^\intercal {\mathcal{E}_{\mathcal{Z}}} = c\varepsilon_{W} - d\varepsilon_{Y}$. By Darmois–Skitovich theorem, $\omega^\intercal \mathcal{Y}$ is independent of $\mathcal{Z}$ because there is no common non-Gaussian noise terms between $c\varepsilon_{W} - d\varepsilon_{Y}$ and $\mathcal{Z}$.
That is to say, {$(\{X_k, Z\}, \{X_k, Y, W\})$ \emph{follows the GIN constraint} (This result is the same as the result in Structure (a))}.

2). {However, we have that $(\{W\}, \{X_k, Z\})$ \emph{violates the GIN constraint}, as explained below}. $\{X_k,Z\}$ and $W$ can then be represented as

\begin{align}
\underbrace{\left[\begin{matrix}
{{X_k}}\\
{Z}
\end{matrix}\right]}_{\mathcal{Y}} & =  {\left[\begin{matrix}
{ae+b+gd}\\
{a}
\end{matrix}\right]}\left[\begin{matrix}
{{{{U}}}}
\end{matrix}\right] + \underbrace{\left[\begin{matrix}
{{e\varepsilon_{Z}+g\varepsilon_{W}+\varepsilon_{X_k}}}\\
{{\varepsilon_{{Z}}}}
\end{matrix}\right]}_{\mathcal{E}_{\mathcal{Y}}},\\
\underbrace{\left[\begin{matrix}
{W}
\end{matrix}\right]}_{\mathcal{Z}} &=  {\left[\begin{matrix}
d
\end{matrix}\right]}\left[\begin{matrix}
{{U}}
\end{matrix}\right] + \underbrace{\left[\begin{matrix}
{\varepsilon_{{W}}}
\end{matrix}\right]}_{\mathcal{E}_{\mathcal{Z}}}, \label{equ-y1-y2}
\end{align}

We have $\omega^\intercal \mathcal{Y}$ is dependent of $\mathcal{Z}$ because there exists common non-Gaussian noise terms $\varepsilon_W$ between $\mathcal{Y}$ and $\mathcal{Z}$, no matter $\omega^\intercal \mathbb{E}[\mathcal{Y}\mathcal{Z}^{\intercal}] = \mathbf{0}$ or not.

\textbf{\emph{Conclusion}}: Assuming the distribution is rank-faithful to the graph, the above facts show that lack of edges $Z \to Y$ (i.e., the variable of NCE does not causally affect the primary outcome) or $W \to X_k$ (i.e., the variable of NCO does not causally affect the primary treatment) has a testable implication.

\section{More Details about Depending on Assumption 4 in Lemma \ref{Lemma-rule3}}\label{Appendix-Section-Assumption-Lemma3}

We here would like to mention that the result of Lemma \ref{Lemma-rule3} does not strictly require adherence to Assumption 3—that is, not all noise variables need to follow a non-Gaussian distribution. For instance, consider the causal graphs shown in Figure 4, the identification of valid NCO and NCE in subfigure (a) solely depends on the non-Gaussian distribution of the noise components associated with variables Z and W, making them valid, whereas the other subfigures demonstrate invalid cases. Specifically, as elaborated in Appendix C, for subfigure (b), we find that $\omega^\intercal \mathcal{Y}$ is dependent on $[X_k,Z]$ due to the presence of common non-Gaussian noise terms $\varepsilon_Z$ between $\mathcal{Y}$ and $\mathcal{Z}$. This dependence aligns with the Darmois-Skitovitch theorem, which necessitates that $\varepsilon_Z$ must be non-Gaussian. Similarly, for subfigure (c), we observe that $\omega^\intercal \mathcal{Y}$ is dependent on $\mathcal{Z}$ due to the presence of common non-Gaussian noise terms $\varepsilon_W$ between $\mathcal{Y}$ and $\mathcal{Z}$, again requiring $\varepsilon_W$ to be non-Gaussian as per the Darmois-Skitovitch theorem.

\section{More Details on Proxy-Rank Algorithm (in Section \ref{Section-Algorithm-proxy-rank})}\label{Appendix-Algorithm-rank}

The specific details of the Proxy-Rank algorithm are provided in the following,

\begin{algorithm}[htp]
	\caption{Proxy-Rank}
	\label{Alg-Proxy-Rank}
	\begin{algorithmic}[1]
        \REQUIRE
	A dataset of treatments $\mathbf{X} = \{ {X_1},...,{X_p}\}$, outcome $Y$, and the number of unmeasured confounders $q$.\\
		\STATE Initialize sets $\mathcal{C}=\emptyset$, $\mathcal{NCE}=\emptyset$, and $\mathcal{NCO}=\emptyset$
            \FOR{every variable $X_k$ in $\mathbf{X}$}
		\REPEAT
		\STATE Select two subsets $\mathbf{A}$, $\mathbf{B}$  and one variable $Q$ from $\mathbf{X} \setminus {X_k}$ such that  $\mathbf{A} \cap \mathbf{B} \cap \{Q\} = \emptyset$, and that $|\mathbf{A}| = q$, $|\mathbf{B}| = q$.
		\IF{$\mathbf{A}$, $\mathbf{B}$,  and $Q$ satisfy $\mathcal{R}_1$ of Lemma \ref{Lemma-rule1}}
		\STATE $\mathcal{NCE}_k  \leftarrow \mathbf{A}$, $\mathcal{NCO}_k  \leftarrow \mathbf{B}$;
		\STATE $\mathcal{C}_k =  \frac{\mathrm{det}(\boldsymbol{\Sigma}_{\{X_k \cup \mathbf{A}\}, \{Y \cup \mathbf{B}\}})}{\mathrm{det}(\boldsymbol{\Sigma}_{\{X_k \cup \mathbf{A}\}, \{X_k \cup \mathbf{B}\}})}$;
		\STATE Break the loop of line 3;
		\ENDIF
		\UNTIL{all possible disjoint subsets $\mathbf{A}$ with $|\mathbf{A}|=q$, $\mathbf{B}$ with $|\mathbf{B}|=q$, and variable $Q$ in $\mathbf{X} \setminus X_k$ are selected.}
        \IF{$\mathcal{NCO}_{k} \neq \emptyset$}
		\STATE Continue;
		\ENDIF
            \REPEAT
		\STATE Select two subsets $\mathbf{A}$ and $\mathbf{B}$ from $\mathbf{X} \setminus {X_k}$ such that  $\mathbf{A} \cap \mathbf{B} = \emptyset$, and that $|\mathbf{A}| = q+1$, $|\mathbf{B}| = q+1$.
		\IF{$\mathbf{A}$ and $\mathbf{B}$ satisfy $\mathcal{R}_2$ of Lemma \ref{Lemma-rule2}}
		\STATE $\mathcal{NCE}_k  \leftarrow \mathbf{A}$, $\mathcal{NCO}_k  \leftarrow \mathbf{B}$;
		\STATE $\mathcal{C}_k =  \frac{\mathrm{det}(\boldsymbol{\Sigma}_{\{X_k \cup \mathbf{A}\}, \{Y \cup \mathbf{B}\}})}{\mathrm{det}(\boldsymbol{\Sigma}_{\{X_k \cup \mathbf{A}\}, \{X_k \cup \mathbf{B}\}})}$;
		\STATE Break the for loop of line 14
		\ENDIF
		\UNTIL{all possible disjoint subsets $\mathbf{A}$ with $|\mathbf{A}|=q+1$, and $\mathbf{B}$ with $|\mathbf{B}|=q+1$ in $\mathbf{X} \setminus X_k$ are selected.}
            \ENDFOR
            \IF{$\mathcal{NCO}_{k} \neq \emptyset$}
		\STATE Continue;
            \ELSE
            \STATE $\mathcal{C}_k= \mathrm{NA}$. ~// indicating the lack of knowledge to obtain the unbiased causal effect.
		\ENDIF
	\ENSURE
	$\mathcal{C}$, a set that collects the total causal effects of $X_k \in \mathbf{X}$ on $Y$.
		\end{algorithmic}
	\vspace{-0.1cm}
\end{algorithm}

\section{More Details on Proxy-GIN Algorithm (in Section \ref{Section-Algorithm-GIN})}\label{Appendix-Algorithm-GIN}
The specific details of the Proxy-GIN algorithm are provided in the following,

\begin{algorithm}[htb]
        \caption{Proxy-GIN}
	\label{Alg-Proxy-GIN}
        \begin{algorithmic}[1]
        \REQUIRE
	A dataset of treatments $\mathbf{X} = \{ {X_1},...,{X_p}\}$, outcome $Y$, and the number of unmeasured confounders $q$.\\
		\STATE Initialize  sets $\mathcal{C}=\emptyset$, $\mathcal{NCE}=\emptyset$, and $\mathcal{NCO}=\emptyset$
            \FOR{every variable $X_k$ in $\mathbf{X}$}
		\REPEAT
		\STATE Select two disjoint subsets $\mathbf{A}$ and $\mathbf{B}$   from $\mathbf{X} \setminus {X_k}$ such that  $|\mathbf{A}| = q$, $|\mathbf{B}| = q$.
		\IF{$\mathbf{A}$ and $\mathbf{B}$ satisfy $\mathcal{R}3$ of Lemma \ref{Lemma-rule3}}
		\STATE $\mathcal{NCE}_k  \leftarrow \mathbf{A}$, $\mathcal{NCO}_k  \leftarrow \mathbf{B}$;
		\STATE $\mathcal{C}_k =  \frac{\mathrm{det}(\boldsymbol{\Sigma}_{\{X_k \cup \mathbf{A}\}, \{Y \cup \mathbf{B}\}})}{\mathrm{det}(\boldsymbol{\Sigma}_{\{X_k \cup \mathbf{A}\}, \{X_k \cup \mathbf{B}\}})}$;
		\STATE Break the loop of line 3;
		\ENDIF
		\UNTIL{all possible disjoint subsets $\mathbf{A}$ with $|\mathbf{A}|=q$ and $\mathbf{B}$ with $|\mathbf{B}|=q$ in $\mathbf{X} \setminus X_k$ are selected.}
        \IF{$\mathcal{NCO}_{k} \neq \emptyset$}
		\STATE Continue;
		\ENDIF
         \ENDFOR
            \IF{$\mathcal{NCO}_{k} \neq \emptyset$}
		\STATE Continue;
            \ELSE
            \STATE $\mathcal{C}_k= \mathrm{NA}$. ~// indicating the lack of knowledge to obtain the unbiased causal effect.
		\ENDIF
	\ENSURE
	$\mathcal{C}$, a set that collects the total causal effects of $X_k \in \mathbf{X}$ on $Y$.
		\end{algorithmic}
	\vspace{-0.1cm}
\end{algorithm}

\section{Discussion on the Consistency Result and Convergence Rate of Theorem \ref{Theorem-Corr-Rank}}\label{Appendix-Consistency-The2}
\subsection{Discussion on the Consistency Result}
\textbf{General description}. Theorem 2 shows that the true causal effect obtained by the Proxy-Rank is correct in the sense that the proxy variables relative to the causal relationship of interest are valid and the causal effect in the output $\mathcal{C}$ is the true value.
In fact, the consistency results of our estimation depend on two processes: first, appropriately selecting proximal variables; next, based on the selected proxy variables, the nonparametric estimator $\hat{\beta}_{X_k\to Y}$ is obtained using Eq. (3). 
For Theorem 2, the formal statement can be expressed as follows: 
let $\mathrm{cond1}=(\mathrm{rk} (\hat{\boldsymbol{\Sigma}}_{\{X_k, Q, \mathbf{A}_1\}, \{X_k, Y, \mathbf{B}_1\}}) \leq q+1, \mathrm{rk}(\hat {\boldsymbol{\Sigma}}_{\{X_k, \mathbf{A}_1\}, \{Q,\mathbf{B}_1\}} \leq q) $, where $|\mathbf{A}_1|=q$ and $|\mathbf{B}_1|=q$ (Lemma 1),
and $\mathrm{cond2}=(\mathrm{rk} (\hat{\boldsymbol{\Sigma}}_{\{X_k, \mathbf{A}_2\}, \{X_k, Y,\mathbf{B}_2\}}) \leq q+1, \mathrm{rk}(\hat {\boldsymbol{\Sigma}}_{\{X_k, \mathbf{A}_2\}, \{\mathbf{B}_2\}})) \leq q)$, where $|\mathbf{A}_2|=q+1$ and $|\mathbf{B}_2|=q+1$ (Lemma 2).
\begin{align}
    \lim_{n\to\infty} \mathrm{pr} (|\hat{\beta}_{X_k\to Y}-{\beta}_{X_k\to Y}| >\epsilon \mid 
 \mathrm{cond1}~or~\mathrm{cond2} )=0
\end{align}
 for all $\epsilon>0$, which implies that the estimated causal effect $\hat{\beta}_{X_k\to Y}$ obtained by the Proxy-Rank algorithm is consistent. 

\textbf{Proof details.} Before presenting the detailed proof, we first introduce two lemmas that will aid in our demonstration. Let

 $$\Delta_1=\mathrm{rk} (\hat{\boldsymbol{\Sigma}}_{\{X_k, \mathbf{A}\},  \mathbf{B}})-\mathrm{rk} ({\boldsymbol{\Sigma}}_{\{X_k, \mathbf{A}\},  \mathbf{B}}),~\Delta_2=\mathrm{rk}(\hat {\boldsymbol{\Sigma}}_{\{X_k, \mathbf{A}\}, \{X_k,Y,\mathbf{B}\}}) -\mathrm{rk}(\boldsymbol{\Sigma}_{\{X_k, \mathbf{A}\}, \{X_k,Y,\mathbf{B}\}}) ,$$

\begin{Lemma}
\label{lem:consistency-Delta}
$ \Delta_1=o_p(1)$ and $ \Delta_2=o_p(1)$, or equivalently, 
 $  \lim_{n\to\infty} \mathrm{pr}( \vert\mathrm{rk} (\hat{\boldsymbol{\Sigma}}_{\{X_k, \mathbf{A}\},  \mathbf{B}})-\mathrm{rk} ({\boldsymbol{\Sigma}}_{\{X_k, \mathbf{A}\},  \mathbf{B}})\vert >\epsilon  ) =0$ and $\lim_{n\to\infty} \mathrm{pr}(  \vert \mathrm{rk}(\hat {\boldsymbol{\Sigma}}_{\{X_k, \mathbf{A}\}, \{X_k,Y,\mathbf{B}\}}) -\mathrm{rk}(\boldsymbol{\Sigma}_{\{X_k, \mathbf{A}\}, \{X_k,Y,\mathbf{B}\}}) \vert >\epsilon  ) =0$ for all $\epsilon>0$. 
\end{Lemma}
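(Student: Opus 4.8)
The plan is to reduce this integer-valued statement to an exact-recovery event. Since $\mathrm{rk}(\hat{\boldsymbol{\Sigma}})$ and $\mathrm{rk}(\boldsymbol{\Sigma})$ are both nonnegative integers, for any $\epsilon\in(0,1)$ the event $\{|\mathrm{rk}(\hat{\boldsymbol{\Sigma}})-\mathrm{rk}(\boldsymbol{\Sigma})|>\epsilon\}$ coincides with $\{\mathrm{rk}(\hat{\boldsymbol{\Sigma}})\neq\mathrm{rk}(\boldsymbol{\Sigma})\}$. Hence it suffices to prove $\mathrm{pr}(\mathrm{rk}(\hat{\boldsymbol{\Sigma}})=\mathrm{rk}(\boldsymbol{\Sigma}))\to 1$ separately for each of the two submatrices appearing in $\Delta_1$ and $\Delta_2$, and $o_p(1)$ then follows immediately.

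First I would establish entrywise consistency of the sample cross-covariance matrix. Under Eq.~\ref{Eq-model-defi} every variable is a finite linear combination of mutually independent noise terms with finite second moments, so each entry $\hat{\sigma}_{ij}=n^{-1}\sum_{t}X_i^{(t)}X_j^{(t)}$ converges in probability to $\sigma_{ij}=\mathbb{E}(X_iX_j)$ by the weak law of large numbers; consequently $\hat{\boldsymbol{\Sigma}}_{\mathbf{A},\mathbf{B}}\to_p\boldsymbol{\Sigma}_{\mathbf{A},\mathbf{B}}$ in operator norm, at the usual $O_p(n^{-1/2})$ rate when fourth moments are finite. Writing $r=\mathrm{rk}(\boldsymbol{\Sigma}_{\mathbf{A},\mathbf{B}})$, the ordered singular values satisfy $\sigma_1\ge\cdots\ge\sigma_r>0=\sigma_{r+1}=\cdots$, and Weyl's perturbation inequality gives $|\hat{\sigma}_i-\sigma_i|\le\|\hat{\boldsymbol{\Sigma}}_{\mathbf{A},\mathbf{B}}-\boldsymbol{\Sigma}_{\mathbf{A},\mathbf{B}}\|_{\mathrm{op}}=o_p(1)$. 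Thus the ``signal'' singular values stay bounded away from zero while the ``null'' ones shrink to zero in probability.

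The hard part, and the main obstacle, is that the raw algebraic rank of $\hat{\boldsymbol{\Sigma}}_{\mathbf{A},\mathbf{B}}$ is generically full and therefore does \emph{not} converge to $r$; the estimated rank must instead be produced by the canonical-correlation rank test of \citet{Anderson1984} used inside the algorithm. This test rejects the hypothesis $\mathrm{rk}\le j$ when a statistic $T_{n,j}$ (asymptotically $\chi^2$ under the null, diverging at rate $n$ under the alternative) exceeds a critical value. I would calibrate the significance level $\alpha_n\to 0$ with $-\log\alpha_n=o(n)$, equivalently thresholding the $i$-th singular value at a level $\lambda_n$ with $\lambda_n\to 0$ and $\sqrt{n}\,\lambda_n\to\infty$. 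Under this calibration the probability of over-rejecting a true null constraint tends to zero, since there $T_{n,j}=O_p(1)$, while the probability of failing to reject a false null also tends to zero, since there $T_{n,j}\to_p\infty$; combining the two shows the sequential procedure returns $\hat{r}=r$ with probability tending to one.

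Finally I would assemble the pieces. The estimated rank $\hat r$ is defined through finitely many nested hypotheses, their number bounded by the fixed matrix dimensions $q+1$, so a union bound over these per-test error rates preserves the conclusion that the total misclassification probability vanishes. Applying this argument to the pair $(\{X_k,\mathbf{A}\},\mathbf{B})$ yields $\Delta_1=o_p(1)$ and to $(\{X_k,\mathbf{A}\},\{X_k,Y,\mathbf{B}\})$ yields $\Delta_2=o_p(1)$, which is exactly the claimed statement; the subsequent plug-in continuity of the determinant ratio in Eq.~\ref{Eq-multiple-proximal-inference} would then deliver consistency of $\hat{\beta}_{X_k\to Y}$ in the surrounding theorem.
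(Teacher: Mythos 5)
Your proposal is correct, but it takes a genuinely different --- and more careful --- route than the paper's own proof. The paper's argument is purely algebraic: it characterizes $\mathrm{rk}({\boldsymbol{\Sigma}}_{\{X_k, \mathbf{A}\},\mathbf{B}})=r$ via the rank normal form $C\boldsymbol{\Sigma}D=\bigl(\begin{smallmatrix} I_r & 0\\ 0 & 0\end{smallmatrix}\bigr)$ for invertible $C,D$, applies the law of large numbers to get $C\hat{\boldsymbol{\Sigma}}D=\bigl(\begin{smallmatrix} I_r & 0\\ 0 & 0\end{smallmatrix}\bigr)+o_p(1)$, and then directly concludes $\mathrm{rk}(\hat{\boldsymbol{\Sigma}})=r+o_p(1)$. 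Taken literally, that final step is exactly the obstacle you isolate: rank is only lower semicontinuous, the $o_p(1)$ perturbation generically fills in the zero block, and the algebraic rank of a sample cross-covariance matrix is almost surely full, so the paper's conclusion is only sensible if $\mathrm{rk}(\hat{\boldsymbol{\Sigma}})$ is read as the output of a statistical rank estimator rather than the raw matrix rank. You make that reading explicit and then supply the consistency argument the paper leaves implicit: the reduction of the $o_p(1)$ claim to an exact-recovery event via integer-valuedness, Weyl's inequality to keep the $r$ signal singular values bounded away from zero while the null ones vanish, and the sequential canonical-correlation rank test run at a vanishing level $\alpha_n$ with $-\log\alpha_n=o(n)$ (equivalently a singular-value threshold $\lambda_n\to 0$ with $\sqrt{n}\,\lambda_n\to\infty$), closed by a union bound over the fixed number of nested hypotheses. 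What the paper's route buys is brevity and independence from any particular estimator; what yours buys is an argument that is actually valid under the interpretation matching the algorithm's implementation (the paper does use Anderson's rank test in its experiments), at the modest price of committing to a level calibration that the paper never states.
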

\begin{proof}
    The rank of $\mathrm{rk} ({\boldsymbol{\Sigma}}_{\{X_k, \mathbf{A}\},  \mathbf{B}})$ is equal to $r$ if and only if there exists an invertible $m \times m$ matrix $C$ and an invertible $n \times n$ matrix $D$ such that
    $$C{\boldsymbol{\Sigma}}_{\{X_k, \mathbf{A}\},  \mathbf{B}}D=\begin{pmatrix}
        I_r&0\\0&0
    \end{pmatrix}.$$
    By the law of large numbers, we have that $\hat{\boldsymbol{\Sigma}}_{\{X_k, \mathbf{C}\},  \mathbf{B}}={\boldsymbol{\Sigma}}_{\{X_k, \mathbf{C}\},  \mathbf{B}}+o_p(1)$  as well as $$C\hat{\boldsymbol{\Sigma}}_{\{X_k, \mathbf{C}\},  \mathbf{B}}D=C{\boldsymbol{\Sigma}}_{\{X_k, \mathbf{A}\},  \mathbf{B}}D+o_p(1)=\begin{pmatrix}
        I_r&0\\0&0
    \end{pmatrix}+o_p(1).$$
    As a result, we have $\mathrm{rk} (\hat{\boldsymbol{\Sigma}}_{\{X_k, \mathbf{A}\},  \mathbf{B}})=r+o_p(1)=\mathrm{rk} ({\boldsymbol{\Sigma}}_{\{X_k, \mathbf{A}\},  \mathbf{B}})+o_p(1)$. The proof for $ \mathrm{rk}(\hat {\boldsymbol{\Sigma}}_{\{X_k, \mathbf{A}\}, \{X_k,Y,\mathbf{B}\}})$ is similar, we thus omit for simplicity.
\end{proof}
\begin{Lemma}
\label{lem:consistency-beta}
    $ \lim_{n\to\infty} \mathrm{pr}(  \vert \hat{\beta}_{X_k\to Y}-{\beta}_{X_k\to Y}\vert >\epsilon  \mid \mathrm{rk} ({\boldsymbol{\Sigma}}_{\{X_k, \mathbf{A}\},  \mathbf{B}}) \leq q ,   \mathrm{rk}(\boldsymbol{\Sigma}_{\{X_k, \mathbf{A}\}, \{X_k,Y,\mathbf{B}\}}) \leq q+1)=0$ for all $\epsilon>0$. 
\end{Lemma}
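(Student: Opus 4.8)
The plan is to recognize $\hat{\beta}_{X_k\to Y}$ as a fixed continuous transformation of the sample covariance matrix, evaluated at a point where the population denominator is nonzero, and then to combine the law of large numbers with the continuous mapping theorem. I would first note that the conditioning event concerns only the \emph{population} matrices $\boldsymbol{\Sigma}$, so it is a deterministic property of the true model: conditioning on $\mathrm{rk}({\boldsymbol{\Sigma}}_{\{X_k, \mathbf{A}\}, \mathbf{B}}) \leq q$ and $\mathrm{rk}(\boldsymbol{\Sigma}_{\{X_k, \mathbf{A}\}, \{X_k,Y,\mathbf{B}\}}) \leq q+1$ amounts to assuming that $\mathbf{A}$ and $\mathbf{B}$ are valid NCE and NCO relative to $X_k\to Y$ (Lemma \ref{Lemma-rule2}).

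Granting this, I would invoke Proposition \ref{Pro-Multi-Proxy-Estimator} to record the population identity
\begin{align}
    {\beta}_{X_k\to Y} = \frac{\mathrm{det}(\boldsymbol{\Sigma}_{\{X_k \cup \mathbf{A}\}, \{Y \cup \mathbf{B}\}})}{\mathrm{det}(\boldsymbol{\Sigma}_{\{X_k \cup \mathbf{A}\}, \{X_k \cup \mathbf{B}\}})},
\end{align}
so that the true effect equals the population value of the plug-in ratio. By Remark \ref{remark-necessary-conditions}, validity of the proxies forces the relevant denominator matrix to be full rank, hence $\mathrm{det}(\boldsymbol{\Sigma}_{\{X_k \cup \mathbf{A}\}, \{X_k \cup \mathbf{B}\}})$ is a fixed nonzero constant $v_0$.

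Next I would supply the statistical convergence. Under Equation \ref{Eq-model-defi} all second moments are finite, so by the law of large numbers $\hat{\boldsymbol{\Sigma}} = \boldsymbol{\Sigma} + o_p(1)$ entrywise. The determinant is a polynomial, hence continuous, in the entries of its argument, so the continuous mapping theorem gives $\mathrm{det}(\hat{\boldsymbol{\Sigma}}_{\{X_k \cup \mathbf{A}\}, \{Y \cup \mathbf{B}\}}) = \mathrm{det}(\boldsymbol{\Sigma}_{\{X_k \cup \mathbf{A}\}, \{Y \cup \mathbf{B}\}}) + o_p(1)$ and likewise for the denominator determinant. Since the map $(u,v)\mapsto u/v$ is continuous at any point $(\cdot,v_0)$ with $v_0\neq 0$, a final application of the continuous mapping theorem (equivalently Slutsky's theorem) yields $\hat{\beta}_{X_k\to Y} = {\beta}_{X_k\to Y}+o_p(1)$, which is exactly the stated conditional convergence.

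The only point requiring care — and the one I would write out explicitly — is that the denominator is itself random, so I must rule out near-cancellation. Because its population limit is the fixed nonzero value $v_0$, convergence in probability gives $\mathrm{pr}(|\mathrm{det}(\hat{\boldsymbol{\Sigma}}_{\{X_k \cup \mathbf{A}\}, \{X_k \cup \mathbf{B}\}})| < |v_0|/2) \to 0$; on the complementary event the ratio is well defined and, by the numerator/denominator convergence above, lies within any prescribed $\epsilon$-ball of $\beta_{X_k\to Y}$ with probability tending to one. No appeal to non-Gaussianity or to rank-faithfulness beyond what the conditioning already provides is needed here, since everything reduces to continuity of the determinant and of the ratio.
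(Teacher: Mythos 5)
Your proof is correct, and its first step coincides with the paper's: both arguments begin by observing that the conditioning event involves only \emph{population} covariance matrices, hence is deterministic, and that by Lemma \ref{Lemma-rule2} it certifies $\mathbf{A}$ and $\mathbf{B}$ as valid NCE and NCO relative to $X_k \to Y$. Where you genuinely diverge is the consistency step. The paper disposes of it in one sentence by appealing to standard M-estimation theory (Theorem 5.41 of \citet{van2000asymptotic}), treating $\hat{\beta}_{X_k\to Y}$ as the solution of an estimating equation; you instead give a self-contained plug-in argument: the weak law of large numbers for the entries of the sample covariance matrix, continuity of the determinant as a polynomial in those entries, and continuity of $(u,v)\mapsto u/v$ at any point with $v_0\neq 0$, where the non-vanishing of the population denominator is guaranteed by Remark \ref{remark-necessary-conditions} (and indeed the paper's proofs of Lemmas \ref{Lemma-rule1} and \ref{Lemma-rule2} note that these matrices are always full rank under the model of Equation \ref{Eq-model-defi} with faithfulness). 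Your route is more elementary and arguably better matched to what the algorithms actually compute, since $\hat{\beta}_{X_k\to Y}$ is literally the determinant ratio of Proposition \ref{Pro-Multi-Proxy-Estimator} evaluated at $\hat{\boldsymbol{\Sigma}}$; it also makes explicit the one delicate point, ruling out near-cancellation in the random denominator, which the citation to M-estimation theory leaves implicit. What the paper's route buys in exchange is immediate access to the finer asymptotics, namely the $\sqrt{n}$ rate and asymptotic normality invoked in the convergence-rate discussion of Appendix \ref{Appendix-Consistency-The2}, which your continuous-mapping argument alone does not deliver.
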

\begin{proof}
  According to Lemma 2 in that main text, we know that the event $\mathrm{rk} ({\boldsymbol{\Sigma}}_{\{X_k, \mathbf{A}\},  \mathbf{B}}) \leq q ,   \mathrm{rk}(\boldsymbol{\Sigma}_{\{X_k, \mathbf{A}\}, \{X_k,Y,\mathbf{B}\}}) \leq q+1 $ implies $\mathbf{A}$ and  $\mathbf{B}$ are valid NCE and NCO relative to $X_k\to Y$ respectively.

  Once we have selected suitable proxies $\mathbf{A}$ and  $\mathbf{B}$, we can utilize   Eq.(3) to establish nonparametric estimator  $\hat \beta_{X_k\to Y}$, and according to standard M-estimation theory, specifically Theorem 5.41 in \citet{van2000asymptotic}, we can demonstrate the consistency of $\hat \beta_{X_k\to Y}$.
\end{proof}
\begin{Proposition}
$ 
    \lim_{n\to\infty}        \mathrm{pr} (\vert \hat{\beta}_{X_k\to Y}-{\beta}_{X_k\to Y}\vert >\epsilon\mid \mathrm{rk} (\hat{\boldsymbol{\Sigma}}_{\{X_k, \mathbf{A}\},  \mathbf{B}}) \leq q,\mathrm{rk}(\hat {\boldsymbol{\Sigma}}_{\{X_k, \mathbf{A}\}, \{X_k,Y,\mathbf{B}\}}) \leq q+1)=0 $ for all $\epsilon>0$. 
\end{Proposition}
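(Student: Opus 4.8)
The plan is to derive this final statement directly from the two preceding lemmas, \cref{lem:consistency-Delta} and \cref{lem:consistency-beta}, the only new ingredient being a bridge from conditioning on the \emph{empirical} rank constraints (with hats) to conditioning on the \emph{population} rank constraints (without hats). To organize this, I would write
\[
E_n = \{\mathrm{rk}(\hat{\boldsymbol{\Sigma}}_{\{X_k, \mathbf{A}\}, \mathbf{B}}) \leq q\} \cap \{\mathrm{rk}(\hat{\boldsymbol{\Sigma}}_{\{X_k, \mathbf{A}\}, \{X_k, Y, \mathbf{B}\}}) \leq q+1\}
\]
for the random event in the conditioning, and let $E$ denote the analogous population event obtained by deleting the hats. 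The key structural remark is that $E$ is \emph{deterministic}: the population covariance $\boldsymbol{\Sigma}$ is fixed, so the rank deficiencies either hold in the population or they do not, and in the former case $E$ is the whole sample space.

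First I would show that $E_n$ and $E$ asymptotically agree, i.e.\ $\mathrm{pr}(E_n \triangle E)\to 0$. This is exactly the content of \cref{lem:consistency-Delta}: since $\mathrm{rk}(\hat{\boldsymbol{\Sigma}}) = \mathrm{rk}(\boldsymbol{\Sigma}) + o_p(1)$ for both submatrices, a consistent rank test reproduces the population rank with probability tending to one. Splitting on the (deterministic) status of $E$: if $E$ holds then $\mathrm{pr}(E_n)\to 1$, so $\mathrm{pr}(E_n \triangle E)=\mathrm{pr}(E_n^c)\to 0$; if $E$ fails then $\mathrm{pr}(E_n)\to 0$, so $\mathrm{pr}(E_n \triangle E)=\mathrm{pr}(E_n)\to 0$. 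Either way the two conditioning events coincide in the limit.

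The operationally relevant case is when $E$ holds, for then \cref{Lemma-rule2} (via rank-faithfulness) guarantees that $\mathbf{A}$ and $\mathbf{B}$ are valid NCE and NCO, and \cref{lem:consistency-beta} yields the \emph{unconditional} consistency $\mathrm{pr}(|\hat{\beta}_{X_k\to Y} - \beta_{X_k\to Y}| > \epsilon)\to 0$ (the population conditioning in that lemma being vacuous since $E$ is the whole space). Writing $A=\{|\hat{\beta}_{X_k\to Y} - \beta_{X_k\to Y}| > \epsilon\}$, I would then expand
\[
\mathrm{pr}(A \mid E_n) = \frac{\mathrm{pr}(A \cap E_n)}{\mathrm{pr}(E_n)},
\]
bound the numerator using $A \cap E_n \subseteq A$ so that $\mathrm{pr}(A\cap E_n)\le \mathrm{pr}(A)\to 0$, and observe that the denominator satisfies $\mathrm{pr}(E_n)\to 1$ by the bridging step. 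The ratio therefore tends to $0$, which is the claim.

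The step I expect to be the main obstacle is not any hard estimate but the proper handling of the conditioning on the \emph{random} event $E_n$ rather than on the deterministic $E$: one must ensure the conditioning probability $\mathrm{pr}(E_n)$ stays bounded away from $0$ so that the ratio is well behaved, which is precisely why the argument localizes to the case where the population constraints $E$ hold. In the complementary (degenerate) case where $E$ fails, $\mathrm{pr}(E_n)\to 0$, so one is conditioning on a vanishing-probability event and the statement is understood vacuously — equivalently, the algorithm asymptotically never selects such an invalid pair $(\mathbf{A},\mathbf{B})$. A second point requiring care is making the $o_p(1)$ claim of \cref{lem:consistency-Delta} rigorous for ranks, since the empirical matrix is generically of full rank and the event ``$\mathrm{rk}\le r$'' must be read through a consistent rank test with a threshold vanishing at an appropriate rate.
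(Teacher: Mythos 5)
Your proposal is correct and takes essentially the same route as the paper's proof: both arguments rest on Lemma~\ref{lem:consistency-Delta} to transfer the empirical rank constraints to their population counterparts and on Lemma~\ref{lem:consistency-beta} for consistency of $\hat{\beta}_{X_k\to Y}$ once the population constraints hold, your symmetric-difference bookkeeping with the deterministic event $E$ being a tidier packaging of the paper's four-way expansion in $\Delta_1,\Delta_2$ and its use of the integer-rank equivalence $\{\mathrm{rk}\le q+\epsilon\}=\{\mathrm{rk}\le q\}$ for $0<\epsilon<1$. If anything, you are more explicit than the paper on the two delicate points --- the degenerate case where the population event fails (so $\mathrm{pr}(E_n)\to 0$ and the conditioning is read vacuously) and the fact that empirical covariance submatrices are generically full rank, so the events ``$\mathrm{rk}\le r$'' must be interpreted through a consistent rank test --- both of which the paper's proof passes over silently.
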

\begin{proof}
Without loss of generality, we assume that $0<\epsilon<1$. We thus have,
    \begin{align*}
    \mathrm{pr}&(\mathrm{rk} (\hat{\boldsymbol{\Sigma}}_{\{X_k, \mathbf{A}\},  \mathbf{B}}) \leq q,\mathrm{rk}(\hat {\boldsymbol{\Sigma}}_{\{X_k, \mathbf{A}\}, \{X_k,Y,\mathbf{B}\}}) \leq q+1,\vert \hat{\beta}_{X_k\to Y}-{\beta}_{X_k\to Y}\vert >\epsilon)
   \\& =\mathrm{pr}\begin{pmatrix}
       \mathrm{rk} (\hat{\boldsymbol{\Sigma}}_{\{X_k, \mathbf{A}\},  \mathbf{B}})-\mathrm{rk} ({\boldsymbol{\Sigma}}_{\{X_k, \mathbf{A}\},  \mathbf{B}})+\mathrm{rk} ({\boldsymbol{\Sigma}}_{\{X_k, \mathbf{A}\},  \mathbf{B}}) \leq q,\\\mathrm{rk}(\hat {\boldsymbol{\Sigma}}_{\{X_k, \mathbf{A}\}, \{X_k,Y,\mathbf{B}\}}) -\mathrm{rk}(\boldsymbol{\Sigma}_{\{X_k, \mathbf{A}\}, \{X_k,Y,\mathbf{B}\}}) +\mathrm{rk}(\boldsymbol{\Sigma}_{\{X_k, \mathbf{A}\}, \{X_k,Y,\mathbf{B}\}}) \leq q+1,\\\vert \hat{\beta}_{X_k\to Y}-{\beta}_{X_k\to Y}\vert >\epsilon
   \end{pmatrix}\\& =\mathrm{pr}(\Delta_1+\mathrm{rk} ({\boldsymbol{\Sigma}}_{\{X_k, \mathbf{A}\},  \mathbf{B}}) \leq q,\Delta_2   +\mathrm{rk}(\boldsymbol{\Sigma}_{\{X_k, \mathbf{A}\}, \{X_k,Y,\mathbf{B}\}}) \leq q+1,\vert \hat{\beta}_{X_k\to Y}-{\beta}_{X_k\to Y}\vert >\epsilon, \vert\Delta_1\vert >\epsilon, \vert\Delta_2\vert >\epsilon  ) \\&~~~~+\mathrm{pr}(\Delta_1+\mathrm{rk} ({\boldsymbol{\Sigma}}_{\{X_k, \mathbf{A}\},  \mathbf{B}}) \leq q,\Delta_2   +\mathrm{rk}(\boldsymbol{\Sigma}_{\{X_k, \mathbf{A}\}, \{X_k,Y,\mathbf{B}\}}) \leq q+1,\vert \hat{\beta}_{X_k\to Y}-{\beta}_{X_k\to Y}\vert >\epsilon, \vert\Delta_1\vert >\epsilon, \vert\Delta_2\vert \leq\epsilon  )\\&~~~~+\mathrm{pr}(\Delta_1+\mathrm{rk} ({\boldsymbol{\Sigma}}_{\{X_k, \mathbf{A}\},  \mathbf{B}}) \leq q,\Delta_2   +\mathrm{rk}(\boldsymbol{\Sigma}_{\{X_k, \mathbf{A}\}, \{X_k,Y,\mathbf{B}\}}) \leq q+1,\vert \hat{\beta}_{X_k\to Y}-{\beta}_{X_k\to Y}\vert >\epsilon, \vert\Delta_1\vert \leq \epsilon, \vert\Delta_2\vert >\epsilon  ) 
  \\&~~~~+\mathrm{pr}(\Delta_1+\mathrm{rk} ({\boldsymbol{\Sigma}}_{\{X_k, \mathbf{A}\},  \mathbf{B}}) \leq q,\Delta_2   +\mathrm{rk}(\boldsymbol{\Sigma}_{\{X_k, \mathbf{A}\}, \{X_k,Y,\mathbf{B}\}}) \leq q+1,\vert \hat{\beta}_{X_k\to Y}-{\beta}_{X_k\to Y}\vert >\epsilon, \vert\Delta_1\vert \leq\epsilon, \vert\Delta_2\vert \leq\epsilon  )   \\&\leq  \mathrm{pr}( \vert\Delta_1\vert >\epsilon  ) +\mathrm{pr}( \vert\Delta_1\vert >\epsilon  ) +\mathrm{pr}( \vert\Delta_2\vert >\epsilon  ) \\&~~~~+ \mathrm{pr}\begin{pmatrix}
       \Delta_1+\mathrm{rk} ({\boldsymbol{\Sigma}}_{\{X_k, \mathbf{A}\},  \mathbf{B}}) \leq q,  \Delta_2 +\mathrm{rk}(\boldsymbol{\Sigma}_{\{X_k, \mathbf{A}\}, \{X_k,Y,\mathbf{B}\}}) \leq q+1, \vert \hat{\beta}_{X_k\to Y}-{\beta}_{X_k\to Y}\vert >\epsilon,\\
       -\epsilon\leq-\Delta_1 \leq\epsilon ,-\epsilon\leq-\Delta_2 \leq\epsilon 
   \end{pmatrix}  \\&\leq  \mathrm{pr}( \vert\Delta_1\vert >\epsilon  ) +\mathrm{pr}( \vert\Delta_1\vert >\epsilon  ) +\mathrm{pr}( \vert\Delta_2\vert >\epsilon  ) \\&~~~~+ \mathrm{pr}( \mathrm{rk} ({\boldsymbol{\Sigma}}_{\{X_k, \mathbf{A}\},  \mathbf{B}}) \leq q+\epsilon,   \mathrm{rk}(\boldsymbol{\Sigma}_{\{X_k, \mathbf{A}\}, \{X_k,Y,\mathbf{B}\}}) \leq q+1+\epsilon, \vert \hat{\beta}_{X_k\to Y}-{\beta}_{X_k\to Y}\vert >\epsilon )
   \\&\leq o_p(1) +\mathrm{pr}( \mathrm{rk} ({\boldsymbol{\Sigma}}_{\{X_k, \mathbf{A}\},  \mathbf{B}}) \leq q ,   \mathrm{rk}(\boldsymbol{\Sigma}_{\{X_k, \mathbf{A}\}, \{X_k,Y,\mathbf{B}\}}) \leq q+1 , \vert \hat{\beta}_{X_k\to Y}-{\beta}_{X_k\to Y}\vert >\epsilon ) 
                \\&= o_p(1)+\mathrm{pr}(  \vert \hat{\beta}_{X_k\to Y}-{\beta}_{X_k\to Y}\vert >\epsilon  \mid \mathrm{rk} ({\boldsymbol{\Sigma}}_{\{X_k, \mathbf{A}\},  \mathbf{B}}) \leq q ,   \mathrm{rk}(\boldsymbol{\Sigma}_{\{X_k, \mathbf{A}\}, \{X_k,Y,\mathbf{B}\}}) \leq q+1)\\&~~~~\times\mathrm{pr}( \mathrm{rk} ({\boldsymbol{\Sigma}}_{\{X_k, \mathbf{A}\},  \mathbf{B}}) \leq q ,   \mathrm{rk}(\boldsymbol{\Sigma}_{\{X_k, \mathbf{A}\}, \{X_k,Y,\mathbf{B}\}}) \leq q+1 ) 
                \\&\leq o_p(1)+o_p(1)O_p(1)
                \\&=o_p(1),
\end{align*} 
where the two-to-last inequality holds because of (i) Lemma \ref{lem:consistency-Delta}  and (ii) the event $\{\mathrm{rk} ({\boldsymbol{\Sigma}}_{\{X_k, \mathbf{A}\},  \mathbf{B}}) \leq q+\epsilon\}$ is exactly  equivalent to  $\{\mathrm{rk} ({\boldsymbol{\Sigma}}_{\{X_k, \mathbf{A}\},  \mathbf{B}}) \leq q \}$ for $0<\epsilon<1$, and the last inequality holds because of  Lemma \ref{lem:consistency-beta}.

Therefore, we have 
\begin{align*}
    \lim_{n\to\infty} &\mathrm{pr} (|\hat{\beta}_{X_k\to Y}-{\beta}_{X_k\to Y}| >\epsilon\mid \mathrm{rk} (\hat{\boldsymbol{\Sigma}}_{\{X_k, \mathbf{A}\}, \mathbf{B}}) \leq q, \mathrm{rk}(\hat {\boldsymbol{\Sigma}}_{\{X_k, \mathbf{A}\}, \{X_k,Y,\mathbf{B}\}}) \leq q+1)
    \\&=  \lim_{n\to\infty}\dfrac{\mathrm{pr} (|\hat{\beta}_{X_k\to Y}-{\beta}_{X_k\to Y}| >\epsilon, \mathrm{rk} (\hat{\boldsymbol{\Sigma}}_{\{X_k, \mathbf{A}\}, \mathbf{B}}) \leq q, \mathrm{rk}(\hat {\boldsymbol{\Sigma}}_{\{X_k, \mathbf{A}\}, \{X_k,Y,\mathbf{B}\}}) \leq q+1)}{\mathrm{pr} ( \mathrm{rk} (\hat{\boldsymbol{\Sigma}}_{\{X_k, \mathbf{A}\}, \mathbf{B}}) \leq q, \mathrm{rk}(\hat {\boldsymbol{\Sigma}}_{\{X_k, \mathbf{A}\}, \{X_k,Y,\mathbf{B}\}}) \leq q+1)}\\&= \lim_{n\to\infty}\dfrac{\mathrm{pr} (|\hat{\beta}_{X_k\to Y}-{\beta}_{X_k\to Y}| >\epsilon, \mathrm{rk} (\hat{\boldsymbol{\Sigma}}_{\{X_k, \mathbf{A}\}, \mathbf{B}}) \leq q, \mathrm{rk}(\hat {\boldsymbol{\Sigma}}_{\{X_k, \mathbf{A}\}, \{X_k,Y,\mathbf{B}\}}) \leq q+1)}{\mathrm{pr} ( \mathrm{rk} ( {\boldsymbol{\Sigma}}_{\{X_k, \mathbf{A}\}, \mathbf{B}}) \leq q, \mathrm{rk}( {\boldsymbol{\Sigma}}_{\{X_k, \mathbf{A}\}, \{X_k,Y,\mathbf{B}\}}) \leq q+1)}+o_p(1)\\&
    =0.
\end{align*}
\end{proof}

\subsection{Discussion on the Convergence Rate}
Proxy-Rank algorithm involves two states: first, correctly selecting appropriate proxy variables; next, based on the selected proxies, obtaining the nonparametric estimator $\hat{\beta}_{X_k\to Y}$ using Eq. (3). Once we have chosen suitable proxy variables, the nonparametric estimator $\hat \beta_{X_k\to Y}$ of the causal effects based on Eq. (3) will exhibit a $\sqrt n$ rate of convergence, meaning that the random variables $\sqrt{n}(\hat{\beta}_{X_k\to Y} - \beta_{X_k\to Y})$ will converge to a normal distribution. These asymptotic results can still be obtained using standard M-estimation theory, particularly Theorem 5.41 of \citet{van2000asymptotic}.

\section{Discussion and Further Work}\label{Appendix-Discussion}

The preceding sections have provided two different sets of precise
identifiability conditions for the selection of proxy variables of unmeasured confounders, along with their corresponding search algorithms. It's worth noting that these methods theoretically assume that the unobserved confounding variable $\mathbf{U}$ affects both the treatments $\mathbf{X}$ and $Y$, meaning that all entries of matrix $\mathbf{C}$ are non-zero.
This assumption inherently leads to the following two conditions: 1) all variables are mutually dependent on each other, and 2) the two necessary conditions of the extended proxy variables estimator, that is, $\boldsymbol{\Sigma}_{\{X_k, \mathbf{Z}\},  \{Y,\mathbf{W}\}}$ and $\boldsymbol{\Sigma}_{\{X_k, \mathbf{Z}\},  \{X_k,\mathbf{W}\}}$ both are full rank. In some real-world applications, unobserved confounding might not affect all potential treatment variables. Thus, given a causal relationship $X_k \to Y$, before applying the proposed rules $\mathcal{R}1\sim \mathcal{R}3$, one can perform an initial screening as follows: 
\begin{itemize}
    \item [1.] identify the maximal clique set containing both $X_k$ and $Y$, ensuring the removal of variables that are statistically independent of $T$ and $Y$ given the subset in $\mathbf{X}$, and
    \item [2.] identify the candidate sets $\mathbf{Z}$ and $\mathbf{W}$ that satisfy the following two conditions: $\mathrm{rk}(\boldsymbol{\Sigma}_{\{X_k, \mathbf{Z}\},  \{Y,\mathbf{W}\}}) = q+1$ and
$\mathrm{rk}(\boldsymbol{\Sigma}_{\{X_k, \mathbf{Z}\},  \{X_k,\mathbf{W}\}}) = q+1$.
\end{itemize}
The above operation ensures that even if not all entries of matrix $\mathbf{C}$ are non-zero, the identification methods we propose can still be applied.

In this paper, we restrict our attention to linear causal models, which are common in the social sciences and ought to be more common in economics and elsewhere \citep{bollen1989structural,spirtes2000causation}.
One of the future research directions is to address the discrete model or the non-linear causal model, existing techniques, e.g., extended trek separation in \citet{spirtes2013calculation-t-separation} or additive noise model in \citet{hoyer2009ANM}, \citet{zhang2009PNL} and \citet{peters2014causal} may help to address this issue.
Another direction of future work is to extend our results to multiple outcomes setting \citep{wang2017confounder}.

\section{More Results on Experimental Results}\label{Appendix-Section-Simulation}

We here evaluate the performance of the proposed method in an additional \textbf{Mixture case} setting. The data are generated according to the causal graph in Figure \ref{Fig-model-example}, with the noise terms being randomly selected from standard normal distributions and standard exponential distributions.

\begin{figure*}[htp]
  \setlength{\abovecaptionskip}{0pt}
	\setlength{\belowcaptionskip}{-6pt}
	\vspace{-0.1cm}
	\begin{center}
        \includegraphics[height=0.20\textwidth,width=0.9\textwidth]{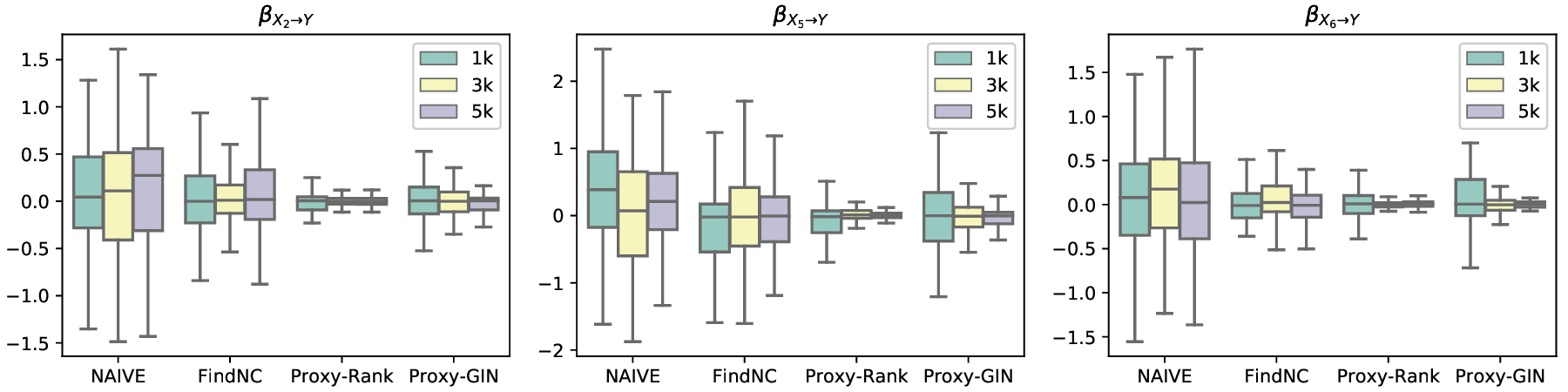}
		\caption{Performance of NAIVE, FindNC, Proxy-Rank, and Proxy-GIN on the Mixture case.}
		\vspace{-0.1cm}
		\label{Fig-simulation-Gaussian-results3} 
	\end{center}
\end{figure*}

Figure \ref{Fig-simulation-Gaussian-results3} summarizes the bias of the estimators of each parameter. As expected, our proposed Proxy-Rank algorithm almost outperforms other methods
(with little bias for all causal effects) with all sample sizes, which indicates that the Proxy-Rank algorithm is a distribution-free method.
An interesting conclusion is that in the Mixture case, the Proxy-GIN algorithm performs equally well, even though Assumption \ref{Ass-non-Gaussianity} is not fully satisfied.
We further noticed that the Proxy-GIN algorithm does not have the same level of stability in the small sample size as the Proxy-Rank algorithm, e.g., 1k for the causal effect of $X_5$ on $Y$ (See Remark \ref{Appendix-Simulation} for more details).
One possible reason is that reliable estimation of higher-order statistics requires much more samples than that of second-order statistics~\citep{hyvarinen2004independent}.


\begin{Remark}\label{Appendix-Simulation}
    Figure \ref{Appendix-Fig-simulation-Gaussian-results} presents a comparative graph of the results obtained from the Proxy-Rank algorithm and the Proxy-GIN algorithm, considering two different distributions: the normal distribution and the exponential distribution, respectively. Our findings reveal that the Proxy-GIN algorithm does not exhibit the same level of stability in cases of small sample sizes, for instance, when the sample size is equal to 1k. One possible explanation for this behavior is that reliable estimation of higher-order statistics typically requires a substantially larger number of samples compared to second-order statistics~\citep{hyvarinen2004independent}.
\end{Remark}

\begin{figure}[htp]
  \setlength{\abovecaptionskip}{0pt}
	\setlength{\belowcaptionskip}{-6pt}
	\vspace{-0.1cm}
	\begin{center}
        \includegraphics[height=0.22\textwidth]{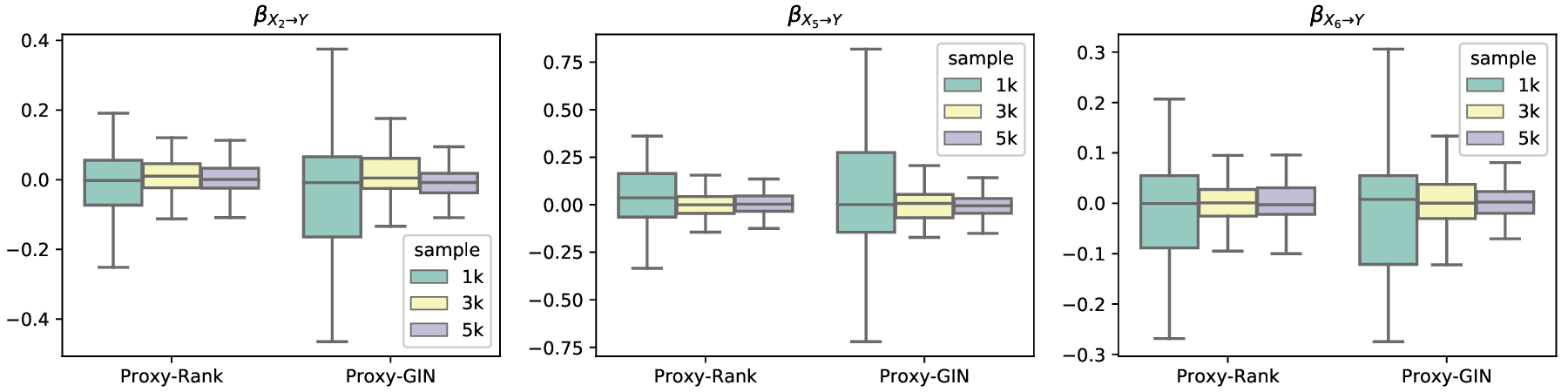}
		\caption{Performance of Proxy-Rank in Gaussian data and Proxy-GIN in Non-Gaussian.}
		\vspace{-0.3cm}
		\label{Appendix-Fig-simulation-Gaussian-results} 
	\end{center}
\end{figure}

\section{More Details of Real-World Application}\label{Appendix-Application}

In this section, we apply the proposed methods to analyze the causal effects of gene expressions on the body weight of F2 mice using the mouse obesity dataset as described by \citet{wang2006genetic}. The dataset we used comprises 17 gene expressions that are known to potentially influence mouse weight, as reported by \citet{lin2015regularization}. Additionally, it includes body weight as the outcome variable and data collected from 227 mice.
As discussed in \citet{miao2022identifying}, gene expression studies like this one may encounter unmeasured confounding issues stemming from batch effects or unobserved phenotypes. Diverging from the approach taken by \citet{miao2022identifying}, we intentionally refrained from incorporating five additional potential instrumental variables from the raw data as prior knowledge in our analysis. This choice was made to underscore the superiority of the proposed algorithm.

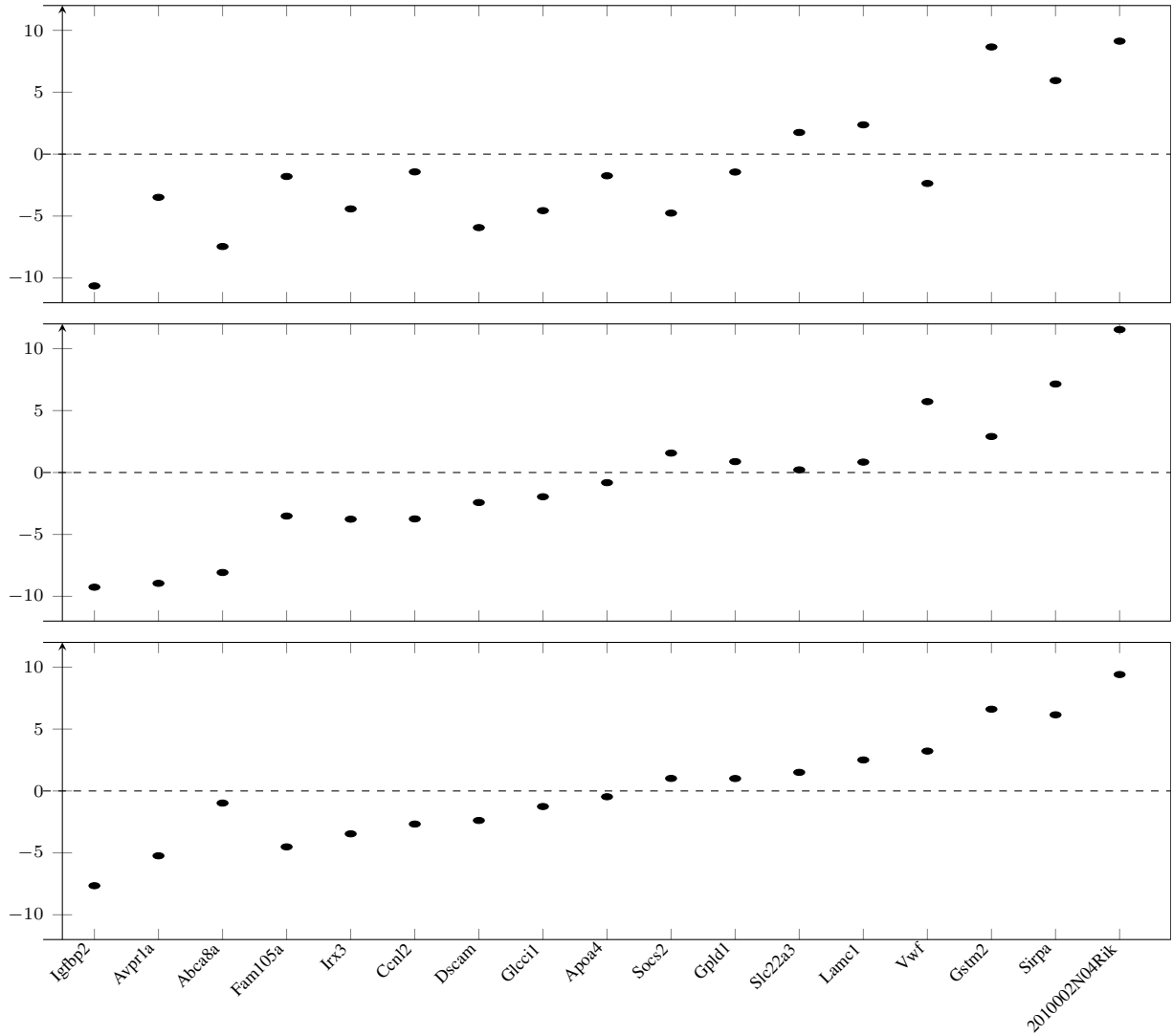
\begin{figure*}[htp!]
  \centering
  \begin{tikzpicture}
    \begin{axis}[
      xscale=1.8,
      height=0.25\textheight, 
      width=0.6\textwidth, 
      grid=both, 
      xtick={1,3,5,7,9,11,13,15,17,19,21,23,25,27,29,31,33}, 
      xticklabels={}, 
      ytick={-10,-5,0,5,10}, 
      ymin=-12, 
      ymax=12,  
      minor grid style={draw=none}, 
      major grid style={draw=none}, 
      axis y line=center, 
      x tick label style={
        font=\scriptsize, 
        rotate=45, 
        anchor=east, 
      },
      y tick label style={
        font=\scriptsize, 
      },
      enlarge x limits=0.05, 
    ]

    \addplot[only marks,
    mark size=1.2pt, 
    ] coordinates {
      (1, -10.64351225)
      (3, -3.488313263)
      (5, -7.461154244)
      (7, -1.80354693)
      (9, -4.42358821)
      (11, -1.433659826)
      (13, -5.934098951)
      (15, -4.563296529)
      (17, -1.7489)
      (19, -4.762170165)
      (21, -1.44844)
      (23, 1.75019)
      (25, 2.36627)
      (27, -2.368145)
      (29, 8.654)
      (31, 5.943)
      (33, 9.1238731)
    };

    \draw [dashed] (axis cs:\pgfkeysvalueof{/pgfplots/xmin},0) -- (axis cs:\pgfkeysvalueof{/pgfplots/xmax},0);

    \draw (axis cs:\pgfkeysvalueof{/pgfplots/xmax},\pgfkeysvalueof{/pgfplots/ymin}) -- (axis cs:\pgfkeysvalueof{/pgfplots/xmax},\pgfkeysvalueof{/pgfplots/ymax});
    \end{axis}
  \end{tikzpicture}\\
  \begin{tikzpicture}
    \begin{axis}[
      xscale=1.8,
      height=0.25\textheight, 
      width=0.6\textwidth, 
      grid=both, 
      xtick={1,3,5,7,9,11,13,15,17,19,21,23,25,27,29,31,33}, 
      xticklabels={}, 
      ytick={-10,-5,0,5,10}, 
      ymin=-12, 
      ymax=12,  
      minor grid style={draw=none}, 
      major grid style={draw=none}, 
      axis y line=center, 
      x tick label style={
        font=\scriptsize, 
        rotate=45, 
        anchor=east, 
      },
      y tick label style={
        font=\scriptsize, 
      },
      enlarge x limits=0.05, 
    ]

    \addplot[only marks,
    mark size=1.2pt, 
    ] coordinates {
      (1, -9.26265)
  (3, -8.95)
  (5, -8.07538)
  (7, -3.51623534)
  (9, -3.7700917)
  (11, -3.746280013)
  (13, -2.424537553)
  (15, -1.95757)
  (17, -0.8204804)
  (19, 1.568622)
  (21, 0.87884)
  (23, 0.2136752)
  (25, 0.83977)
  (27, 5.72183943)
  (29, 2.908324)
  (31, 7.145861599)
  (33, 11.54038538)
    };

    \draw [dashed] (axis cs:\pgfkeysvalueof{/pgfplots/xmin},0) -- (axis cs:\pgfkeysvalueof{/pgfplots/xmax},0);

    \draw (axis cs:\pgfkeysvalueof{/pgfplots/xmax},\pgfkeysvalueof{/pgfplots/ymin}) -- (axis cs:\pgfkeysvalueof{/pgfplots/xmax},\pgfkeysvalueof{/pgfplots/ymax});
    \end{axis}
  \end{tikzpicture}\\
  \begin{tikzpicture}
    \begin{axis}[
      xscale=1.8,
      height=0.25\textheight, 
      width=0.6\textwidth, 
      grid=both, 
      xtick={1,3,5,7,9,11,13,15,17,19,21,23,25,27,29,31,33}, 
      xticklabels={
        Igfbp2, Avpr1a, Abca8a, Fam105a, Irx3, Ccnl2, Dscam, Glcci1, Apoa4, Socs2, Gpld1, Slc22a3, Lamc1, Vwf, Gstm2, Sirpa, 2010002N04Rik
      }, 
      ytick={-10,-5,0,5,10}, 
      ymin=-12, 
      ymax=12,  
      minor grid style={draw=none}, 
      major grid style={draw=none}, 
      axis y line=center, 
      x tick label style={
        font=\scriptsize, 
        rotate=45, 
        anchor=east, 
      },
      y tick label style={
        font=\scriptsize, 
      },
      enlarge x limits=0.05, 
    ]

    \addplot[only marks,
    mark size=1.2pt, 
    ] coordinates {
(1, -7.651501919)
  (3, -5.236321651)
  (5, -0.977587633)
  (7, -4.514230122)
  (9, -3.46189616)
  (11, -2.672427166)
  (13, -2.382747491)
  (15, -1.255231874)
  (17, -0.46671752)
  (19, 1.013338162)
  (21, 1.004281978)
  (23, 1.5022134)
  (25, 2.5042853)
  (27, 3.220533)
  (29, 6.6023354)
  (31, 6.15034105)
  (33, 9.403059593)
    };

    \draw [dashed] (axis cs:\pgfkeysvalueof{/pgfplots/xmin},0) -- (axis cs:\pgfkeysvalueof{/pgfplots/xmax},0);

    \draw (axis cs:\pgfkeysvalueof{/pgfplots/xmax},\pgfkeysvalueof{/pgfplots/ymin}) -- (axis cs:\pgfkeysvalueof{/pgfplots/xmax},\pgfkeysvalueof{/pgfplots/ymax});

    \end{axis}
  \end{tikzpicture}
  \caption{Causal Effect estimates for 17 gene expressions on body weight. The first panel depicts Proxy-Rank estimation, the second displays Proxy-GIN estimation, and the last one showcases NAIVE estimation.}
  \label{figure-realdata}
\end{figure*}

Following the analysis conducted by Miao et al., we assume that there is only one latent variable underlying the common influence, and the data generation mechanism adheres to a linear causal model. Figure \ref{figure-realdata} presents the causal effects of the 17 genes on mouse weight with our proposed methods and the NAIVE method. 
We observed that the majority of our findings align with those presented by \citet{miao2022identifying}. For instance, the gene expressions \emph{Gstm2}, \emph{Sirpa}, and \emph{2010002N04Rik} exhibit positive and significant effects on body weight, whereas the gene expression \emph{Dscam} demonstrates a negative impact on body weight. Furthermore, some of our conclusions coincide with prior research results. In particular, \emph{Igfbp2} (Insulin-like growth factor binding protein 2) displays negative and significant effects on body weight, attributable to its role in mitigating the development of obesity, as supported by \citet{wheatcroft2007igf}. Similarly,\emph{Irx3} (Iroquois homebox gene 3) exhibits negative and significant effects on body weight, which can be attributed to its association with lifestyle changes and its pivotal role in weight regulation through energy balance, as elucidated in \citet{schneeberger2019irx3}.

\section{Proofs}\label{Appendix-Section-Proofs}

Before we proceed with presenting the proofs of our results, we require a few additional theorems and definitions.

\begin{Definition}[Trek]
    A \textit{trek} in $\mathcal{G}$ from $i$ to $j$ is an ordered pair of directed paths $(P_1, P_2)$ where $P_1$ has sink $i$, $P_2$ has sink $j$, and both $P_1$ and $P_2$ have the same source $k$. The common source $k$ is called the \textit{top} of the trek, denoted $\operatorname{top}(P_1,P_2)$.
    Note that one or both of $P_1$ and $P_2$ may consist of a single vertex, that is, a path with no edges. 
\end{Definition}
\begin{Definition}[{trek-separation (t-separation)}]
Let $\mathbf{A},\mathbf{B},\mathbf{C_{A}}$, and $\mathbf{C_{B}}$ be four variable subsets. We say the order pair $(\mathbf{C_{A}}, \mathbf{C_{B}})$ {t-separates} $\mathbf{A}$ from $\mathbf{B}$ if for every trek $(P_1;P_2)$ from a vertex in $\mathbf{A}$ to a vertex in $\mathbf{B}$, either $P_1$ contains a vertex in $\mathbf{C_{A}}$ or $P_2$ contains a vertex in $\mathbf{C_{B}}$.
\end{Definition}
Note that the notion of t-separation is a more general separation criterion than d-separation in a graph (See Theorem 2.11 in \cite{Sullivant-T-separation}). \citet{Sullivant-T-separation} characterized the vanishing determinants of a cross-covariance matrix by using the notion of t-separation.

\begin{Theorem}[Graphical Representation of Rank Constraints]Let $\mathcal{G}$ be a linear directed graphical model and $\mathbf{A}$ and $\mathbf{B}$ be two subsets of the variables in $\mathcal{G}$. The $\mathrm{rk}(\boldsymbol{\Sigma}_{\mathbf{A}, \mathbf{B}})$ less than or equal to $r$ for all covariance matrices consistent with the graph $\mathcal{G}$ if and only if there exist subsets $\mathbf{C}_\mathbf{A},\mathbf{C}_\mathbf{B}$ with $|\mathbf{C}_\mathbf{A}|+|\mathbf{C}_\mathbf{B}| \leq r$ such that $(\mathbf{C_{A}}, \mathbf{C_{B}})$ t-separates $\mathbf{A}$ from $\mathbf{B}$. 
\end{Theorem}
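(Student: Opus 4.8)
The plan is to reduce the statement to the combinatorial \emph{trek rule} for linear structural equation models and then run a Lindström–Gessel–Viennot (LGV) style argument, following the original treatment in \citet{Sullivant-T-separation}. Collecting the structural coefficients of the recursive model of Eq.~\ref{Eq-model-defi} into a matrix $\boldsymbol{\Lambda}$ (each variable is its parental linear combination plus an independent error of variance $\omega_s$), the implied covariance is $\boldsymbol{\Sigma} = \mathbf{T}^\intercal \boldsymbol{\Omega}\,\mathbf{T}$, where $\mathbf{T} = (\mathbf{I}-\boldsymbol{\Lambda})^{-1}$ has entries $t_{sj} = \sum_{P:\, s\to j}\prod_{e\in P}\lambda_e$ summing over directed paths and $\boldsymbol{\Omega}=\mathrm{diag}(\omega_s)$. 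Thus every entry factors through the top of a trek, $\sigma_{ab}=\sum_s \omega_s\, t_{sa}\, t_{sb}$, i.e.\ $\boldsymbol{\Sigma}_{\mathbf{A},\mathbf{B}} = \mathbf{T}_{\cdot,\mathbf{A}}^\intercal\,\boldsymbol{\Omega}\,\mathbf{T}_{\cdot,\mathbf{B}}$, and this identity is the bridge between algebra and combinatorics.

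For the ``if'' direction, suppose $(\mathbf{C}_\mathbf{A},\mathbf{C}_\mathbf{B})$ t-separates $\mathbf{A}$ from $\mathbf{B}$ with $|\mathbf{C}_\mathbf{A}|+|\mathbf{C}_\mathbf{B}|\le r$. The key observation is that t-separation forces every trek from $\mathbf{A}$ to $\mathbf{B}$ to be pinched at $\mathbf{C}_\mathbf{A}\cup\mathbf{C}_\mathbf{B}$, so the bilinear expansion of $\boldsymbol{\Sigma}_{\mathbf{A},\mathbf{B}}$ regroups into a product that factors through an index set of size $|\mathbf{C}_\mathbf{A}|+|\mathbf{C}_\mathbf{B}|$. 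Hence $\mathrm{rk}(\boldsymbol{\Sigma}_{\mathbf{A},\mathbf{B}})\le r$ holds identically in the model parameters.

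For the harder ``only if'' direction, I would take an arbitrary $(r+1)\times(r+1)$ minor $\det(\boldsymbol{\Sigma}_{\mathbf{A}_0,\mathbf{B}_0})$ and apply the Cauchy–Binet formula to the factorization $\mathbf{T}_{\cdot,\mathbf{A}_0}^\intercal\boldsymbol{\Omega}\,\mathbf{T}_{\cdot,\mathbf{B}_0}$, obtaining
\[
\det(\boldsymbol{\Sigma}_{\mathbf{A}_0,\mathbf{B}_0}) = \sum_{|S|=r+1}\Big(\prod_{s\in S}\omega_s\Big)\det(\mathbf{T}_{S,\mathbf{A}_0})\,\det(\mathbf{T}_{S,\mathbf{B}_0}).
\]
Each factor is a determinant of a path-counting matrix, which the LGV lemma rewrites as a signed sum over systems of vertex-disjoint directed paths from $S$ to $\mathbf{A}_0$ (respectively $\mathbf{B}_0$); pairing the two sides re-indexes the minor by \emph{trek systems} with vertex-disjoint left and right halves. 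Arguing contrapositively, if no t-separator of total size $\le r$ exists, then a Menger-type max-flow/min-cut theorem for treks yields a trek system of size $r+1$ whose two sides are vertex-disjoint, and the corresponding monomial survives in the LGV expansion, so the minor is not identically zero and $\mathrm{rk}(\boldsymbol{\Sigma}_{\mathbf{A},\mathbf{B}})\ge r+1$ generically.

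The main obstacle is precisely this last step: one must establish the combinatorial duality between the maximal size of a system of vertex-disjoint treks and the minimal size of a trek-separator, and simultaneously show that the surviving trek-system monomial has a unique preimage in the double LGV sum, so that distinct trek systems contribute algebraically independent monomials and no cancellation destroys the minor. Proving this vertex-disjoint Menger theorem adapted to the two-sided trek structure, together with the sign bookkeeping that guarantees non-cancellation, is the technical heart of the proof; the two rank inequalities then combine to give the claimed equivalence.
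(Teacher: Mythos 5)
Your proposal cannot be checked against an in-paper proof, because the paper does not prove this statement: it quotes the theorem verbatim from \citet{Sullivant-T-separation} and uses it as a black box in the proofs of Lemmas 1--3. What you have written is, in its architecture, a faithful reconstruction of the original Sullivant--Talaska--Draisma argument: the trek-rule factorization $\boldsymbol{\Sigma}_{\mathbf{A},\mathbf{B}}=\mathbf{T}_{\cdot,\mathbf{A}}^{\intercal}\boldsymbol{\Omega}\,\mathbf{T}_{\cdot,\mathbf{B}}$, the easy direction by factoring through the separator (rank subadditivity through inner dimension $|\mathbf{C}_\mathbf{A}|+|\mathbf{C}_\mathbf{B}|$), and the hard direction via Cauchy--Binet, Lindstr\"om--Gessel--Viennot, and a Menger-type duality, with the reduction from ``rank bounded for all consistent covariance matrices'' to ``all $(r+1)$-minors vanish identically as polynomials'' implicit throughout. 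So the route is the right one and matches the source the paper relies on.

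As a proof, however, it stops exactly where the real work begins, and you say so yourself. Two concrete points. First, the trek Menger duality does not need to be invented ad hoc: the source discharges it by a doubling construction. Build an auxiliary DAG consisting of a ``left'' copy of $\mathcal{G}$ with all edges reversed and a ``right'' copy with edges as in $\mathcal{G}$, plus an edge from each vertex's left copy to its right copy; a trek from $\mathbf{A}$ to $\mathbf{B}$ then becomes an ordinary directed path from the left copy of $\mathbf{A}$ to the right copy of $\mathbf{B}$, a system of treks with no sided intersection becomes a family of vertex-disjoint paths, and classical Menger applies, with a minimum vertex cut splitting into left-copy vertices (giving $\mathbf{C}_\mathbf{A}$) and right-copy vertices (giving $\mathbf{C}_\mathbf{B}$). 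This construction also corrects an imprecision in your sketch: ``no sided intersection'' means the left sides of the treks are \emph{pairwise} disjoint and likewise the right sides; a single trek's own two sides always share its top, and intersections between one trek's left side and another's right side are permitted -- the doubled graph is precisely what makes this the natural disjointness notion, and a literal reading of ``a trek system whose two sides are vertex-disjoint'' would be false. Second, non-cancellation: the existence of one sided-disjoint trek system of size $r+1$ does not by itself keep the minor from vanishing, since the double LGV sum is signed; the standard repair is to set to zero every structural coefficient outside the edges of a suitably chosen (e.g., edge-minimal) trek system and verify the minor is nonzero in the restricted model, so that the chosen system's monomial survives with coefficient $\pm 1$. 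With these two steps filled in as in \citet{Sullivant-T-separation}, your outline becomes the complete proof; as it stands it is a correct roadmap with the technical heart deferred.
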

We now quote Darmois-Skitovitch theorem \citep{darmois1953analyse,skitovitch1953property}.
\begin{Theorem}[Darmois-Skitovitch Theorem]
\label{Theorem-DS}
Define two random variables ${V_1}$ and ${V_2}$ as linear combinations of independent random variables ${\varepsilon_i}(i = 1,...,n)$:
\begin{flalign}
V_1 = \sum_{i = 1}^n {{\alpha _i}} {\varepsilon_i}, \quad {V_2} = \sum_{i = 1}^n {{\beta _i}} {\varepsilon_i}.
\end{flalign}
Then, if ${V_1}$ and ${V_2}$ are statistically independent, all variables ${\varepsilon_j}$ for which ${\alpha _j}{\beta _j} \ne 0$ are Gaussian. In other words, if there exists a non-Gaussian ${\varepsilon_j}$ for which ${\alpha _j}{\beta _j} \ne 0$, ${V_1}$ and ${V_2}$ are dependent.
\end{Theorem}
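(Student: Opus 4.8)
The plan is to prove this classical fact through characteristic functions, turning the independence hypothesis into a functional equation whose only solutions force the relevant summands to be Gaussian. Write $\phi_j(u) = \mathbb{E}[e^{iu\varepsilon_j}]$ for the characteristic function of $\varepsilon_j$. Independence of the $\varepsilon_j$ makes the joint characteristic function of $(V_1,V_2)$ factorize as $\mathbb{E}[e^{i(sV_1+tV_2)}] = \prod_{j=1}^n \phi_j(\alpha_j s + \beta_j t)$, while the assumed independence of $V_1$ and $V_2$ gives $\mathbb{E}[e^{i(sV_1+tV_2)}] = \prod_j \phi_j(\alpha_j s)\prod_j \phi_j(\beta_j t)$. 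Each $\phi_j$ is continuous with $\phi_j(0)=1$, so all factors are nonzero on a neighborhood $N$ of the origin; there I may take continuous logarithms $\psi_j = \log\phi_j$ with $\psi_j(0)=0$, obtaining the additively separable functional equation $g(s,t) := \sum_j \psi_j(\alpha_j s + \beta_j t) = \sum_j \psi_j(\alpha_j s) + \sum_j \psi_j(\beta_j t)$ for $(s,t)\in N$.

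The heart of the argument is to extract from this separability that each relevant $\psi_j$ is a quadratic polynomial near $0$. I would group the indices by their direction $[\alpha_j:\beta_j]$ on the projective line, so that the contribution of a direction class $D$ depends on $(s,t)$ only through the single form $v_D = \alpha_D s + \beta_D t$ and equals the log-characteristic function $\Psi_D$ of the direction variable $Y_D = \sum_{j\in D}\lambda_j\varepsilon_j$. Separability of $g$ is equivalent to the vanishing of its mixed rectangle difference $g(s+h,t+k)-g(s+h,t)-g(s,t+k)+g(s,t)\equiv 0$, which upon substitution reads $\sum_D \Delta_{\alpha_D h}\Delta_{\beta_D k}\Psi_D(v_D) = 0$. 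To isolate a chosen class $D_0$ with $\alpha_{D_0}\beta_{D_0}\neq 0$, I would post-compose with one finite-difference operator for each other class $D_i$, taking its increment along $(\beta_{D_i},-\alpha_{D_i})$ so that $v_{D_i}$ is left fixed and $\Psi_{D_i}$ is annihilated; since distinct directions have nonzero cross-determinant, each such operator acts nontrivially on $v_{D_0}$. The net effect is that a finite difference of $\Psi_{D_0}$ of order exceeding the number of distinct directions vanishes identically near $0$, so the continuous function $\Psi_{D_0}$ coincides with a polynomial there, and a classical theorem of Marcinkiewicz rules out any degree above two, leaving $\Psi_{D_0}(v) = i\mu v - \tfrac12\sigma^2 v^2$.

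Finally I would convert this local information into the stated conclusion. A characteristic function whose logarithm is quadratic on a neighborhood of the origin is that of a Gaussian (the quadratic form extends by analyticity of $\exp\Psi_{D_0}$, or one appeals to uniqueness of the moment problem in this regime), so every direction variable $Y_D$ is Gaussian. Since $Y_D = \sum_{j\in D}\lambda_j\varepsilon_j$ is a sum of \emph{independent} random variables that is Gaussian, the Cram\'er Decomposition Theorem \citep{Cramer62} forces each summand, hence each $\varepsilon_j$ with $\alpha_j\beta_j\neq 0$, to be Gaussian; this is exactly the claim, and its contrapositive is the ``in other words'' dependence statement. The main obstacle I anticipate is the middle step: executing the finite-difference annihilation cleanly when several indices share a direction and when the $\psi_j$ are known only to be continuous rather than smooth, together with the local-to-global promotion of the quadratic form — these are precisely the points where the classical treatments invest the most care.
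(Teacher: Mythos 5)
The paper does not actually prove this statement: Theorem~\ref{Theorem-DS} is quoted as a classical result of Darmois and Skitovich \citep{darmois1953analyse,skitovitch1953property}, and the paper only uses it as a black box (in the GIN examples and in the proof of Lemma~\ref{Lemma-rule3}), so there is no internal proof to compare against. Judged on its own merits, your sketch is the canonical characteristic-function proof and is essentially sound: the factorization into the functional equation $\sum_j \psi_j(\alpha_j s+\beta_j t)=\sum_j\psi_j(\alpha_j s)+\sum_j\psi_j(\beta_j t)$ on a neighborhood where no $\phi_j$ vanishes, the grouping of indices by projective direction so that each class contributes only through a single linear form, the annihilation of every other class by a finite difference taken along $(\beta_{D_i},-\alpha_{D_i})$ (which also disposes of the axis classes with $\alpha_D\beta_D=0$ automatically, since one of the two rectangle increments is then the zero step), and the nonvanishing of the induced step on $v_{D_0}$ via the cross-determinant are all correct. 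Two points deserve the care you flag. First, Marcinkiewicz's theorem is a global statement about characteristic functions of the form $\exp P$ with $P$ a polynomial, so the local-to-global step should precede it rather than follow it: from $\Psi_{D_0}$ polynomial near $0$ one concludes $\phi_{Y_{D_0}}$ is real-analytic at the origin, hence extends to a horizontal strip and agrees with the entire function $\exp\Psi_{D_0}$ on all of $\mathbb{R}$ by the identity theorem, and only then does Marcinkiewicz cap the degree at two; alternatively one cites the local form of Marcinkiewicz directly. Second, the difference lemma (a continuous $f$ with $\Delta_{u_1}\cdots\Delta_{u_m}f\equiv 0$ for all sufficiently small steps is locally a polynomial of degree at most $m-1$) needs the steps to range freely over small intervals, which your construction does supply because each induced step scales linearly with the freely chosen $h$, $k$, and auxiliary increment lengths. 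The closing appeal to Cram\'er's decomposition theorem \citep{Cramer62} (with the usual convention that degenerate laws count as Gaussian) correctly converts Gaussianity of each direction variable $Y_D$ into Gaussianity of every $\varepsilon_j$ with $\alpha_j\beta_j\neq 0$, and the stated dependence claim is just the contrapositive. In short: no gap, a correct sketch of the standard argument, with the genuinely delicate steps accurately identified; it simply cannot be compared to a paper proof that does not exist.
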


We next introduce the graphical representation of GIN constraints as presented in \citep{xie2023generalized}.
\begin{Theorem}[Graphical Representation of GIN Constraints Theorem]
    Let $\mathcal{G}$ be a linear directed graphical model. Let $\mathcal{Y}$, $\mathcal{Z}$ be two sets of observed variables in $\mathcal{G}$. 
$(\mathcal{Y}$, $\mathcal{Z})$ satisfies the GIN condition (while with the same $\mathcal{Z}$, no proper subset of $\mathcal{Y}$ does) if and only if there exists a $\mathcal{S}$ with $0\leq |\mathcal{S}| \leq \textrm{min}(|\mathcal{Y}|-1, |\mathcal{Z}|)$ such that 1) the order pair $(\emptyset, \mathcal{S})$ t-separates $\mathcal{Z}$ and $\mathcal{Y}$, and that 2) the covariance matrix of $\mathcal{S}$ and $\mathcal{Z}$ has rank $s$, and so does that of $\mathcal{S}$ and $\mathcal{Y}$.
\end{Theorem}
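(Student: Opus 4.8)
The plan is to work in the reduced form of the linear SEM, writing $\mathcal{Y}=\mathbf{A}\boldsymbol{\varepsilon}$ and $\mathcal{Z}=\mathbf{B}\boldsymbol{\varepsilon}$, where $\boldsymbol{\varepsilon}$ collects the mutually independent exogenous noises (one per node) and the mixing coefficient $\mathbf{A}_{ij}$ equals the total directed-path effect of node $j$ on the $i$-th coordinate of $\mathcal{Y}$ (generically nonzero iff $j$ is an ancestor of that coordinate). Three facts become the workhorses: (i) the trek rule $\boldsymbol{\Sigma}_{\mathcal{Y},\mathcal{Z}}=\mathbf{A}\mathbf{D}\mathbf{B}^\intercal$, with $\mathbf{D}$ the diagonal noise-variance matrix, which is the algebraic shadow of the trek decomposition of covariances; (ii) the Darmois--Skitovitch Theorem (Theorem \ref{Theorem-DS}), which for $\gamma=\mathbf{A}^\intercal\omega$ says the scalar $\omega^\intercal\mathcal{Y}=\gamma^\intercal\boldsymbol{\varepsilon}$ is independent of $\mathcal{Z}=\mathbf{B}\boldsymbol{\varepsilon}$ iff $\gamma$ and the rows of $\mathbf{B}$ share no common non-Gaussian noise, i.e. $\gamma_j=0$ for every $j\in\mathrm{An}(\mathcal{Z})$; and (iii) the Graphical Representation of Rank Constraints (Sullivant's t-separation rank theorem, quoted above), which equates the minimal size of a trek-blocking set with the rank of the corresponding sub-covariance matrix. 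The key reduction I would record first is that the support condition $\gamma_j=0$ on $\mathrm{An}(\mathcal{Z})$ already implies the covariance orthogonality $\omega^\intercal\boldsymbol{\Sigma}_{\mathcal{Y},\mathcal{Z}}=0$ (since $\mathbf{B}_{kj}=0$ off $\mathrm{An}(\mathcal{Z})$), so the whole GIN condition collapses to ``there is a nonzero $\omega$ killing the $\mathrm{An}(\mathcal{Z})$-columns of $\mathbf{A}$.''

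For the direction graphical $\Rightarrow$ GIN, I would use t-separation of $\mathcal{Z}$ from $\mathcal{Y}$ by $(\emptyset,\mathcal{S})$ to show that every directed path from a node $j\in\mathrm{An}(\mathcal{Z})$ to any coordinate of $\mathcal{Y}$ passes through $\mathcal{S}$: pairing such a path with a path $j\rightsquigarrow z$ produces a trek from $\mathcal{Z}$ to $\mathcal{Y}$, which t-separation forces to be blocked on the $\mathcal{Y}$-side. Decomposing each path at its first entry into $\mathcal{S}$ then exhibits each column $\mathbf{A}_{\cdot,j}$ ($j\in\mathrm{An}(\mathcal{Z})$) as a linear combination of the total-effect vectors $\{\mathbf{t}(s\!\to\!\mathcal{Y})\}_{s\in\mathcal{S}}$, so $\mathrm{span}\{\mathbf{A}_{\cdot,j}:j\in\mathrm{An}(\mathcal{Z})\}$ lies in a subspace of dimension at most $|\mathcal{S}|\le|\mathcal{Y}|-1$. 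Its orthogonal complement in $\mathbb{R}^{|\mathcal{Y}|}$ is therefore nonzero; any $\omega$ in it satisfies $\gamma_j=0$ on $\mathrm{An}(\mathcal{Z})$, which by fact (i) gives $\omega^\intercal\boldsymbol{\Sigma}_{\mathcal{Y},\mathcal{Z}}=0$ and by fact (ii) gives $\omega^\intercal\mathcal{Y}\CI\mathcal{Z}$, establishing the GIN condition for $\mathcal{Y}$.

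For the converse GIN $\Rightarrow$ graphical, I would start from a witnessing $\omega$, apply fact (ii) to deduce that $\gamma=\mathbf{A}^\intercal\omega$ vanishes on $\mathrm{An}(\mathcal{Z})$, whence $\mathrm{rk}(\boldsymbol{\Sigma}_{\mathcal{Y},\mathcal{Z}})\le|\mathcal{Y}|-1$ because the column span lies in $\omega^\perp$. Interpreting this rank bound through fact (iii) yields a trek-blocking pair of total size equal to $\mathrm{rk}(\boldsymbol{\Sigma}_{\mathcal{Y},\mathcal{Z}})$; I would then argue that the minimality clause (no proper subset of $\mathcal{Y}$ admits a GIN witness) lets me take the $\mathcal{Z}$-side component of the cut empty and set $\mathcal{S}$ equal to a minimal $\mathcal{Y}$-side blocking set, since any vertex forced onto the $\mathcal{Z}$-side could be absorbed after shrinking $\mathcal{Y}$. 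The exactness (min-cut $=$ rank) in Sullivant's theorem would then give $|\mathcal{S}|=\mathrm{rk}(\boldsymbol{\Sigma}_{\mathcal{Y},\mathcal{Z}})$ together with the two rank equalities $\mathrm{rk}(\boldsymbol{\Sigma}_{\mathcal{S},\mathcal{Z}})=\mathrm{rk}(\boldsymbol{\Sigma}_{\mathcal{S},\mathcal{Y}})=|\mathcal{S}|$ and the size bound $|\mathcal{S}|\le\min(|\mathcal{Y}|-1,|\mathcal{Z}|)$.

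The hardest part, and where I expect to spend most of the care, is the bookkeeping that ties the minimality of $\mathcal{Y}$ in the GIN condition to the requirement that the blocking set can be placed entirely on the $\mathcal{Y}$-side while simultaneously meeting both exact rank equalities. Two delicate points drive this: treks may cross $\mathcal{S}$ more than once (so the clean matrix identity $\boldsymbol{\Sigma}_{\mathcal{Y},\mathcal{Z}}=\mathbf{F}\,\boldsymbol{\Sigma}_{\mathcal{S},\mathcal{Z}}$ must be justified by choosing $\mathcal{S}$ at the ``first entry'' layer so each trek is counted once), and the exact ranks can only be asserted under a genericity/rank-faithfulness hypothesis that rules out accidental cancellation of the generically full-rank total-effect matrices. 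The span-containment and dimension count make the existence of $\omega$ robust regardless of these subtleties; it is precisely the passage from ``$\mathrm{rk}\le|\mathcal{Y}|-1$'' to the tight $\mathcal{Y}$-minimal blocking set with matching ranks that requires the min-cut$=$rank exactness of Sullivant's theorem together with rank-faithfulness to close.
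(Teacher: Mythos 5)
First, a point of comparison that matters: the paper does not prove this statement at all. It is imported verbatim from \citet{xie2023generalized} (``We next introduce the graphical representation of GIN constraints as presented in\dots'') and used as a black box in the proof of Lemma \ref{Lemma-rule3}, so there is no in-paper proof to match your attempt against; it must be judged on its own merits. Your toolkit (reduced form $\mathcal{Y}=\mathbf{A}\boldsymbol{\varepsilon}$, trek rule, Darmois--Skitovitch, Sullivant's rank/t-separation theorem) is the right one, and your forward argument that conditions 1)--2) yield a GIN witness for $\mathcal{Y}$ is sound: t-separation by $(\emptyset,\mathcal{S})$ does force every directed path from $\mathrm{An}(\mathcal{Z})$ to $\mathcal{Y}$ through $\mathcal{S}$, the first-entry decomposition confines the columns $\{\mathbf{A}_{\cdot,j}:j\in\mathrm{An}(\mathcal{Z})\}$ to a subspace of dimension at most $|\mathcal{S}|\le|\mathcal{Y}|-1$, and any nonzero $\omega$ in its orthogonal complement works. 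But this proves only half of the left-hand side of the equivalence: the clause ``no proper subset of $\mathcal{Y}$ satisfies GIN'' is never addressed, and your forward direction never uses condition 2) at all. The omission is not cosmetic: by your own dimension count, whenever $|\mathcal{S}|<|\mathcal{Y}|-1$ any subset $\mathcal{Y}'\subset\mathcal{Y}$ with $|\mathcal{Y}'|>|\mathcal{S}|$ inherits a nonzero $\omega'$ supported off $\mathrm{An}(\mathcal{Z})$ and hence a GIN witness, so minimality can only be secured by invoking the exact rank equalities of condition 2) (together with Darmois--Skitovitch and rank-faithfulness) to show every proper subset of rows of $\mathbf{A}_{\cdot,\mathrm{An}(\mathcal{Z})}$ is linearly independent. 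That piece is genuinely missing.

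Second, the converse as you sketch it would fail at exactly the step you flag as delicate. Sullivant's theorem converts the rank bound $\mathrm{rk}(\boldsymbol{\Sigma}_{\mathcal{Y},\mathcal{Z}})\le|\mathcal{Y}|-1$ into a \emph{two-sided} minimal cut $(\mathbf{C}_\mathbf{A},\mathbf{C}_\mathbf{B})$, and there is no general principle letting you move all blocking to the $\mathcal{Y}$-side: one-sided cuts can be strictly larger than the two-sided minimum (e.g., many sources $s_1,\dots,s_n$ with $s_i\to m\to z$ and $s_i\to y_i$: the pair $(\{m\},\emptyset)$ has size $1$, but any cut of the form $(\emptyset,\mathcal{S})$ needs $|\mathcal{S}|=n$), so your claim $|\mathcal{S}|=\mathrm{rk}(\boldsymbol{\Sigma}_{\mathcal{Y},\mathcal{Z}})$ with $\mathbf{C}_\mathbf{A}=\emptyset$ does not follow, and ``absorbing'' $\mathcal{Z}$-side vertices via minimality is asserted, not argued. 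The irony is that you first derive, via Darmois--Skitovitch, the strictly stronger support condition $\gamma_j=0$ for all $j\in\mathrm{An}(\mathcal{Z})$, and then discard it in favor of the weaker covariance-rank bound --- throwing away precisely the information that produces a one-sided cut. The workable route keeps it: minimality forces the total-effect matrix $\mathbf{A}_{\cdot,\mathrm{An}(\mathcal{Z})}$ to have rank exactly $|\mathcal{Y}|-1$; under rank-faithfulness the rank of a directed total-effect matrix equals the maximum number of vertex-disjoint directed paths from $\mathrm{An}(\mathcal{Z})$ to $\mathcal{Y}$ (Lindstr\"om--Gessel--Viennot), hence by Menger's theorem there is a vertex cut $\mathcal{S}$ of that size through which all such paths pass; since the right side of every trek from $\mathcal{Z}$ to $\mathcal{Y}$ is a directed path from an ancestor of $\mathcal{Z}$, this yields the one-sided t-separation $(\emptyset,\mathcal{S})$ directly, and the disjoint-path system through $\mathcal{S}$ supplies the two full-rank equalities in condition 2). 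Until the minimality half of the forward direction and this one-sided-cut construction are carried out, your proposal is an outline with two genuine gaps rather than a proof.
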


\subsection{Proof of Proposition \ref{Pro-Proxy-Estimator}}
\begin{proof}
The proof can be found in \citet{kuroki2014measurement} or \citet{miao2018proxy} when an unmeasured confounder exists in a linear causal model.
\end{proof}

\subsection{Proof of Proposition \ref{Pro-Multi-Proxy-Estimator}}
Here, we offer two methods of proof. The first utilizes the back door criterion and the conditional instrumental variable approach. The second utilizes the properties of the Trek rules \citep{Sullivant-T-separation}. The details are as follows.

\begin{proof}
We initially define $\boldsymbol{\Sigma}_{(\mathbf{A},\mathbf{B}) \cdot \mathbf{C}} = \boldsymbol{\Sigma}_{\mathbf{A},\mathbf{B}} - \boldsymbol{\Sigma}_{\mathbf{A},\mathbf{C}} \boldsymbol{\Sigma}_{\mathbf{C},\mathbf{C}}^{-1} 
\boldsymbol{\Sigma}_{\mathbf{C},\mathbf{B}}$. In this context, $\boldsymbol{\Sigma}_{(\mathbf{A},\mathbf{B}) \cdot \mathbf{C}}$ can be interpreted as the conditional covariance matrices of $\mathbf{A}$ and $\mathbf{B}$ given $\mathbf{C}$, and $\boldsymbol{\Sigma}_{\mathbf{C},\mathbf{C}}^{-1}$ represents the inverse of $\boldsymbol{\Sigma}_{\mathbf{C},\mathbf{C}}$. We then apply the \emph{back door criterion} and obtain:
\begin{equation}\label{Eq-pro2-1}
    \begin{aligned}
    \beta_{X_k \to Y} = \frac{\boldsymbol{\Sigma}_{(X_{k},Y)\cdot\mathbf{U}}}{\boldsymbol{\Sigma}_{(X_{k},X_{k})\cdot\mathbf{U}}} 
\end{aligned}
\end{equation}
Hence, we have 
\begin{equation}
    \begin{aligned}\label{Eq-pro2-1}
    \left(\boldsymbol{\Sigma}_{X_k,X_k} - \boldsymbol{\Sigma}_{ X_k, \mathbf{U}}\boldsymbol{\Sigma}_{ \mathbf{U}, \mathbf{U}}^{-1}\boldsymbol{\Sigma}_{ \mathbf{U}, X_k}\right)\beta_{X_k \to Y} = \boldsymbol{\Sigma}_{X_k,Y} - \boldsymbol{\Sigma}_{ X_k, \mathbf{U}}\boldsymbol{\Sigma}_{ \mathbf{U}, \mathbf{U}}^{-1}\boldsymbol{\Sigma}_{ \mathbf{U}, Y}.
    \end{aligned}
\end{equation}
According to the proximal criteria, i.e., $\mathbf{W} \CI (X_k, \mathbf{Z}) | \mathbf{U}$, the following two conditions hold: $\mathbf{Z} \CI \mathbf{W} | \mathbf{U}$ and $\mathbf{W} \CI X_k | \mathbf{U}$. Consequently, this will imply that $\boldsymbol{\Sigma}_{(\mathbf{Z},\mathbf{W}) \cdot \mathbf{U}}=\mathbf{0}$ and $\boldsymbol{\Sigma}_{(\mathbf{W}, X_k) \cdot \mathbf{U}}=\mathbf{0}$. Let's expand the above three equations to obtain:
\begin{align}
    \boldsymbol{\Sigma}_{\mathbf{Z},\mathbf{W}} =\boldsymbol{\Sigma}_{\mathbf{Z},\mathbf{U}} \boldsymbol{\Sigma}_{\mathbf{U},\mathbf{U}}^{-1} 
\boldsymbol{\Sigma}_{\mathbf{U},\mathbf{W}},\label{Eq-pro2-2}\\
    \boldsymbol{\Sigma}_{\mathbf{W},X_k} = \boldsymbol{\Sigma}_{\mathbf{W},\mathbf{U}} \boldsymbol{\Sigma}_{\mathbf{U},\mathbf{U}}^{-1} 
\boldsymbol{\Sigma}_{\mathbf{U},X_k} \label{Eq-pro2-3}
\end{align}
By solving the above Equations \ref{Eq-pro2-2}$\sim$\ref{Eq-pro2-3}, we obtain
\begin{align}\label{Eq-pro2-4}
    \boldsymbol{\Sigma}_{ X_k, \mathbf{W}}\boldsymbol{\Sigma}_{ \mathbf{Z}, \mathbf{W}}^{-1} = \boldsymbol{\Sigma}_{ X_k, \mathbf{U}}\boldsymbol{\Sigma}_{ \mathbf{Z}, \mathbf{U}}^{-1}
\end{align}
To verify the conclusion, we next consider the following two scenarios. In Scenario 1, we assume independence between $\mathbf{Z}$ and $X_k$ given $\mathbf{U}$, while in Scenario 2, we assume dependence of $\mathbf{Z}$ on $X_k$ given $\mathbf{U}$, as illustrated in Figure \ref{Fig-proof-proposition-example} below. It is worth noting that, although from the graphical representation, Scenario 2 encompasses the situation in Scenario 1, they are not inclusive from a proof perspective. Specifically, Scenario 1 relies on the condition $\mathbf{Z} \CI X_k | \mathbf{U}$, while the proof in Scenario 2 capitalizes on the property of conditional instrumental variables. In other words, in Scenario 2, given the condition $\mathbf{U}$, $\mathbf{Z}$ can serve as the instrumental variable set for the causal relationship $X_k \to Y$, whereas $\mathbf{Z}$ is not the valid instrumental variable set in Scenario 1. In summary, although the proof strategies for both scenarios are independent, it is intriguing that they share the same expression for the causal effect $X_k \to Y$.
\begin{figure}[htp]
	\begin{center}
	\begin{tikzpicture}[scale=1.2, line width=0.5pt, inner sep=0.2mm, shorten >=.1pt, shorten <=.1pt]
		\draw (3, 1.0) node(U) [circle, fill=gray!60, minimum size=0.5cm,draw] {{\footnotesize\,$\mathbf{U}$\,}};
		\draw (1.5, 1.0) node(Z) [] {{\footnotesize\,$\mathbf{Z}$\,}};
		\draw (4.5, 1.0) node(W) [] {{\footnotesize\,$\mathbf{W}$\,}};
		\draw (2.25, 0) node(X) [] {{\footnotesize\,{$X_k$}\,}};
		\draw (3.75, 0) node(Y) [] {{\footnotesize\,{$Y$}\,}};
		\draw[-arcsq] (U) -- (Z) node[pos=0.5,sloped,above] {}; 
		\draw[-arcsq] (U) -- (W)node[pos=0.5,sloped,above] {}; 
		\draw[-arcsq] (U) -- (X) node[pos=0.5,sloped,above] {};
		\draw[-arcsq] (U) -- (Y) node[pos=0.5,sloped,above] {};
            \draw[-arcsq] (W) -- (Y) node[pos=0.5,sloped,above] {};
		\draw[-arcsq] (X) -- (Y) node[pos=0.5,sloped,above] {};
            \draw (3,-0.5) node(label-ii) [] {{\footnotesize\,(a)\,}};
	\end{tikzpicture}~~~
        \begin{tikzpicture}[scale=1.2, line width=0.5pt, inner sep=0.2mm, shorten >=.1pt, shorten <=.1pt]
		\draw (3, 1.0) node(U) [circle, fill=gray!60, minimum size=0.5cm,draw] {{\footnotesize\,$\mathbf{U}$\,}};
		\draw (1.5, 1.0) node(Z) [] {{\footnotesize\,$\mathbf{Z}$\,}};
		\draw (4.5, 1.0) node(W) [] {{\footnotesize\,$\mathbf{W}$\,}};
		\draw (2.25, 0) node(X) [] {{\footnotesize\,{$X_k$}\,}};
		\draw (3.75, 0) node(Y) [] {{\footnotesize\,{$Y$}\,}};
		\draw[-arcsq] (U) -- (Z) node[pos=0.5,sloped,above] {}; 
		\draw[-arcsq] (U) -- (W)node[pos=0.5,sloped,above] {}; 
		\draw[-arcsq] (U) -- (X) node[pos=0.5,sloped,above] {};
		\draw[-arcsq] (U) -- (Y) node[pos=0.5,sloped,above] {};
            \draw[-arcsq] (Z) -- (X) node[pos=0.5,sloped,above] {};
            \draw[-arcsq] (W) -- (Y) node[pos=0.5,sloped,above] {};
		\draw[-arcsq] (X) -- (Y) node[pos=0.5,sloped,above] {};
            \draw (3,-0.5) node(label-ii) [] {{\footnotesize\,(b)\,}};
	\end{tikzpicture}
	\caption{(a) Scenario 1: $\mathbf{Z} \CI X_k | \mathbf{U}$, and (b) Scenario 2: $\mathbf{Z} \nCI X_k | \mathbf{U}$}
	\vspace{-0.4cm}
	\label{Fig-proof-proposition-example}
	\end{center}
\end{figure}

\noindent\textbf{{Scenario 1:}} 
Because $\mathbf{Z} \CI X_k | \mathbf{U}$, and according to the proximal criteria, i.e., $\mathbf{Z} \CI {Y} | (\mathbf{U},X_k)$, we further have $\mathbf{Z} \CI Y | \mathbf{U}$. Consequently, this will imply that
$\boldsymbol{\Sigma}_{(\mathbf{Z}, X_k) \cdot \mathbf{U}}=\mathbf{0}$ and $\boldsymbol{\Sigma}_{(\mathbf{Z},Y) \cdot \mathbf{U}}=\mathbf{0}$.
Thus, we obtain 
\begin{align}
    \boldsymbol{\Sigma}_{\mathbf{Z},X_k} = \boldsymbol{\Sigma}_{\mathbf{Z},\mathbf{U}} \boldsymbol{\Sigma}_{\mathbf{U},\mathbf{U}}^{-1} 
\boldsymbol{\Sigma}_{\mathbf{U},X_k} \label{Eq-pro2-5}\\
    \boldsymbol{\Sigma}_{\mathbf{Z},Y} = \boldsymbol{\Sigma}_{\mathbf{Z},\mathbf{U}} \boldsymbol{\Sigma}_{\mathbf{U},\mathbf{U}}^{-1} 
\boldsymbol{\Sigma}_{\mathbf{U},Y},\label{Eq-pro2-6}
\end{align}
By solving the above Equations \ref{Eq-pro2-4}$\sim$\ref{Eq-pro2-6}, we obtain
\begin{align}\label{Eq-pro2-7}
    \boldsymbol{\Sigma}_{X_k, \mathbf{W}}\boldsymbol{\Sigma}_{ \mathbf{Z}, \mathbf{W}}^{-1}\boldsymbol{\Sigma}_{ \mathbf{Z}, X_k} = \boldsymbol{\Sigma}_{ X_k, \mathbf{U}}\boldsymbol{\Sigma}_{ \mathbf{U}, \mathbf{U}}^{-1}\boldsymbol{\Sigma}_{ \mathbf{U}, X_k}, \\
    \boldsymbol{\Sigma}_{ X_k, \mathbf{W}}\boldsymbol{\Sigma}_{ \mathbf{Z}, \mathbf{W}}^{-1}\boldsymbol{\Sigma}_{ \mathbf{Z}, Y} = \boldsymbol{\Sigma}_{ X_k, \mathbf{U}}\boldsymbol{\Sigma}_{ \mathbf{U}, \mathbf{U}}^{-1}\boldsymbol{\Sigma}_{ \mathbf{U}, Y} \label{Eq-pro2-8}
\end{align}
By combing Equations \ref{Eq-pro2-1}, \ref{Eq-pro2-7}, and \ref{Eq-pro2-8}, we have
\begin{equation}
    \begin{aligned}
    \left(\boldsymbol{\Sigma}_{X_k,X_k} - \boldsymbol{\Sigma}_{X_k, \mathbf{W}}\boldsymbol{\Sigma}_{ \mathbf{Z}, \mathbf{W}}^{-1}\boldsymbol{\Sigma}_{ \mathbf{Z}, X_k}\right)\beta_{X_k \to Y} = \boldsymbol{\Sigma}_{X_k,Y} - \boldsymbol{\Sigma}_{ X_k, \mathbf{W}}\boldsymbol{\Sigma}_{ \mathbf{Z}, \mathbf{W}}^{-1}\boldsymbol{\Sigma}_{ \mathbf{Z}, Y}.
    \end{aligned}
\end{equation}
Hence, we finally have
\begin{equation}
    \begin{aligned}
  \left( {\mathrm{det}(\boldsymbol{\Sigma}_{\{X_k \cup \mathbf{Z}\}, \{X_k \cup \mathbf{W}\}})} \right) \beta_{X_k \to Y} & = {\mathrm{det}(\boldsymbol{\Sigma}_{\{X_k \cup \mathbf{Z}\}, \{Y \cup \mathbf{W}\}})},
    \end{aligned}
\end{equation}
which is consistent with the Equation \ref{Eq-multiple-proximal-inference}.

\noindent\textbf{{Scenario 2:}} We here apply the \emph{conditional instrumental variable approach} and obtain:
\begin{equation}
\begin{aligned}
    &\beta_{X_k \to Y} = \frac{\boldsymbol{\Sigma}_{Z_{i}Y\cdot\mathbf{U}}}{\boldsymbol{\Sigma}_{Z_{i}X_{k}\cdot\mathbf{U}}}, \quad Z_{i} \in \mathbf{Z}.
\end{aligned}
\end{equation}
That is,
\begin{equation}\label{Eq-Scenario2-1}
    \begin{aligned}
    \left(\boldsymbol{\Sigma}_{Z_i,X_k} - \boldsymbol{\Sigma}_{Z_i, \mathbf{U}}\boldsymbol{\Sigma}_{ \mathbf{U}, \mathbf{U}}^{-1}\boldsymbol{\Sigma}_{ \mathbf{U}, X_k}\right)\beta_{X_k \to Y} = \boldsymbol{\Sigma}_{Z_k,Y} - \boldsymbol{\Sigma}_{ Z_k, \mathbf{U}}\boldsymbol{\Sigma}_{ \mathbf{U}, \mathbf{U}}^{-1}\boldsymbol{\Sigma}_{ \mathbf{U}, Y}.
    \end{aligned}
\end{equation}
Based on the proximal criteria, i.e., $\mathbf{Z} \CI {Y} | (\mathbf{U},X_k)$, we conclude that, for $Z_{i} \in \mathbf{Z}$, $Z_i$ serves as a valid instrumental variable for the causal relationship $X_k \to Y$ given $\mathbf{U}$. Hence, the equation above can be expressed in vector form as:
\begin{equation}
    \begin{aligned}\label{Eq-Scenario2-2}
        \left(\boldsymbol{\Sigma}_{\mathbf{Z}, X_k} - \boldsymbol{\Sigma}_{\mathbf{Z}, \mathbf{U}}\boldsymbol{\Sigma}_{ \mathbf{U}, \mathbf{U}}^{-1}\boldsymbol{\Sigma}_{ \mathbf{U}, X_k}\right)\beta_{X_k \to Y} = \boldsymbol{\Sigma}_{\mathbf{Z}, Y} - \boldsymbol{\Sigma}_{\mathbf{Z}, \mathbf{U}}\boldsymbol{\Sigma}_{ \mathbf{U}, \mathbf{U}}^{-1}\boldsymbol{\Sigma}_{ \mathbf{U}, Y}.
    \end{aligned}
\end{equation}
By solving Equations \ref{Eq-pro2-1}, \ref{Eq-pro2-4}, and \ref{Eq-Scenario2-2} for $\beta_{X_k \to Y}$, we obtain
\begin{equation}
    \begin{aligned}
    \left(\boldsymbol{\Sigma}_{X_k,X_k} - \boldsymbol{\Sigma}_{X_k, \mathbf{W}}\boldsymbol{\Sigma}_{ \mathbf{Z}, \mathbf{W}}^{-1}\boldsymbol{\Sigma}_{ \mathbf{Z}, X_k}\right)\beta_{X_k \to Y} = \boldsymbol{\Sigma}_{X_k,Y} - \boldsymbol{\Sigma}_{ X_k, \mathbf{W}}\boldsymbol{\Sigma}_{ \mathbf{Z}, \mathbf{W}}^{-1}\boldsymbol{\Sigma}_{ \mathbf{Z}, Y}.
    \end{aligned}
\end{equation}
This will imply that 
\begin{equation}
    \begin{aligned}
  \left( {\mathrm{det}(\boldsymbol{\Sigma}_{\{X_k \cup \mathbf{Z}\}, \{X_k \cup \mathbf{W}\}})} \right) \beta_{X_k \to Y} & = {\mathrm{det}(\boldsymbol{\Sigma}_{\{X_k \cup \mathbf{Z}\}, \{Y \cup \mathbf{W}\}})},
    \end{aligned}
\end{equation}
which is consistent with the Equation \ref{Eq-multiple-proximal-inference}.
\end{proof}

We hereby present an alternative proof strategy employing Trek rules. Before delving into the proof of this result, we initially introduce two definitions and a theorem that play a crucial role in our argument.

\begin{Definition}[Trek System \& Intersection]
    Let $A$ and $B$ be two subsets of the vertex set of a DAG $\mathcal{G}$,
    with $|A| = |B|$.  A \emph{system of treks} from $A$ to $B$ is a set of treks that each are between a vertex in $A$ and a vertex in $B$.  Let $\mathcal{T}$ be such a system.  Then $\mathcal{T}$ has \emph{no sided intersection} if any two distinct treks in $\mathcal{T}$ have disjoint left sides and
    disjoint right sides.
\end{Definition}
\begin{Definition}[Trek Rule \citep{Sullivant-T-separation}]
    Let $\Lambda=(\lambda_{ij})\in\mathbb{R}^\mathcal{D}$ be the cofficients matrix and
$\Omega=(\sigma_{ij})$ be the variance of noise term.  To any trek $\tau$, associate a {\em trek monomial}
\begin{equation}\label{eq:trek-monomial}
  \sigma(\tau) \;=\; \sigma_{i_1j_1}\prod_{k=1}^{\ell-1}
  \lambda_{i_ki_{k+1}}\prod_{k=1}^{r-1} \lambda_{j_kj_{k+1}}. 
\end{equation}
\end{Definition}

\begin{Theorem}[\citep{drton2020nested,draisma2013positivity}] \label{the:trekrule}Suppose the underlying
  graph $\mathcal{G}$ is acyclic.  Then the determinant of $\boldsymbol{\Sigma}_{A, B}$ equals $\mathcal P_{A, B}$, and $\mathcal P_{A, B}$ is defined as 
  \begin{equation}
  \label{eq:PAC}
  \mathcal P_{A, B} \;=\; \sum  (-1)^\mathcal{T}
  \prod_{\tau\in\mathcal{T}} \sigma(\tau)
\end{equation}
with the summation being over all systems of treks $\mathcal{T}$ from
$A$ to $B$ with no sided intersection.
\end{Theorem}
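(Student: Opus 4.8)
The plan is to prove this as an instance of the Lindström--Gessel--Viennot lemma adapted to treks, proceeding in three stages: reduce to the single-entry trek rule, expand the determinant by the Leibniz formula, and cancel the ``bad'' terms by a sign-reversing involution.

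First I would establish the single-entry version $\sigma_{ab}=\sum_{\tau:a\to b}\sigma(\tau)$. Writing the linear model as $X=\Lambda^{\intercal}X+\varepsilon$ with coefficient matrix $\Lambda$ and diagonal noise covariance $\Omega$, acyclicity of $\mathcal{G}$ makes $\Lambda$ nilpotent, so $(I-\Lambda)^{-1}=\sum_{m\ge 0}\Lambda^{m}$ (and likewise for $\Lambda^{\intercal}$) is a finite sum and $\boldsymbol{\Sigma}=(I-\Lambda^{\intercal})^{-1}\Omega(I-\Lambda)^{-1}$. Expanding the two inverses, the $(k,a)$ entry of $(I-\Lambda)^{-1}$ is the sum over directed paths from $k$ to $a$ of the product of the $\lambda$'s along the path. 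Since $\Omega$ is diagonal, $\sigma_{ab}=\sum_{k}\omega_{kk}\big(\sum_{P:k\to a}\cdots\big)\big(\sum_{Q:k\to b}\cdots\big)$, and each summand is exactly the monomial $\sigma(\tau)$ of the trek $\tau$ with top $k$, left path $P$ and right path $Q$. This is the single-entry trek rule.

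Next I would apply the Leibniz expansion. Writing $A=\{a_1,\dots,a_n\}$ and $B=\{b_1,\dots,b_n\}$,
\[
\det\boldsymbol{\Sigma}_{A,B}=\sum_{\pi}\operatorname{sgn}(\pi)\prod_{i=1}^{n}\sigma_{a_i,b_{\pi(i)}},
\]
and substituting the single-entry rule into each factor and distributing the product over the sums turns this into a sum over all trek systems $\mathcal{T}=(\tau_1,\dots,\tau_n)$ with $\tau_i$ running from $a_i$ to $b_{\pi_{\mathcal{T}}(i)}$, weighted by $\operatorname{sgn}(\pi_{\mathcal{T}})\prod_{\tau\in\mathcal{T}}\sigma(\tau)$. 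Defining $(-1)^{\mathcal{T}}:=\operatorname{sgn}(\pi_{\mathcal{T}})$ gives $\det\boldsymbol{\Sigma}_{A,B}=\sum_{\mathcal{T}}(-1)^{\mathcal{T}}\prod_{\tau}\sigma(\tau)$, now summed over \emph{all} trek systems, whether or not they have a sided intersection. The remaining task is to show that exactly the sided-intersection systems cancel.

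Finally I would cancel those systems by a sign-reversing involution $\Phi$. Fix a topological order on the vertices. Given a system $\mathcal{T}$ possessing a sided intersection, let $v$ be the topologically smallest vertex lying on the same side (both left, or both right) of two treks, pick the two offending treks by a fixed tie-breaking rule, and swap the two tails beyond $v$ on that side. Because this only reroutes edges below $v$, the multiset of edges and tops is unchanged, so $\prod_{\tau}\sigma(\tau)$ is preserved, while the induced matching changes by a transposition, so $(-1)^{\mathcal{T}}$ flips; the systems with no sided intersection are precisely the fixed points and survive with the correct sign, yielding $\mathcal{P}_{A,B}$. The main obstacle is verifying that $\Phi$ is a genuine involution: one must check that the swap, performed below the topologically minimal shared vertex $v$, neither disturbs $v$ nor creates a new same-side crossing at any earlier vertex, so that $v$ and the chosen trek pair are recovered unchanged after a second application; one must also confirm that rerouting below $v$ keeps each object a valid collider-free trek and correctly distinguishes left--left and right--right crossings (a vertex shared between a left side and a right side is not a sided intersection and must be ignored). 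Once the canonical choice is pinned down this is routine, but it is where all the bookkeeping lives.
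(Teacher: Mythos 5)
You should first note that the paper does not actually prove this statement: it imports the trek--determinant formula by citation (to \citet{draisma2013positivity} and \citet{drton2020nested}) and uses it as a black box in the proof of Proposition~\ref{Pro-Multi-Proxy-Estimator}, so there is no in-paper proof to compare against line by line. Judged against the cited literature, your three-stage argument is a correct reconstruction of the standard proof: the single-entry trek rule via $\boldsymbol{\Sigma}=(I-\Lambda^{\intercal})^{-1}\Omega(I-\Lambda)^{-1}$ with $\Lambda$ nilpotent is exactly right, the Leibniz expansion over trek systems with $(-1)^{\mathcal{T}}:=\operatorname{sgn}(\pi_{\mathcal{T}})$ is the intended meaning of the sign in Eq.~\eqref{eq:PAC}, and the tail-swapping involution is the Lindstr\"om--Gessel--Viennot cancellation adapted to treks. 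Two remarks on the bookkeeping you flag as the crux. First, the worry that a swapped object might fail to be a path is vacuous in a DAG: any directed walk has strictly increasing topological order, so the concatenation $P_i'\,v\,Q_j''$ cannot repeat a vertex, and collider-freeness is automatic since both sides remain directed paths out of a common top. Second, your involution is well defined because the \emph{multiset} union of left-side vertices (and, separately, of right-side vertices) is invariant under a same-side tail swap, so the set of same-side shared vertices, hence the topologically minimal one and a canonical pair through it, is recovered after one application; note also that two treks sharing a top is itself a left--left (and right--right) intersection, so fixed points automatically have distinct tops. It is worth knowing that the route actually taken in \citet{draisma2013positivity} (following Sullivant, Talaska and Draisma) avoids this delicate direct involution: one applies Cauchy--Binet to $\det\bigl((I-\Lambda^{\intercal})^{-1}_{A,\cdot}\,\Omega\,(I-\Lambda)^{-1}_{\cdot,B}\bigr)$, where diagonality of $\Omega$ forces the two intermediate index sets to coincide in a single set $S$ of distinct tops, and then invokes the classical LGV lemma separately on the two path determinants $\det\bigl[(I-\Lambda^{\intercal})^{-1}\bigr]_{A,S}$ and $\det\bigl[(I-\Lambda)^{-1}\bigr]_{S,B}$; that factorization buys you the no-sided-intersection condition and the distinctness of tops for free, at the cost of being less self-contained than your hands-on cancellation. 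Either way, your proposal is sound.
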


Now, we prove the Proposition \ref{Pro-Multi-Proxy-Estimator} based on the above theorem.
\begin{proof}
    Let us first consider the trek system between $\mathbf{Z}$ and $\mathbf{W}$. Based on the model definition, we know that $\mathbf{Z}$ and $\mathbf{W}$ are the child set of confounders $\mathbf{U}$ with $|\mathbf{Z}| = |\mathbf{W}| = |\mathbf{U}|=q$. Since $\mathbf{Z} \CI \mathbf{W} | \mathbf{U}$, there are no direct path from $\mathbf{Z}$ to $\mathbf{W}$ but $Z_i \leftarrow U_i \to W_i$ is only trek between $Z_i$ and $W_i$. Thus, the trek system without sided intersection between $\mathbf{Z}$ and $\mathbf{W}$ must have the source $\mathbf{U}$ according to \textit{Pigeonhole principle}. That is any two treks $\mathcal{\tau}_i$ and $\mathcal{\tau}_j$ has source $U_i$ and $U_j$ respectively, $U_i \neq U_j$ in all trek system (note that if $U_i = U_j$ in a trek system, this trek system has sided intersection in source $U_i$).

    Now, consider the numerator term of Eq(3), i.e., the determinant of covariance matrice between $A = \{X_k, \mathbf{Z}\}$ and $B = \{Y, \mathbf{W}\}$. To do so, by Theorem \ref{the:trekrule}, it must discuss the trek system between $A$ and $B$ which is no side intersection. There are two cases for $A$ and $B$: Case I: $\mathbf{Z} \CI X_k |\mathbf{U}$; and Case II: $\mathbf{Z} \not\CI X_k |\mathbf{U}$.

    \textbf{Case I:} $\mathbf{Z} \CI X_k |\mathbf{U}$, i.e., there are no edges between $\mathbf{Z}$ and $X_k$. For the trek system $\mathcal{T}$ between $A$ and $B$ with $|A|=|B|=q+1$, there are $q+1$ trek in $\mathcal{T}$. if $\mathcal{T}$ is no side intersection, the trek with the sink node $X$ on the left side and sink node $Y$ on the right side, denote as $\mathcal{\tau}_i$, $Top(\mathcal{\tau}_i) \notin \mathbf{U}$ (i.e., the source of this trek can no be $U_i$). Otherwise, there exist two treks in the trek system that have a common source that violates the no-sided intersection (according to the above analysis in $\mathbf{Z}$ and $\mathbf{W}$). For instance, the trek between $X_k$ and $Y$, $(U_i \to X_k, U_i \to Y)$, is intersecting with one of the trek between $\mathbf{Z}$ and $\mathbf{W}$. \textit{An illustrative example is given in Example 1}. Therefore, the trek between $\{X_k, Y\}$ must be $(X_k; X_k \rightarrow Y)$ in the no side-intersection trek system between $A$ and $B$, and meanwhile, other $q$ trek between $\mathbf{Z}$ and $\mathbf{W}$ has source $\mathbf{U}$. According to Theorem \ref{the:trekrule}, denote the trek between $X_k$ and $Y$ as $\tau_{\{X_k,Y\}}$, the determinant equals
    \begin{align}
        \mathcal P_{A, B} = \sum  (-1)^\mathcal{T}
  \prod_{\tau\in\mathcal{T}\setminus \tau_{\{X_k,Y\}}} \sigma(\tau) \cdot \sigma(\tau_{\{X_k,Y\}}).
    \end{align}
    Since $\sigma(\tau_{\{X_k,Y\}})= \sigma(X_i)\beta_{X_k\rightarrow Y}$ (by \textit{Trek Rule}), the above equation can be rewritten as
    \begin{align}
        \mathcal P_{A, B} = \beta_{X_k\rightarrow Y} \sigma(X_i)\sum  (-1)^\mathcal{T}
  \prod_{\tau\in\mathcal{T}\setminus \tau_{\{X_k,Y\}}} \sigma(\tau).
    \end{align}

  To show $\beta_{X_k\rightarrow Y}$ can be unbiasedly estimated, now we consider the denominator term of Eq. (3). Similarly, for two vectors $C = \{X_k, \mathbf{Z}\}$ and $D = \{X_k, \mathbf{W}\}$, the determinant can be formalized as 
  \begin{align}
      \mathcal P_{C, D} = \sigma(X_i) \sum (-1)^\mathcal{T'}\prod_{\tau\in\mathcal{T}\setminus \tau_{\{X_k,Y\}}} \sigma(\tau),
  \end{align}
  where the covariance of the trek between $X_k$ and $X_k$ equals $\sigma(X_i)$. Based on the above analysis, we can get the unbiased estimation of $\beta_{X_k\rightarrow Y}$ by the ratio of two determinants, i.e., $\mathcal P_{A, B} / \mathcal P_{C, D}$.

  \textbf{Case II:} $\mathbf{Z} \not\CI X_k |\mathbf{U}$, i.e., there exits a edges between $Z_i$ and $X_k$ for some $ Z_i \in \mathbf{Z}$. According to the proximal criteria, e.g., $\{X_k,\mathbf{Z}\} \CI \mathbf{W} | \mathbf{U}$, there are no other trek between $Z_i$ and $W_i$ except for the $(U_i\to Z_i, U_i \to W_i)$. For this case, the key difference to Case I is that there may exist a trek between $Z_i$ and $Y$ in which the source $Top(Z_i, Y) \not\in \mathbf{U}$. 
  
  There are two cases for the trek between $Z_i$ and $Y$: (i). $\tau_1 = (Z_i, Z_i\rightarrow X_k \rightarrow Y)$ or (ii). $\tau_2 = (X_k\rightarrow Z_i, X_k \rightarrow Y)$. By \textit{Trek Rule}, we have $\sigma(\tau_1) = \sigma(Z_i)\beta_{Z_i \to X_k}\beta_{X_k \to Y}$ for case (i) while $\sigma(\tau_2) = \sigma(X_k)\beta_{X_k \to Z_i}\beta_{X_k \to Y}$ for case (ii). Furthermore, if there are more than one $Z_i$, for example, $Z_i$ and $Z_j$ have the trek of the above cases, then this trek system has a side intersection in the node $X_k$ (as $\sigma(\tau_1)$ and $\sigma(\tau_2)$ has side intersection in the node $X_k$). Thus, a trek system without sided intersection between $\{X_k, \mathbf{Z}\}$ and $\{Y, \mathbf{W}\}$ can only be one trek following the above cases, i.e., only a $Z_i$ follows the above cases. Thus, the set of no side-intersection trek system is $q$ treks with source $\mathbf{U}$ between $\{\mathbf{Z}\setminus{Z_i}, X_k\}$ and $\{\mathbf{W}\}$ and plus a trek between $Z_i$ and $Y$. Denote the trek between $Z_i$ and $Y$ as $\tau_{\{Z_i, Y\}}$, $\tau_{\{Z_i, Y\}}$ follows one of the above cases.

Therefore, by Theorem \ref{the:trekrule}, the determinant can be formalized as 
\begin{equation}
    \begin{aligned}
        \beta_{X_k\rightarrow Y} \sum (-1)^\mathcal{T'}\prod_{\tau' \in\mathcal{T'}} \sigma(\tau') +\ \sum _{{1}}^{q} \tau _{Z_{i}- X_k} \beta _{X_k\rightarrow Y}\sum ( -1)^{\mathcal{T}^{''} }\prod_{\tau\in\mathcal{T^{''}}}  \sigma ( \tau),
    \end{aligned}
\end{equation}
where the first term represents the summation of no sided intersection system including the trek with source $\mathbf{U}$ between $\mathbf{Z}$ and $\mathbf{W}$ and a trek from $X_k$ to $Y$; the second term presents the summation of the no sided intersection system inlcuding the trek with source $\mathbf{U}$ between $\{\mathbf{Z} \setminus Z_i, X_k\}$ and $\mathbf{W}$ and a trek between $Z_i$ and $Y$. Similarly, for two vector $\{X_k, \mathbf{Z}\}$ and $\{X_k, \mathbf{W}\}$, the determinant can be formalized as 
\begin{equation}
    \begin{aligned}
        \sum (-1)^\mathcal{T'}\prod_{\tau' \in\mathcal{T'}} \sigma(\tau') +\ \sum _{{1}}^{q} \tau _{Z_{i} - X_k} \sum ( -1)^{\mathcal{T}^{''} }\prod_{\tau\in\mathcal{T^{''}}}  \sigma ( \tau).
    \end{aligned}
\end{equation}
In the end, we can obtain the unbiased estimation of $\beta_{X_k \rightarrow Y}$ by the ratio of two determinants.
\end{proof}    

\begin{figure}[h]
	\begin{center}
		\begin{tikzpicture}[scale=1.4, line width=0.5pt, inner sep=0.2mm, shorten >=.1pt, shorten <=.1pt]
        \draw [fill=blue!50,thick, fill opacity=0.5, draw=none] (1.0,0.8) ellipse [x radius=0.8cm, y radius=0.35cm];
        \draw [fill=red!50,thick, fill opacity=0.5, draw=none] (-0.25,0.0) ellipse [x radius=0.6cm, y radius=0.3cm];
        \draw [fill=red!50,thick, fill opacity=0.5, draw=none] (2.25,0.0) ellipse [x radius=0.6cm, y radius=0.3cm];
        \draw (1.0, 1.0) node(R21B) [] {{\footnotesize\,$\mathbf{U}$\,}};
		\draw (0.6, 0.8) node(U1) [circle, fill=gray!60, minimum size=0.5cm,draw] {{\footnotesize\,$U_1$\,}};

        \draw (1.4, 0.8) node(U2) [circle, fill=gray!60, minimum size=0.5cm,draw] {{\footnotesize\,$U_2$\,}};
		\draw (-0.0, 0.0) node(Z1) [] {{\footnotesize\,${Z}_{2}$\,}};

		\draw (0.5, 0.0) node(Xk) [] {{\footnotesize\,${X}_k$\,}};

		\draw (1.5, 0.0) node(Y) [] {{\footnotesize\,$Y$\,}};
		\draw (2.0, 0.0) node(W1) [] {{\footnotesize\,$W_1$\,}};
        \draw (-0.5, 0.0) node(Z2) [] {{\footnotesize\,$Z_1$\,}};
        \draw (2.5, 0.0) node(W2) [] {{\footnotesize\,$W_2$\,}};
		\draw[-arcsq] (U1) -- (Xk) node[pos=0.5,sloped,above] {};
		\draw[-arcsq] (U1) -- (Y) node[pos=0.5,sloped,above] {};
		\draw[-arcsq] (U1) -- (Z1) node[pos=0.5,sloped,above] {};
		\draw[-arcsq] (U1) -- (W1) node[pos=0.5,sloped,above] {};
            \draw[-arcsq] (U1) -- (Z2) node[pos=0.5,sloped,above] {};
		\draw[-arcsq] (U1) -- (W2) node[pos=0.5,sloped,above] {};
        \draw[-arcsq] (U2) -- (Xk) node[pos=0.5,sloped,above] {};
		\draw[-arcsq] (U2) -- (Y) node[pos=0.5,sloped,above] {};
		\draw[-arcsq] (U2) -- (Z1) node[pos=0.5,sloped,above] {};
		\draw[-arcsq] (U2) -- (W1) node[pos=0.5,sloped,above] {};
         \draw[-arcsq] (U2) -- (Z2) node[pos=0.5,sloped,above] {};
		\draw[-arcsq] (U2) -- (W2) node[pos=0.5,sloped,above] {};
        \draw[-arcsq] (Xk) -- (Y) node[pos=0.5,sloped,above] {{\scriptsize\,$\beta$\,}}; 

		\end{tikzpicture}

		\caption{Illustration of Proof of Proposition 2.}
		\label{Fig-exp-pro2} 
	\end{center}
	
\end{figure}
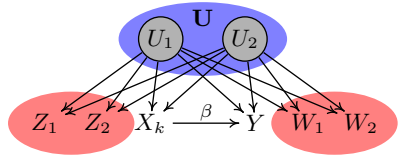

\begin{Example-set}[Illustration of Proof of Proposition \ref{Pro-Multi-Proxy-Estimator}]
Consider a graph in Fig. \ref{Fig-exp-pro2}. There are two confounders $U_1$ and $U_2$ that affect the $X_k$ and $Y$.
Let $\mathbf{Z} = \{Z_1, Z_2\}$ and $\mathbf{W} = \{W_1, W_2\}$.
For two vector $\{X_k, Z_1, Z_2\}$ and $\{Y, W_1, W_2\}$, all no intersection trek system between $\mathbf{Z}$ and $\mathbf{W}$ must has the source $\{U_1, U_2\}$ due to $\mathbf{Z} \CI X_k |\{U_1,U_2\}$, that is, 

\begin{itemize}
    \item ($U_1 \to Z_1$, $U_1 \to W_1$) and ($U_2 \to Z_2$, $U_2 \to W_2$)
    \item ($U_2 \to Z_1$, $U_2 \to W_1$) and ($U_1 \to Z_2$, $U_1 \to W_2$) 
    \item ($U_1 \to Z_2$, $U_1 \to W_1$) and ($U_2 \to Z_1$, $U_2 \to W_2$) 
    \item ($U_2 \to Z_2$, $U_2 \to W_1$) and ($U_1 \to Z_1$, $U_1 \to W_2$) 
\end{itemize}

For the trek between $X_k$ and $Y$, it must be ($X_k, X_k \rightarrow Y$), otherwise, it will be an intersection with the trek between $\mathbf{Z}$ and $\mathbf{W}$ in the source node $\mathbf{U}$. For example, a trek between $X_k$ and $Y$ with source $U_1$, $(U_1 \to X_k, U_1 \to Y)$, is intersecting with ($U_1 \to Z_1$, $U_1 \to W_1$) in the trek system: ($U_1 \to Z_1$, $U_1 \to W_1$), ($U_2 \to Z_2$, $U_2 \to W_2$) and $(U_1 \to X_k, U_1 \to Y)$. Therefore, according to Theorem 1, the determinant equals $\beta_{X_k\rightarrow Y}[ \sigma _{U_{1}}^{2} \sigma _{U_{2}}^{2} \sigma _{X_{k}}^{2}( a_{1} a_{3} b_{2} b_{4} -a_{1} a_{4} b_{2} b_{3} -a_{2} a_{3} b_{1} b_{4} +a_{2} a_{4} b_{1} b_{3})]$, where $a_i$ represent the effect from $U_1$ to $\mathbf{Z} \cup \mathbf{ W}$ while $b_i$ represent the effect from $U_2$ to $\mathbf{Z} \cup \mathbf{ W}$.

For two vectors $\{X_k, Z_1, Z_2\}$ and $\{X_k, W_1, W_2\}$, based on Theorem 1, we also obtain the determinant as $ \sigma _{U_{1}}^{2} \sigma _{U_{2}}^{2} \sigma _{X_{k}}^{2}( a_{1} a_{3} b_{2} b_{4} -a_{1} a_{4} b_{2} b_{3} -a_{2} a_{3} b_{1} b_{4} +a_{2} a_{4} b_{1} b_{3})$. Therefore, one may obtain an unbiased estimation of $\beta_{X_k \to Y}$ by the ratio of two determinants.

\end{Example-set}

\subsection{Proof of Lemma \ref{Lemma-rule1}}

\begin{proof}
Firstly, in accordance with Equation 2, where all entries of the matrix $\mathbf{C}$ are non-zero, and under the faithfulness assumption, we can directly infer that both$\boldsymbol{\Sigma}_{\{X_k, \mathbf{A}\}, \{Y, \mathbf{B}\}}$ and $\boldsymbol{\Sigma}_{\{X_k, \mathbf{A}\}, \{X_k, \mathbf{B}\}}$ are always full rank. Therefore, these necessary conditions are satisfied for any sets $\mathbf{Z}$ and $\mathbf{W}$ (otherwise, it would be required to test these conditions).

Secondly, due to condition (1), namely $\mathrm{rk}(\boldsymbol{\Sigma}_{\{X_k, Q, \mathbf{A}\}, \{X_k, Y, \mathbf{B}}\}) \leq q+1$, and the faithfulness assumption, and based on the "Graphical Representation of Rank Constraints" Theorem, we can assert that there exist subsets $\mathbf{C}_\mathbf{A}$ and $\mathbf{C}_\mathbf{B}$ with $|\mathbf{C}_\mathbf{A}|+|\mathbf{C}_\mathbf{B}| \leq q+1$ such that $(\mathbf{C}_\mathbf{A}, \mathbf{C}_\mathbf{B})$ t-separates $\{X_k, Q, \mathbf{A}\}$ from $\{X_k, Y, \mathbf{B}\}$. According to the data generation process, the treks between $\{Q, A\}$ and $\{Y, B\}$ must go through unmeasured confounder $\mathbf{U}$ (except for $X_k$). Therefore, $\mathbf{C}_\mathbf{A} \cup \mathbf{C}_\mathbf{B} = \{X_k, \mathbf{U}\}$. Since $|{X_k, \mathbf{U}}| = q+1$, we can conclude that all treks between $\{X_k, Q, \mathbf{A}\}$ and $ \{X_k,Y,\mathbf{B}\}$ must go through a node in $ \{X_k, \mathbf{U}\}$. This will imply that $\mathbf{A} \CI {Y} | (\mathbf{U},X_k)$, i.e., condition 1 of proximal criteria holds, and $\mathbf{A} \CI \mathbf{B} | (\mathbf{U},X_k)$.

Furthermore, because of condition 2), i.e.,  $\mathrm{rk}(\boldsymbol{\Sigma}_{\{X_k, \mathbf{A}\},  \{Q, \mathbf{B}\}}) \leq q$, and because according to the "Graphical Representation of Rank Constraints" Theorem, we know that there exist subsets $\mathbf{C}_\mathbf{A},\mathbf{C}_\mathbf{B}$ with $|\mathbf{C}_\mathbf{A}|+|\mathbf{C}_\mathbf{B}| \leq q$ such that $(\mathbf{C_{A}}, \mathbf{C_{B}})$ t-separates $\{X_k, \mathbf{A}\}$ from $ \{Q,\mathbf{B}\}$. 
According to the generation of data (Equation 2), all treks between $\{X_k, \mathbf{A}\}$ and $ \{Q,\mathbf{B}\}$ must go through unmeasured confounders $\mathbf{U}$. Hence, $\mathbf{C}_\mathbf{A} \cup \mathbf{C}_\mathbf{B} = \mathbf{U}$. This will imply that $\mathbf{B} \CI {X_k} | \mathbf{U}$. Because $\mathbf{A} \CI \mathbf{B} | (\mathbf{U},X_k)$, we have  $\mathbf{B} \CI (X_k, \mathbf{A}) | \mathbf{U}$, i.e., condition 2 of proximal criteria holds. 

Based on the above analysis, $\mathbf{A}$ and $\mathbf{B}$ are valid NCE and NCO with respect to $X_k \to Y$, respectively. Furthermore, due to Assumption 2, we know that such sets $\mathbf{A}$ and $\mathbf{B}$ must exist in the system.
\end{proof}

%

\subsection{Proof of Lemma \ref{Lemma-rule2}}

\begin{proof}
The proof strategy for this theorem is similar to the proof strategy for Lemma \ref{Lemma-rule1}.

Firstly, according to Equation 2 (where all entries of matrix $\mathbf{C}$ are non-zero) and the faithfulness assumption, we can directly infer that $\boldsymbol{\Sigma}_{\{X_k, \mathbf{A}\}, \{Y, \mathbf{B}\}}$ and $\boldsymbol{\Sigma}_{\{X_k, \mathbf{A}\}, \{X_k, \mathbf{B}\}}$ are both full rank. Therefore, these necessary conditions always hold for any sets $\mathbf{Z}$ and $\mathbf{W}$ (otherwise, it would be required to test these conditions).

Secondly, due to condition (1), namely $\mathrm{rk}(\boldsymbol{\Sigma}_{\{X_k, \mathbf{A}\},  \{X_k, Y,\mathbf{B}\}}) \leq q+1$, and the faithfulness assumption, and based on the "Graphical Representation of Rank Constraints" Theorem, we can assert that there exist subsets $\mathbf{C}_\mathbf{A}$ and $\mathbf{C}_\mathbf{B}$ with $|\mathbf{C}_\mathbf{A}|+|\mathbf{C}_\mathbf{B}| \leq q+1$ such that $(\mathbf{C_{A}}, \mathbf{C_{B}})$ t-separates $\{X_k, \mathbf{A}\}$ from $ \{X_k, Y,\mathbf{B}\}$. According to the data generation process, the treks between $\{Q, A\}$ and $\{Y, B\}$ must go through unmeasured confounder $\mathbf{U}$ (except for $X_k$). Therefore, $\mathbf{C}_\mathbf{A} \cup \mathbf{C}_\mathbf{B} = \{X_k, \mathbf{U}\}$. Since $|\{X_k, \mathbf{U}\}| = q+1$, we can conclude that all treks between $\{X_k, \mathbf{A}\}$ and $ \{X_k,Y,\mathbf{B}\}$ must go through a node in $ \{X_k, \mathbf{U}\}$. This will imply that $\mathbf{A} \CI {Y} | (\mathbf{U},X_k)$, i.e., condition 1 of proximal criteria holds, and $\mathbf{A} \CI \mathbf{B} | (\mathbf{U},X_k)$.

Furthermore, because of condition 2), i.e.,  $\mathrm{rk}(\boldsymbol{\Sigma}_{\{X_k, \mathbf{A}\},  \mathbf{B}}) \leq q$, and because according to the "Graphical Representation of Rank Constraints" Theorem, we know that there exist subsets $\mathbf{C}_\mathbf{A},\mathbf{C}_\mathbf{B}$ with $|\mathbf{C}_\mathbf{A}|+|\mathbf{C}_\mathbf{B}| \leq q$ such that $(\mathbf{C_{A}}, \mathbf{C_{B}})$ t-separates $\{X_k, \mathbf{A}\}$ from $ \{\mathbf{B}\}$.
According to the generation of data (Equation 2), all treks between $\{X_k, \mathbf{A}\}$ and $ \{\mathbf{B}\}$ must go through unmeasured confounders $\mathbf{U}$. Hence, $\mathbf{C}_\mathbf{A} \cup \mathbf{C}_\mathbf{B} = \mathbf{U}$. This will imply that $\mathbf{B} \CI {X_k} | \mathbf{U}$. Because $\mathbf{A} \CI \mathbf{B} | (\mathbf{U},X_k)$, we have  $\mathbf{B} \CI (X_k, \mathbf{A}) | \mathbf{U}$, i.e., condition 2 of proximal criteria holds. 

Based on the above analysis, $\mathbf{A}$ and $\mathbf{B}$ are valid NCE and NCO with respect to $X_k \to Y$, respectively. Furthermore, due to Assumption 3, we know that such sets $\mathbf{A}$ and $\mathbf{B}$ must exist in the system.
\end{proof}

\subsection{Proof of Theorem \ref{Theo-NCE-Rank}}

\begin{proof}
Assuming Assumption 2 holds, then according to Lemma \ref{Lemma-rule1}, for a given causal relationship $X_k \to Y$ in the system, the underlying NCE and NCO relative to the causal relationship $X_k \to Y$ can be identified using $\mathcal{R}1$.

Similarly, assuming Assumption 3 holds, then according to Lemma \ref{Lemma-rule2}, for a given causal relationship $X_k \to Y$ in the system, the underlying NCE and NCO relative to the causal relationship $X_k \to Y$ can be identified using $\mathcal{R}2$. 
\end{proof}

\subsection{Proof of Theorem \ref{Theorem-Corr-Rank}}

\begin{proof}
The correctness of Proxy-Rank originates from the following observations:
\begin{itemize}
    \item Firstly, for a given causal relationship $X_k \to Y$ in the system, by Lemma \ref{Lemma-rule1} and Proposition \ref{Pro-Multi-Proxy-Estimator}, valid set of NCE and NCO in $\mathbf{X} \setminus X_k$ have been exactly discovered, and the unbiased causal effect $\mathcal{C}_k$ is obtained if Assumption 2 satisfies (Lines 3$\sim$13 of Algorithm \ref{Alg-Proxy-Rank}).
    \item Secondly, for a given causal relationship $X_k \to Y$ in the system, by Lemma \ref{Lemma-rule2} and Proposition \ref{Pro-Multi-Proxy-Estimator}, valid set of NCE and NCO in $\mathbf{X} \setminus X_k$ have been exactly discovered, and the unbiased causal effect $\mathcal{C}_k$ is obtained if Assumption 2 violates but Assumption 3 satisfies (Lines 14$\sim$21 Algorithm \ref{Alg-Proxy-Rank}).
    \item Lastly, value (NA) is obtained, which indicates the lack of knowledge to obtain the unbiased causal effect (Lines 23$\sim$27 Algorithm \ref{Alg-Proxy-Rank}). 
\end{itemize}
\end{proof}

\subsection{Proof of Lemma \ref{Lemma-rule3}}
\begin{proof}
Firstly, according to Equation 2 (where all entries of matrix $\mathbf{C}$ are non-zero) and the faithfulness assumption, we can directly infer that $\boldsymbol{\Sigma}_{\{X_k, \mathbf{A}\}, \{Y, \mathbf{B}\}}$ and $\boldsymbol{\Sigma}_{\{X_k, \mathbf{A}\}, \{X_k, \mathbf{B}\}}$ are both full rank. Therefore, these necessary conditions always hold for any sets $\mathbf{Z}$ and $\mathbf{W}$ (otherwise, it would be required to test these conditions).

Secondly, due to condition (1), namely $(\{X_k, \mathbf{A}\}, \{X_k, Y, \mathbf{B}\})$ follows the GIN constraint, the faithfulness assumption, Assumption 4 (Non-Gaussianity), and based on the "Graphical Representation of GIN Constraints" Theorem, we can assert that there exist $\mathcal{S}$ with $0\leq |\mathcal{S}| \leq \textrm{min}(| \{X_k, Y, \mathbf{B}\}|-1, |\{X_k, \mathbf{A}\}|)=q+1$ such that 1) the order pair $(\emptyset, \mathcal{S})$ t-separates $\mathcal{Z}$ and $\mathcal{Y}$, and that 2) the covariance matrix of $\mathcal{S}$ and $\mathcal{Z}$ has rank $s$, and so does that of $\mathcal{S}$ and $\mathcal{Y}$. 
According to the data generation process, the treks between $\{\mathbf{A}\}$ and $\{Y, \mathbf{B}\}$ must go through unmeasured confounder $\mathbf{U}$ (except for $X_k$). Therefore, $\mathcal{S} = \{X_k, \mathbf{U}\}$. Since $|\{X_k, \mathbf{U}\}| = q+1$, we can conclude that all treks between $\{X_k, Q, \mathbf{A}\}$ and $ \{X_k,Y,\mathbf{B}\}$ must go through a node in $ \{X_k, \mathbf{U}\}$. This will imply that $\mathbf{A} \CI {Y} | (\mathbf{U},X_k)$, i.e., condition 1 of proximal criteria holds.

Furthermore, because of condition 2), i.e.,  $(\mathbf{B}, \{X_k, \mathbf{A}\})$ follows the GIN constraint, because of Assumption 4, and because according to the "Graphical Representation of Rank Constraints" Theorem, we know that there exist $\mathcal{S}$ with $0\leq |\mathcal{S}| \leq \textrm{min}(|\{X_k, \mathbf{A}\}|-1, |\mathbf{B}|)=q$ such that the order pair $(\emptyset, \mathcal{S})$ t-separates $\mathbf{B}$ and $\{X_k, \mathbf{A}\}$.
According to the generation of data (Equation 2), all treks between $ \mathbf{B}$ and $ \{X_k,\mathbf{A}\}$ must go through unmeasured confounders $\mathbf{U}$. Hence, $\mathcal{S} = \mathbf{U}$. This will imply that $|\mathbf{U}| = q$. Thus, we have $\mathbf{B} \CI (X_k, \mathbf{A}) | \mathbf{U}$, i.e., condition 2 of proximal criteria holds. 

Based on the above analysis, $\mathbf{A}$ and $\mathbf{B}$ are valid NCE and NCO with respect to $X_k \to Y$, respectively. Furthermore, due to Assumptions 1 and 4, we know that such sets $\mathbf{A}$ and $\mathbf{B}$ must exist in the system.
\end{proof}

\subsection{Proof of Theorem \ref{Theo-NCE-GIN}}
\begin{proof}

Assuming Assumptions 1 and 4 hold, then according to Lemma \ref{Lemma-rule3}, for a given causal relationship $X_k \to Y$ in the system, the underlying NCE and NCO relative to the causal relationship $X_k \to Y$ can be identified using $\mathcal{R}3$.
\end{proof}

\subsection{Proof of Theorem \ref{Theorem-Corre-GIN}}

\begin{proof}

The correctness of Proxy-Rank originates from the following observations:
\begin{itemize}
    \item Firstly, for a given causal relationship $X_k \to Y$, by Lemma \ref{Lemma-rule3} and Proposition \ref{Pro-Multi-Proxy-Estimator}, valid set of NCE and NCO in $\mathbf{X} \setminus X_k$ have been exactly discovered, and the unbiased causal effect $\mathcal{C}_k$ is obtained if Assumption 1 satisfies (Lines 2$\sim$10 of Algorithm \ref{Alg-Proxy-GIN}).
    \item Then, value (NA) is obtained, which the lack of valid NCE and NCO for this causal relationship $X_k \to Y$ to obtain the unbiased causal effect (Lines 15$\sim$19 Algorithm \ref{Alg-Proxy-GIN}). 
\end{itemize}

\end{proof}

\end{document}